\documentclass{article}

\pdfoutput=1

\usepackage[margin=1.6in]{geometry}
\usepackage[utf8]{inputenc}
\usepackage[T1]{fontenc}
\usepackage{hyperref}
\usepackage{url}
\usepackage{booktabs}
\usepackage{amsfonts}
\usepackage{subcaption}
\usepackage{nicefrac}
\usepackage{microtype}
\usepackage{graphicx}
\usepackage{subcaption}
\usepackage[ruled,vlined]{algorithm2e}
\usepackage{amsmath,amssymb,bm}
\usepackage{mathtools}
\usepackage{amsthm}
\usepackage{xcolor}
\usepackage{wrapfig}
\usepackage{caption}
\usepackage{enumitem}
\usepackage{tabularx}
\usepackage[most]{tcolorbox}
\tcbuselibrary{listings}
\usepackage{natbib}
\usepackage{colortbl}
\usepackage{array}
\usepackage{multirow}
\usepackage{palatino}
\usepackage[normalem]{ulem}
\usepackage{calc}
\usepackage{xcolor}
\usepackage{caption}
\usepackage[percent]{overpic}
\usepackage{diagbox}
\usepackage{algorithmic}

\usepackage{xspace}
\usepackage{float}
\usepackage[capitalize,noabbrev]{cleveref}

\usepackage{listings}
\usepackage{adjustbox}
\usepackage{makecell}

\newcommand{\na}{\textbf{--}}

\definecolor{kw}{RGB}{120,0,150}
\definecolor{func}{RGB}{0,50,140}
\definecolor{var}{RGB}{0,0,80}
\definecolor{num}{RGB}{0,90,50}

\definecolor{yulu_ocean_blue}{rgb}{0.0, 0.0, 0.5}
\newcommand{\yulu}[1]{{\color{yulu_ocean_blue}Yulu: #1}}
\definecolor{phil_color}{rgb}{0.0, 0.5, 0.0}

\lstdefinestyle{python}{
    language=Python,
    basicstyle=\ttfamily\small,
    keywordstyle=\color{kw},
    commentstyle=\color{gray},
    stringstyle=\color{black},
    showstringspaces=false,
    breaklines=true,
    columns=fullflexible,
    keepspaces=true,
    morekeywords={for,in,def,return},
    emph={perturb,restore,evaluate,topk,majority_vote,randint,choice,range,append,generate,ensemble_predict},
    emphstyle=\color{func},
    emph={[2]scores,seed,sigma,theta,model,top_K,answers,sigmas,D_train,N,K,x,i},
    emphstyle={[2]\color{var}},
    literate={0}{{{\color{num}0}}}{1}
             {1}{{{\color{num}1}}}{1}
             {2}{{{\color{num}2}}}{1}
             {31}{{{\color{num}31}}}{2},
}

\newcommand{\wip}[1]{\textcolor{blue}{\textbf{[WIP]} #1}}
\newcommand{\todo}[1]{\textcolor{red}{\textbf{[TODO]} #1}}

\newcommand{\methodname}{RandOpt\xspace}

\theoremstyle{plain}
\newtheorem{theorem}{Theorem}[section]
\newtheorem{proposition}[theorem]{Proposition}

\theoremstyle{definition}
\newtheorem{definition}[theorem]{Definition}

\theoremstyle{remark}

\usepackage{ssArxiv}

\hypersetup{
    colorlinks,
    linkcolor={darkblue},
    citecolor={darkblue},
    urlcolor={codepurple}
}

\title{
Neural Thickets:\\
{\mdseries\color{black!60} Diverse Task Experts Are Dense Around Pretrained Weights}
}

\author{
  Yulu Gan,
  Phillip Isola \\
  MIT CSAIL\\
  \texttt{\{yulu\_gan, phillipi\}@mit.edu} \\
}

\begin{document}

\maketitle

\vspace{-2.9em}
\begin{figure}[!htb]
    \centering
    \includegraphics[width=1.0\linewidth]{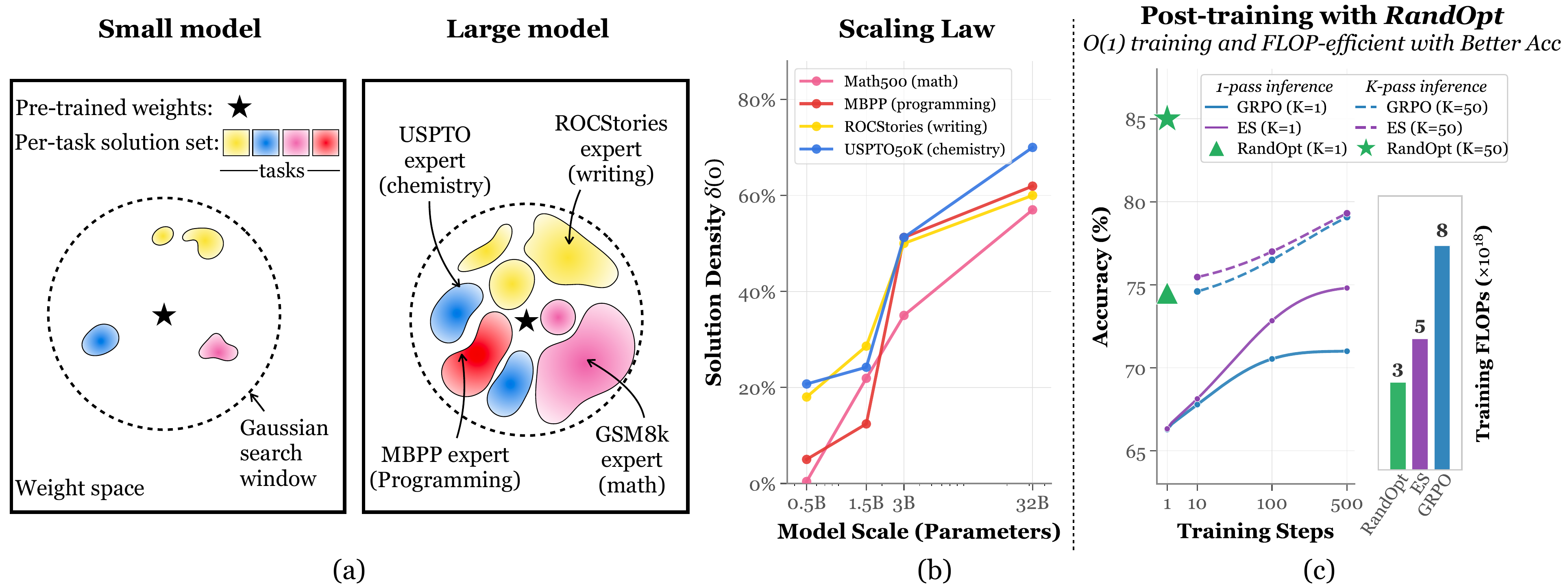}
    \vspace{-1.8em}
    \caption{(a) Schematic of the main effects we observe (see Fig~\ref{fig:density} for a version with real data). Left: Small models live in a \textit{needle in a haystack} regime, where good solutions to downstream tasks occupy a tiny fraction of the surrounding weights. In this regime, it is important to have a smart search algorithm, such as gradient descent or other forms of iterative optimization. Right: Large models are surrounded by a veritable \textit{thicket} of task-specific solutions. In this regime, random sampling is sufficient to quickly land on promising adaptations, which can then be ensembled to yield strong behavior, an approach we call \textit{RandOpt}. (b) Solution density -- i.e. density of task-improving weights in a Gaussian neighborhood of the pretrained weights -- scales with model size. 
    (c) \methodname is $\mathcal{O}(1)$ in training steps, FLOP-efficient, and competitive in converged accuracy with GRPO and ES. Results are shown on the Countdown task with \texttt{Olmo-3-7B-Instruct}; \methodname uses 5000 random weight guesses and ensembles the top $K$; K-pass baselines use Test-time Majority Vote (TT-MV). More results are shown in Fig.~\ref{fig:acc_v2} and Table~\ref{appendix::tab-acc}.}
    \label{fig:teaser}
\end{figure}

\begin{center}
\begin{minipage}{1\textwidth}
\begin{abstract}
  Pretraining produces a learned parameter vector that is typically treated as a starting point for further iterative adaptation. In this work, we instead view the outcome of pretraining as a distribution over parameter vectors, whose support already contains task-specific experts. We show that in small models such expert solutions occupy a negligible fraction of the volume of this distribution, making their discovery reliant on structured optimization methods such as gradient descent. In contrast, in large, well-pretrained models the density of task-experts increases dramatically, so that diverse, task-improving specialists populate a substantial fraction of the neighborhood around the pretrained weights. Motivated by this perspective, we explore a simple, fully parallel post-training method that samples $N$ parameter perturbations at random, selects the top $K$, and ensembles predictions via majority vote. Despite its simplicity, this approach is competitive with standard post-training methods such as PPO, GRPO, and ES for contemporary large-scale models.
  \\
  \\
  Project page: \href{https://thickets.mit.edu}{https://thickets.mit.edu} \quad 
  \quad
  Code: \href{https://github.com/sunrainyg/RandOpt}{https://github.com/sunrainyg/RandOpt}
\end{abstract}
\end{minipage}
\end{center}

\section{Introduction}

\begin{center}
\begin{minipage}{0.9\linewidth}
\begin{flushright}
\small
\emph{``[Random guessing] cannot be viewed as a reasonable learning algorithm...''}\\
--- Schmidhuber, Hochreiter, Bengio, 2001
\end{flushright}
\end{minipage}
\end{center}

One of the first algorithms we learn in elementary school is ``guess and check.'' In its simplest form the procedure is almost trivial: given an equation with unknowns, guess values of the unknowns and check whether they satisfy the equation. Guess again, completely at random, until it works. The same approach can be applied to machine learning, but it has long been assumed to be hopeless, as the quote above implies~\citep{schmidhuber2001evaluating}. Consider, for example, the chance of randomly guessing, from scratch, a billion-dimensional parameter vector that behaves like ChatGPT. The probability must be astronomically small.

This paper finds that after pretraining, the story changes. With a reasonable number of random guesses, one can sample parameter perturbations that substantially improve pretrained large language models (LLMs) across a broad set of tasks. 
How is this possible? For random guessing to work, good solutions must be dense under the distribution being sampled from. Schmidhuber, Hochreiter, and Bengio made precisely this point in their 2001 paper: random guessing \textit{did} solve some benchmark problems of that era. However, they interpreted this as a failure of the benchmarks to assess difficult skills. We instead show that the same phenomenon occurs in a contemporary setting of practical interest: post-training LLMs on tasks such as reasoning, programming, and more. 

How does the loss landscape change after pretraining, in order that random guessing begins to work? We study two effects. First we measure the \textit{density} of task-improving solutions in a Gaussian neighborhood around the pretrained weights. We find that this density increases with pretraining scale. Untrained models have a tiny density of solutions near their initial weights; they live in a \textit{needle in a haystack} regime, where finding the solution requires structured multi-step search, such as gradient descent. Conversely, large pretrained models transition into a regime of high density, replete with task-experts near the pretrained weights. We term this the \textit{thicket regime}.

Next, we study solution \textit{diversity} in the Gaussian neighborhood. It turns out that the different sampled parameter vectors are not uniform improvements. They are specialists rather than generalists, where the perturbations that most improve performance on one task hurt performance on other tasks. We visualize these two effects in Figure \ref{fig:teaser}.

Motivated by these findings, we explore a fully parallel post-training algorithm that exploits the density and diversity of pretrained neighborhoods. The method is a form of random guessing (which works due to the neighborhood density) followed by ensembling (which exploits the neighborhood diversity). Given an initial weight vector, $N$ perturbations of that vector are created. Each perturbation is evaluated on the post-training data. The top $K$ perturbations are selected and their predictions are ensembled via majority voting. We call this algorithm \methodname.

\methodname achieves accuracy competitive with PPO, GRPO, and ES under the same post-training flops. On the wall-clock it trains in $\mathcal{O}(1)$ time compared to $\mathcal{O}(T)$ for the baselines that require $T$ sequential update steps. Its inference-time cost is $K$ times higher due to ensembling. For some tasks, $K=1$ already achieves decent results. For other tasks, only larger $K$ is competitive with the baselines, but in Section \ref{app:distill}, we show a proof of concept that this cost can be reduced via distillation. 
However, our goal is not to promote \methodname as superior to alternative methods. Rather, we use it as a probe: its success suggests that \textit{post-training becomes easy once you have a strong pretrained representation} -- i.e., once you enter the thicket regime. In that regime, it doesn't matter much which method you use -- gradient-based search, evolutionary algorithms, and brute-force parallel selection all will do.

\paragraph{Main Findings}

\begin{enumerate}[itemsep=2pt, topsep=2pt]
    \item In large models, the neighborhood around pretrained weights is dense with task-improving solutions (Fig~\ref{fig:density}).
    \item The density exhibits a scaling law, with higher density for larger, more performant models (Fig~\ref{fig:scaling_den}).
    \item The local neighborhood is also diverse: individual perturbations tend to improve performance on particular tasks while degrading others (Fig~\ref{fig:spectra}). Diversity also scales with model size (Fig~\ref{fig:scaling_den}). 
    \item For current models, the density is high enough that random guessing is effective for post-training (Fig~\ref{fig:acc_v2}).
    \item Ensembling over multiple guessed solutions further improves performance, often substantially (Fig~\ref{fig:acc}).

\end{enumerate}

\section{Structure of the Multi-task Loss Landscape Around Pretrained Weights}\label{sec:density_diversity}

In this section, we measure the density and diversity of task-improving solutions in the vicinity of pretrained weights, across models of different scale and a variety of tasks.

\subsection{Solution Density: What Proportion of Local Perturbations Improve Task Performance?}

Figure~\ref{fig:density} visualizes the performance of Gaussian-perturbed models across scales from 0.5B to 32B parameters, and on three reasoning tasks. A topographic shift is evident: small models reside on local maxima of the accuracy landscape, whereas larger models inhabit an accuracy ``valley,'' with many peaks of higher accuracy nearby (red regions). This indicates that scaling may fundamentally reshape the loss landscape. Given this change, what is the probability of finding good solutions? We define the solution density as:

\begin{definition}[Solution Density]
Let $s: \mathbb{R}^d \to \mathbb{R}$ be the performance evaluation metric for model parameters $\boldsymbol{\theta} \in \mathbb{R}^d$. We define the \textit{Solution Density} $\delta(m)$ as the probability that a random perturbation $\boldsymbol{\epsilon}$ improves the base model's score by a margin $m$:
\begin{equation}
    \delta(m) = \mathbb{P}_{\boldsymbol{\epsilon} \sim \mathcal{N}(\mathbf{0}, \sigma^2 \mathbf{I})} \left[ s(\boldsymbol{\theta} + \boldsymbol{\epsilon}) \geq s(\boldsymbol{\theta}) + m \right]
\end{equation}
where $\sigma$ scales the local Gaussian neighborhood (in the experiments in this section, we use $\sigma = 0.005$). Intuitively, $\delta(m)$ measures the ``hit rate'' of random guessing: it quantifies the proportion of the explored parameter space that yields a performance gain of at least $m$.
\end{definition}

\begin{figure*}[t]
    \centering
    \includegraphics[width=\linewidth]{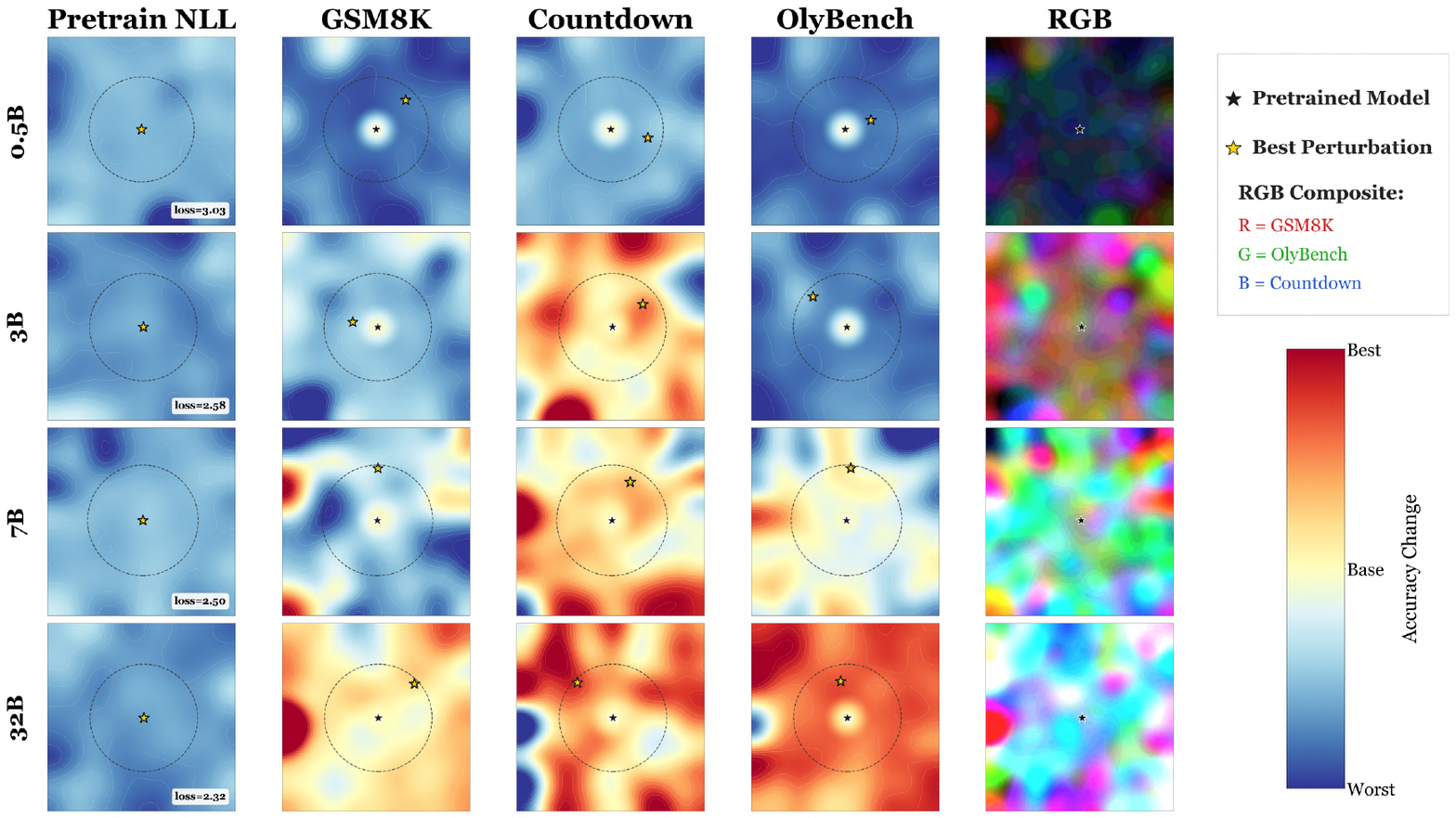}
    \caption{Accuracy landscapes in weight space across model scales and reasoning tasks. We perturb the pretrained Qwen2.5 models (from 0.5B to 32B) with 1000 random weight perturbations and project the perturbed models into 2D using random projection. Colors show relative accuracy change $(\mathrm{acc} - \mathrm{base}) / \mathrm{base} \times 100$ (\textcolor{blue}{blue}: degraded, \textcolor{gray}{white}: equivalent, \textcolor{red}{red}: improved.) Dashed circles indicate the mean perturbation distance and stars mark the best-performing perturbations. Larger models have warmer landscapes, with more high-performing neighborhoods. The last column shows an RGB visualization where GSM8K, Olympiad and Countdown accuracies are mapped to R,G,B channels; richer colors indicate more task experts.}
    \label{fig:density}
    \vspace{-0.2cm}
\end{figure*}

In Figure~\ref{fig:scaling_den}, we measure $\delta(m)$, for several values of $m$, as a function of model scale. Solution density increases monotonically with model size, and this trend holds over multiple values of $m$ (e.g., the density of perturbations that increase performance by $+5\%$ accuracy increases monotonically as a function of model scale). These results indicate that, for large-scale models, the pretrained weights reside within a dense basin populated by abundant high-quality solutions, whose density scales with model size. 

\begin{figure*}[t]
    \centering
 \includegraphics[width=\linewidth]{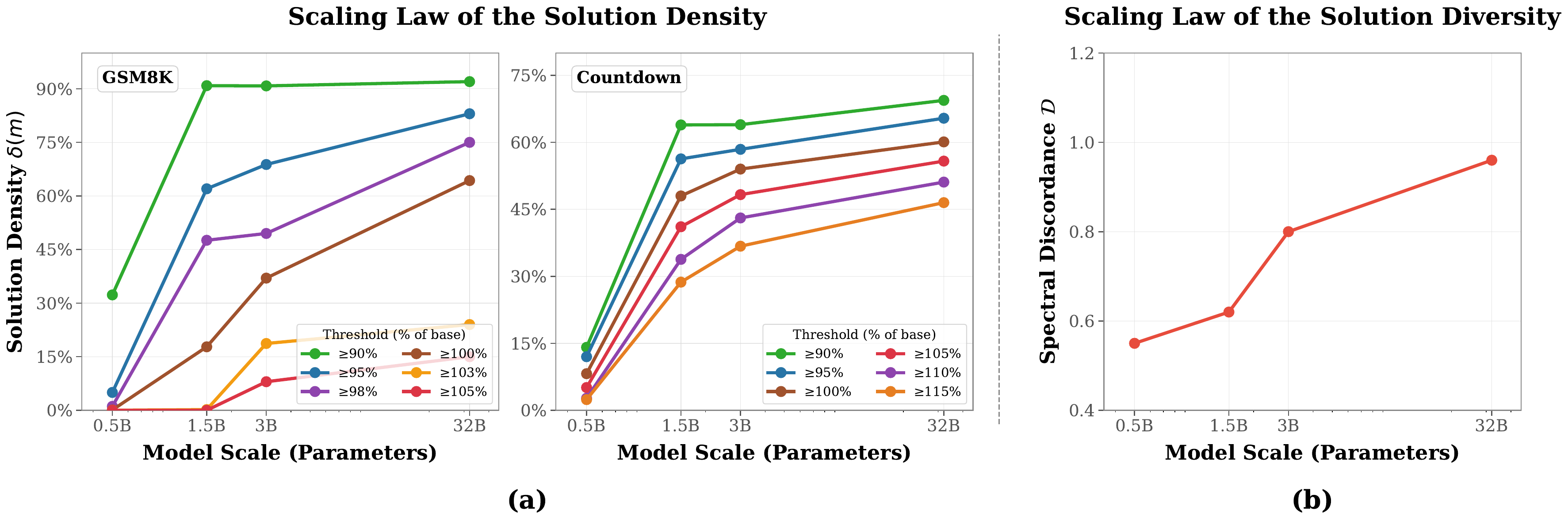}
    \caption{Scaling laws of solution density and diversity (using \textit{Qwen-2.5} instruction tuned models). (a) Solution density increases with model scale, showing that larger models have a higher fraction of good solutions. (b) Spectral discordance across model scales, measuring solution diversity. Together, these results demonstrate that larger models have both denser and more diverse solution landscapes in the neighborhood around their pretrained weights. 
    }
    \label{fig:scaling_den}
    \vspace{-0.2cm}
\end{figure*}

\subsection{Solution Diversity: Are the Sampled Perturbations Specialists or Generalists?}

Do all perturbations help (or hurt) in the same way? We compare two hypotheses:
\begin{enumerate}[topsep=0pt,itemsep=1pt,parsep=0pt]
    \item Hypothesis 1 (generalists): The pretrained weights are in fact a poor model for the suite of downstream tasks we are testing; there is an all-around better model in the vicinity of these weights.
    \item Hypothesis 2 (specialists): The pretrained weights are a ``jack of all trades, master of none''; perturbations can improve a given task because they are specialists for that task, improving its performance while hurting performance on other tasks.
\end{enumerate}

We test these hypotheses using the following measure of specialization:

\begin{definition}[Spectral Discordance]
Let $\mathbf{P} \in [0,1]^{N \times M}$ denote the percentile-rank matrix over $N$ seeds and $M$ tasks, and let $\mathbf{C} \in \mathbb{R}^{M \times M}$ be the Pearson correlation matrix of its columns. We define the \textit{Spectral Discordance} as:
\begin{equation}
    \mathcal{D} = 1 - \frac{1}{M(M-1)} \sum_{j \neq k} \mathbf{C}_{jk}
\end{equation}
where $\mathbf{C}_{jk}$ denotes the correlation between tasks $j$ and $k$. A value of $\mathcal{D} \to 1$ implies orthogonal task rankings (specialists), while $\mathcal{D} \to 0$ implies parallel rankings (generalists).
\end{definition}

\noindent\textit{Remark.} Theoretically, $\mathcal{D}$ is bounded in the interval $[0, \frac{M}{M-1}]$. The upper bound corresponds to the limit of anti-correlation. We provide the derivation in Appendix~\ref{app:discordance_bounds}.

We evaluate 500 random weight perturbations across seven tasks in four domains: mathematical reasoning (Countdown, GSM8K, MATH-500, OlympiadBench), code generation (MBPP), creative writing (ROCStories), and chemistry (USPTO). For each perturbation, we compute its percentile rank relative to the population. We then quantify diversity using the \textit{Spectral Discordance} $\mathcal{D}$ over all the perturbations. Figure~\ref{fig:scaling_den}(b) shows the result over models of different size in the Qwen2.5 family: $\mathcal{D}$ increases monotonically with model size, indicating that the solutions surrounding larger models become increasingly disjoint in their capabilities. This supports Hypothesis 2: the sampled solutions are specialists.

\begin{figure*}[t]
    \centering
    \includegraphics[width=\linewidth]{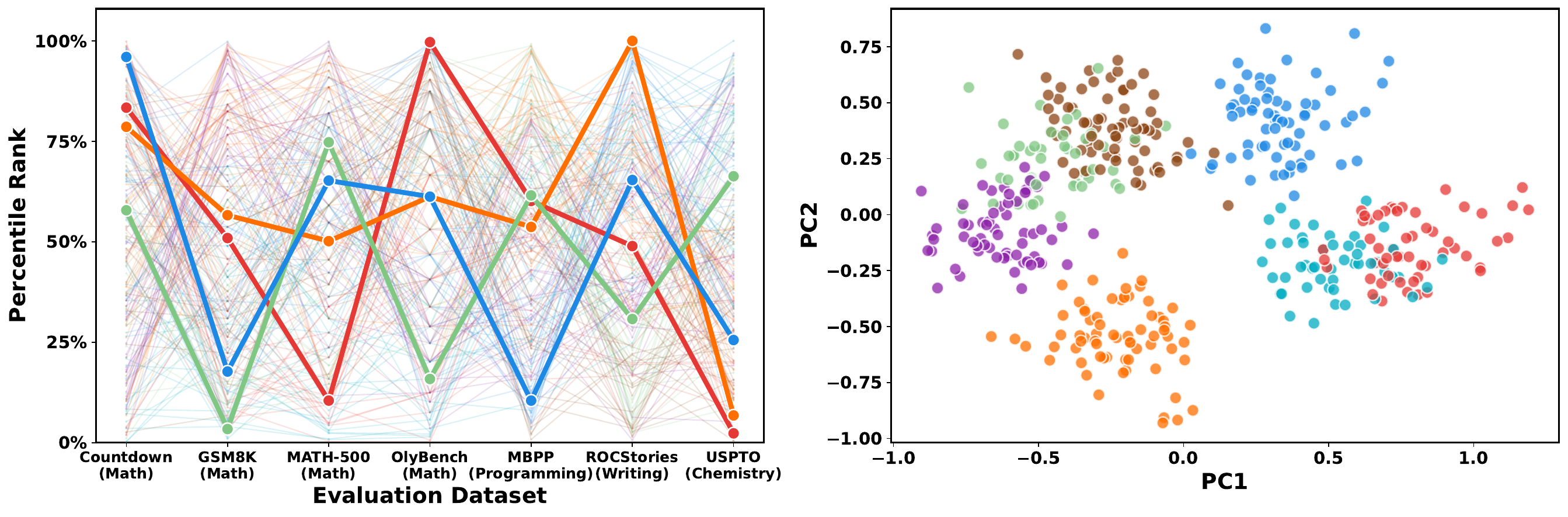}
    \caption{Performance spectra and clustering of random seeds. Sampled vectors possess diverse areas of expertise, with individual seeds specializing in specific tasks. (Left) Performance of 100 random seeds across seven evaluation datasets. Each line represents a specific seed, with four lines highlighted as examples. (Right) PCA visualization of these performance vectors, where seeds with similar behavior cluster together into different groups. 
    }
    \label{fig:spectra}
\end{figure*}

To visually unpack the structure of this discordance, Figures~\ref{fig:spectra} and \ref{fig:heatmap} present further analyses:

(1) {Performance Spectra (Figure~\ref{fig:spectra} Left):} We plot the percentile ranks of individual perturbations across tasks. The resulting lines are ``spiky'' rather than flat. This visually demonstrates specialization and diversity among the sampled perturbations.

(2) {PCA Projection (Figure~\ref{fig:spectra} Right):} We project the 7-dimensional performance vectors into 2D and apply K-means clustering. The emergence of distinct clusters confirms that there are multiple kinds of experts within the distribution. Perturbations within a cluster share specific strengths (e.g., excelling at math but failing at chemistry), while perturbations in different clusters offer complementary capabilities.

(3) The level of diversity is also visible in Figure \ref{fig:heatmap}. Under the column labeled ``RGB'', we plot three different task-accuracy landscapes in the R, G, and B channels respectively. The fact that there is a mottled, multi-colored appearance indicates that the landscapes are largely uncorrelated (we would expect to see shades of gray if all tasks behaved the same).

These findings reveal that the local weight space is populated by diverse specialists. This raises a key question: can we exploit this landscape by simply sampling these complementary experts and aggregating their strengths? We will return to this question in Section~\ref{sec::method}.

\section{A Minimal Setting In Which Thickets Emerge: Autoregressive Modeling of 1D Signals}\label{sec:toy_expt}

What leads to thickets emerging? To elucidate the cause, we replicate the phenomenon in a minimal setting.

We define a training distribution that is a mixture of several random function types (sinusoidal, linear, harmonic, sigmoidal, and sawtooth and square waves), each of which maps $\mathbb{R} \rightarrow \mathbb{R}$ and takes on random settings for the function parameters (e.g., phase and amplitude for sinusoids, slope and intercept for lines). We pretrain a next-value predictor on these functions, using a multilayer perceptron  $f_{\theta}: \mathbf{y}_{\textsc{ctx}} \rightarrow y_{\textsc{next}}$, where $\mathbf{y}_\textsc{ctx}$ is a preceding context window and $y_{\textsc{next}}$ is the next value of the target function. This model can generate predictions by autoregressive rollout given an initial observed context. We probe this model with a simple linear test signal. Is this signal well-modeled by sampled perturbations near the pretrained weights?

We sample $N=1000$ random Gaussian parameter perturbations from $\boldsymbol{\epsilon} \sim \mathcal{N}(0,\sigma=0.002)$. In Figure \ref{fig:toy}, we show autoregressive rollouts of the base and perturbed models given the test context. The blue line is the test function, with solid blue as the observed context and dashed blue as the ground truth continuation. The gray lines are predictions given by different random perturbations. The base, unperturbed model's predictions are shown in black. The top 5 perturbations that best fit the blue line are shown in red.

We compare three pretraining settings: 1) no pretraining (Xavier initialization, ~\citet{glorot2010understanding}), 2) pretraining on all signal types, 3) pretraining on linear signals. These three settings lead to three different regimes: 1) the needle in the haystack regime, where small perturbations of the weights have negligible effect on the functional shape; good solutions are far away from this initialization, 2) the thicket regime, where different perturbations search over many possible continuations following the kinds of functions seen during pretraining, and 3) a ``plateau'' regime, where the pretrained weights are already a minimizer of the test task, and random guessing can provide no further benefit.

This experiment demonstrates that the phenomena we are observing are not exclusive to LLMs, and show up in simpler models as well. What appears to be critical is that the base model is pretrained on a variety of signal shapes. Too little pretraining results in no nearby solutions and pretraining on just one signal type results in nearby weights showing very little functional diversity. See Appendix \ref{appendix:1D_signals} for more examples, including generalization to held out test signals, the effect of ensembling, and settings of weight initialization and pretraining type.

\begin{figure}[t]
    \centering

    \begin{subfigure}[t]{0.32\linewidth}
        \centering
        \caption{Needle in Haystack regime}
        \begin{overpic}[width=\linewidth]{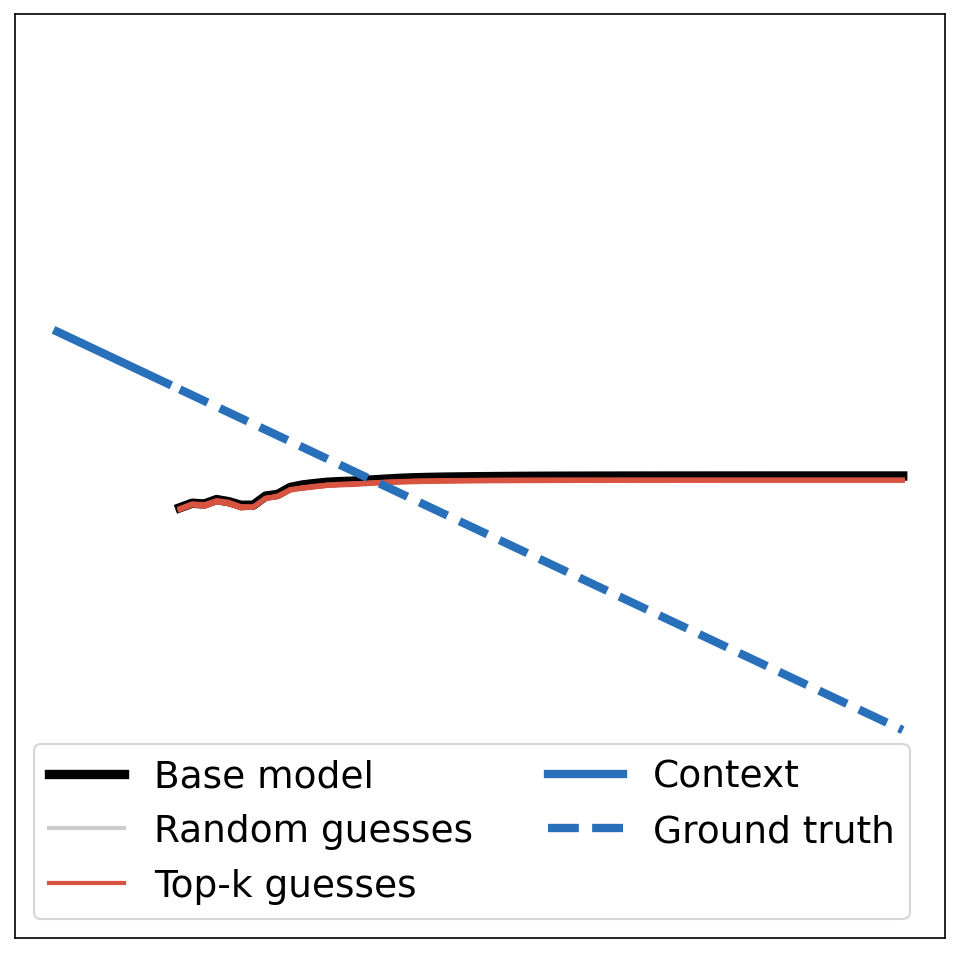}
            \put(3,91){\makebox(0,0)[l]{\colorbox{black!5}{\small\bfseries Pretraining: none}}}
        \end{overpic}
    \end{subfigure}
    \hfill
    \begin{subfigure}[t]{0.32\linewidth}
        \centering
        \caption{Thicket regime}
        \begin{overpic}[width=\linewidth]{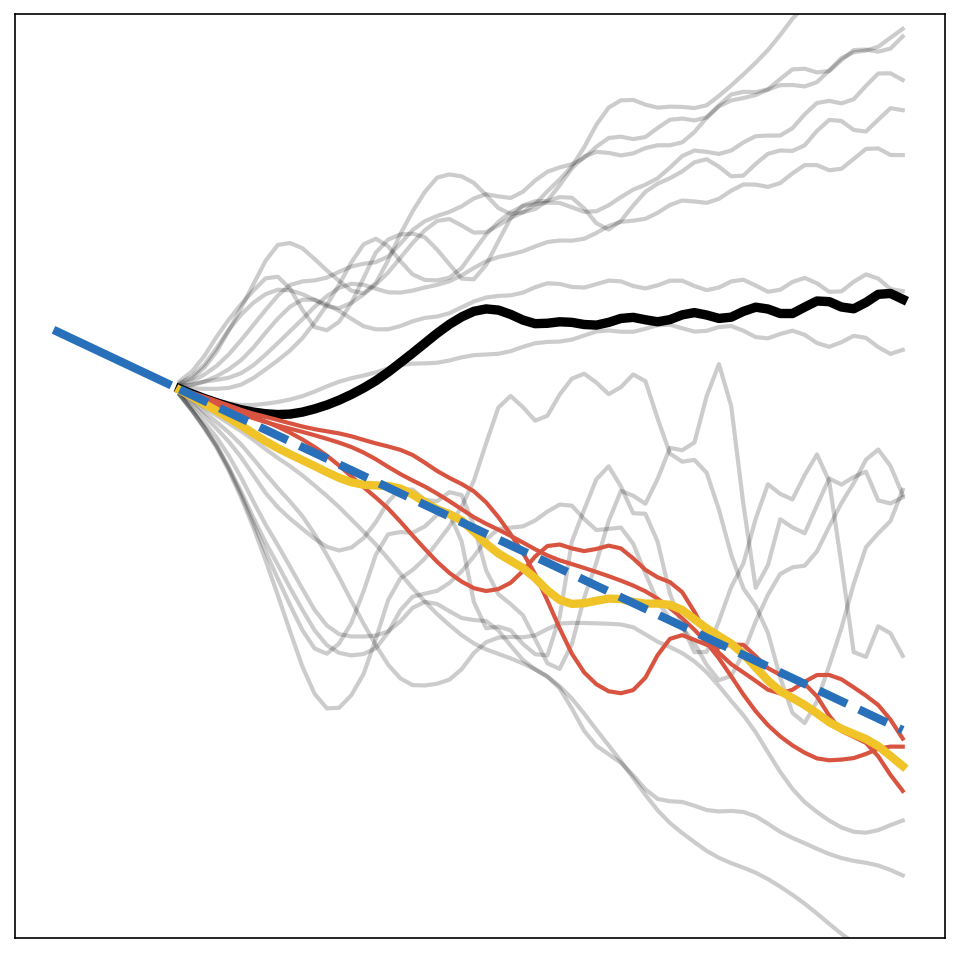}
            \put(3,91){\makebox(0,0)[l]{\colorbox{black!5}{\small\bfseries Pretraining: mixed signals}}}
        \end{overpic}
    \end{subfigure}
    \hfill
    \begin{subfigure}[t]{0.32\linewidth}
        \centering
        \caption{Plateau regime}
        \begin{overpic}[width=\linewidth]{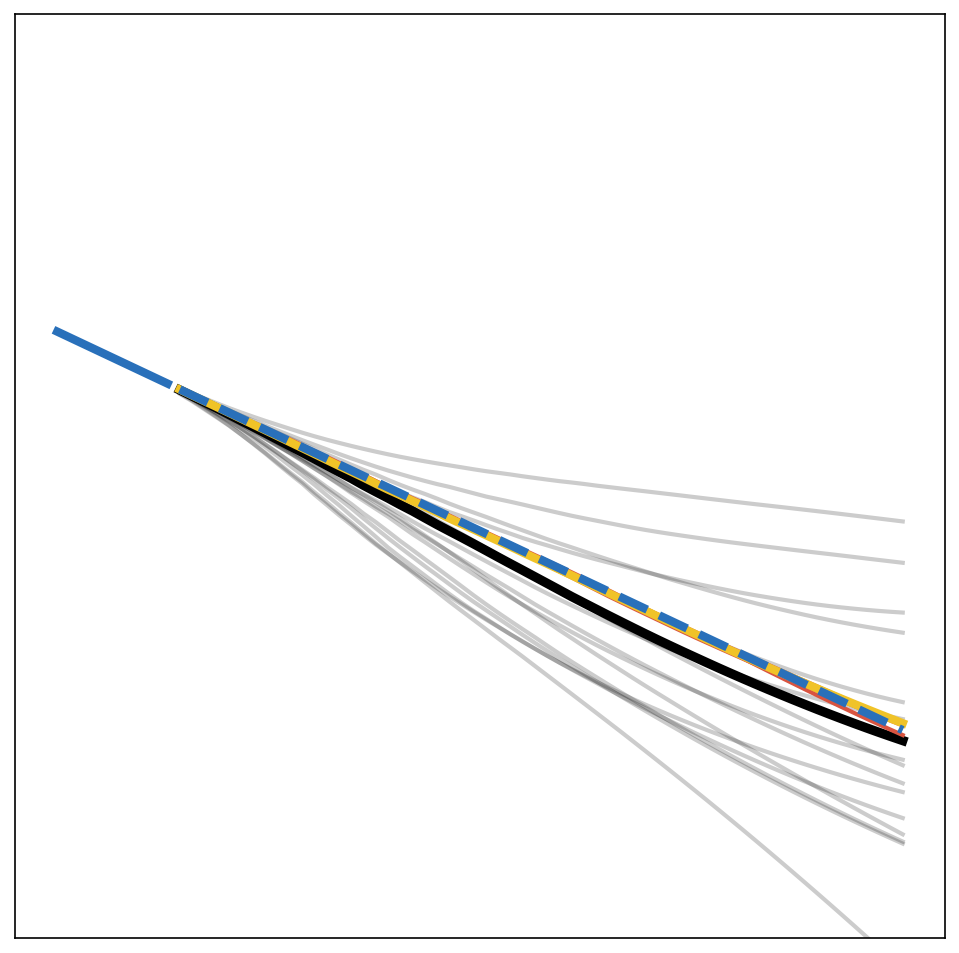}
            \put(3,91){\makebox(0,0)[l]{\colorbox{black!5}{\small\bfseries Pretraining: linear signals}}}
        \end{overpic}
    \end{subfigure}

    \caption{Pretraining a model of 1D signals, then probing the local neighborhood around the pretrained weights by random guessing $N=1000$ Gaussian perturbations. The plot shows the autoregressive predictions of a particular linear function (dashed blue line), given an observed context (solid blue line). Gray lines: random $f_{\theta}$'s; Red lines: top-K $f_{\theta}$'s. The figure shows three regimes: (a) No pretraining leads to needle-in-the-haystack search, (b) pretraining on several signal types leads to a thicket, (c) pretraining on just linear functions achieves nearly perfect predictions at pretraining time, hence post-training is at ceiling.}
    \label{fig:toy}
\end{figure}

\section{A Practical Algorithm: Random Guessing \& Ensembling (\methodname)}\label{sec::method}

The fact that valid solutions are easy to find but possess non-overlapping strengths, suggests that rather than searching for a single global minimum, we might instead sample broadly and aggregate the predictions. We therefore explore an algorithm, which we call \methodname, that \textit{randomly guesses} a set of $N$ weight perturbations, and then \textit{ensembles} the top $K$. 

Let $f_{\boldsymbol{\theta}}: \mathcal{X} \rightarrow \mathcal{Y}$ denote a base model parameterized by weights $\boldsymbol{\theta} \in \mathbb{R}^d$. 
We introduce perturbations via a noise vector $\boldsymbol{\epsilon}$ sampled from a standard Gaussian distribution, $\boldsymbol{\epsilon} \sim \mathcal{N}(\mathbf{0}, \mathbf{I}_d)$. The magnitude of each perturbation is controlled by a noise scale $\sigma \in \mathbb{R}^+$.
To explore neighborhoods at different scales, we use a \textit{set of scaling factors} $\Sigma = \{\sigma_1, \dots, \sigma_M\}$. 
A perturbed model instance $\boldsymbol{\theta}'$ is determined by a random seed $s$ and a selected scale $\sigma \in \Sigma$:
\begin{equation}
    \boldsymbol{\theta}' = \boldsymbol{\theta} + \sigma \cdot \boldsymbol{\epsilon}(s)
\end{equation}
where $\boldsymbol{\epsilon}(s)$ denotes the noise vector generated by seed $s$.

\begin{figure}[t]
\centering
\begin{minipage}{0.9\linewidth}
\begin{algorithm}[H]
\caption{\methodname (PyTorch-style). \textbf{N}: population size, \textbf{K}: ensemble size. \textbf{sigmas}: noise scales, \textbf{theta}: model params
}
\label{algo:2}
\begin{lstlisting}[style=python, frame=none]
# Training: Select top-K seeds based on D_train performance
seeds = [sample_seed() for _ in range(N)]
## assign sigma to each seed
sigmas_per_seed = [sigmas[i // (N // len(sigmas))] 
                   for i in range(N)]

## evaluate all perturbed models
scores = [evaluate(theta + sigmas_per_seed[i] * eps(seed[i]), D_train) 
          for i in range(N)] 
top_indices = topk(scores, K).indices

# Inference: Ensemble predictions on test input x
answers = [generate(theta + sigmas_per_seed[i] * eps(seed[i]), x) 
           for i in top_indices]
prediction = majority_vote(answers)
\end{lstlisting}
\end{algorithm}
\end{minipage}
\vspace{-7pt}
\end{figure}

\methodname operates in two phases: \textit{Random Guessing} (Training) and \textit{Ensembling} (Inference), which are given in pseudocode in Algorithm~\ref{algo:2} and described in math below:

\paragraph{Training: Random Guessing and Checking.}
We sample a population of $N$ random seeds $\{s_1, \dots, s_N\}$, and a corresponding noise scale per seed, $\{\sigma_1, \ldots, \sigma_N\}$, with all $\sigma_i$ sampled uniformly from $\Sigma$. These give a collection of $N$ parameter vectors $\theta_i = \theta + \sigma_i \boldsymbol{\epsilon}(s_i)$. Each corresponding model $f_{\boldsymbol{\theta}_{i}}$ is evaluated on a small training or validation set $\mathcal{D}_{\text{train}}$ to obtain a performance score $v_i$.
We then select the top-$K$ performing models based on these scores:
\begin{equation}
\mathcal{I}_{\text{top}} = \mathop{\mathrm{arg\,topK}}_{i \in [N]} (v_i)
\end{equation}

\paragraph{Inference: Ensembling of Predictions.}
For a test input $x$, we generate predictions using only the selected set $\mathcal{I}_{\text{top}}$. The final output $\hat{y}$ is obtained by aggregating the individual predictions via majority voting:
\begin{equation}
    \hat{y} = \mathop{\mathrm{mode}}\left( \left\{ \mathop{\mathrm{arg\,max}}_y f_{\boldsymbol{\theta}_i}(y|x) \mid i \in \mathcal{I}_{\text{top}} \right\} \right)
\end{equation}

\methodname differs from standard practice in several ways. First, it does not involve gradient steps, and in fact does not involve sequential updates at all -- it is entirely parallel. Second, it finds a \textit{set} of solutions, which can be ensembled, rather than a \textit{single} setting of the weights. This latter property has also been explored in the literature on evolutionary methods and quality-diversity algorithms, which maintain a population of promising solutions rather than collapsing on a single parameter vector (e.g., \citet{mouret2015illuminating, jaderberg2017population, huang2017snapshot}).

\section{How Does \methodname Compare to Standard Methods for Post-Training?}\label{sec::llm_expt}

We test \methodname on post-training of LLMs and VLMs, and find it to be effective across a range of settings, often outperforming standard baselines. Although these results are strong, the reader should keep in mind that on these benchmarks performance can be sensitive to minor stylistic and formatting changes. Section \ref{sec::types_of_thickets} analyzes where the performance gains are coming from and argues that they are partially from reasoning improvements and partially from formatting improvements; notably, this is also true for certain baselines.

\subsection{\methodname on Large Language Models}

We evaluate several models (Qwen, Llama, OLMo3; 0.5B--8B) covering both base and instruct variants, across four domains: math (Countdown, GSM8K, MATH-500, OlyBench), code (MBPP), writing (ROCStories), and chemistry (USPTO). We compare \methodname against Test-Time Majority Vote (TT-MV), PPO, GRPO, and ES. Full details on datasets, models, and baselines are in Appendices~\ref{sec:models}--\ref{sec:baselines}.

Our main finding is that \methodname (with $K=50$) mostly matches or outperforms established methods across a range of model scales (0.5B to 8B) and task categories. Benchmark performance compared to baselines, all run with equal training flops, is shown in Figure~\ref{fig:acc_v2}. More comparisons are given in Appendix~\ref{app:additional_results}.

Notably, this performance is achieved while \methodname involves no sequential optimization steps, whereas the baselines are run for hundreds of steps (see Appendix Table~\ref{appendix:hyperparameters} for hyperparameter settings). This gives \methodname a potentially large wall-clock advantage, provided it is run on a large enough cluster of parallel compute. To demonstrate this, we deployed \methodname on a 200 GH200 cluster and trained \texttt{Olmo-3-7B-Instruct} on Countdown. Using $N=2000, K=50$, this takes 3.2 minutes and achieves 70\% accuracy. 

These experiments also reveal that the ensembling phase is crucial to \methodname's performance. \methodname with K=1 is substantially less effective than \methodname with K=50, as shown in Figure \ref{fig:teaser} (c) as well as Figure \ref{fig:acc}.

\begin{figure}[t]
    \centering
    \includegraphics[width=\linewidth]{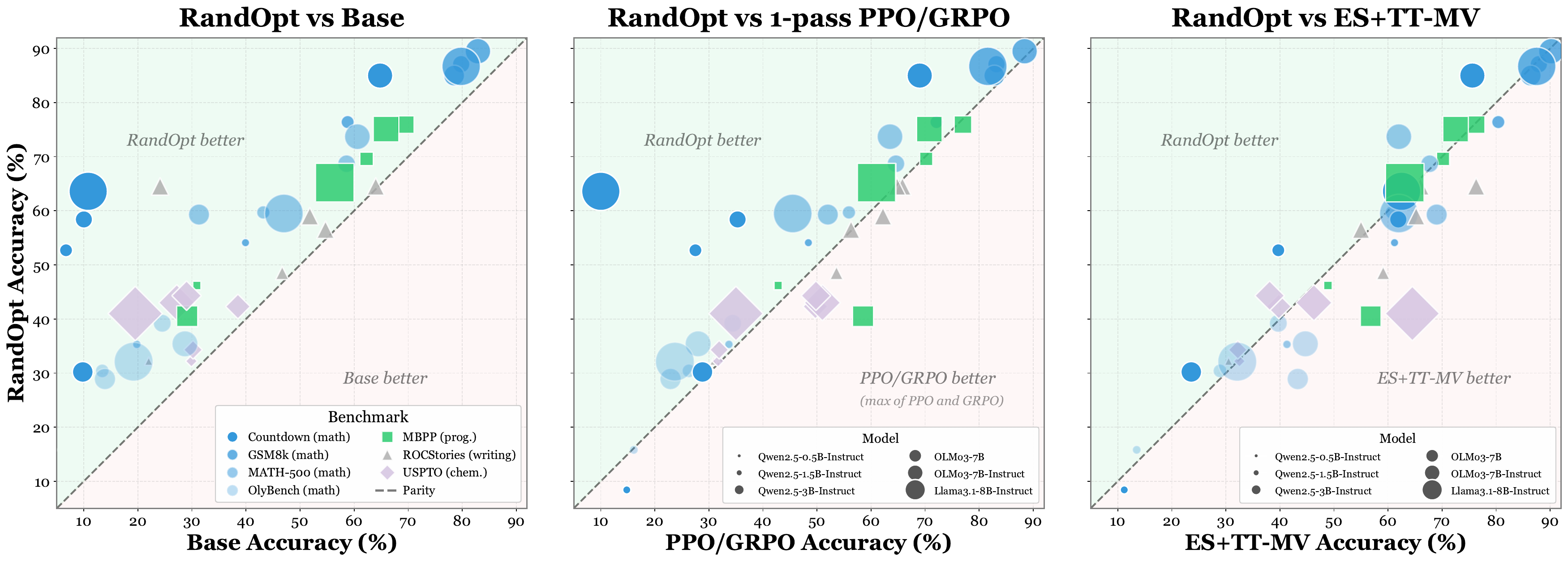}
    \caption{\methodname vs. baselines on post-training LLMs. Marker size represents model scale (0.5-8B), shape indicates task family, and transparency distinguishes benchmarks within each task family. \methodname matches or exceeds baselines in most settings. \methodname is run with K=50 and ES+TT-MV also uses 50 test-time samples ensembled via majority vote. 1-pass PPO/GRPO use a single test-time sample; this setting disadvantages the baseline but reflects current standard usage of these models, in which there is no test-time ensembling. 
    Full experimental results can be found in Appendix Table~\ref{appendix::tab-acc}. 
    }
    \label{fig:acc_v2}
\end{figure}

\subsection{\methodname on Vision-Language Models }\label{app:vlm}

\begin{wraptable}{r}{0.45\textwidth}
\vspace{-15pt}
\centering
\small
\setlength{\tabcolsep}{8pt}
\caption{\methodname improves Accuracy (\%) on GQA.}
\label{tab:VLM}
\begin{tabular}{llc}
\toprule
\textbf{Model} & \textbf{Method} & \textbf{GQA} \\
\midrule
\multirow{2}{*}{Qwen2.5-VL-3B-Inst} & Base & 56.6 \\
 & \methodname & 69.0 \\
\bottomrule
\vspace{-20pt}
\end{tabular}
\end{wraptable}

We conduct experiments on Qwen2.5-VL-3B-Instruct, a 3B-parameter vision-language model (VLM). We evaluate on the Visual Reasoning in the Real World (GQA) dataset~\citep{hudson2019gqa}, which contains questions requiring understanding of objects, attributes, and relations in images and is commonly used to benchmark visual reasoning ability. 
We  perturb the language model while keeping the visual encoder frozen, and run \methodname with $N=5000$ and $K=50$. This improves accuracy on GQA by 12.4\% (Table~\ref{tab:VLM}).

\subsection{Can Sandbagging Explain These Results?}\label{sec::sandbagging}
\citet{tice2024noise} argued that some models might be ``sandbagged,'' where they are explicitly or implicitly trained to have low performance on certain tasks. Random weights perturbations might recover performance by breaking this effect. Can this explain our results? We think not. Most notably, \methodname substantially improves the OLMo3-7B Base model (see Appendix Table~\ref{appendix::tab-acc}). Since OLMo’s training data and recipes are open-source, we can verify that this model is free of intentional sandbagging.
We provide further arguments against sandbagging as an explanation in  Appendix~\ref{app:sandbagging_analysis}. However, even if sandbagging is not the right explanation, this does not mean that there could not be similarly superficial fixes that underlie the performance gains; in Section \ref{sec::types_of_thickets} we look into this in more detail and find that indeed some, but not all, of the gains are due to simply fixing answer format.

\subsection{Can Ensembling Also Benefit the Baselines?}

Yes, ensembling (e.g., 50-pass TT-MV) consistently improves baseline methods across various model scales and benchmarks. For example, in Figure~\ref{fig:teaser}(c), it boosts the accuracy of PPO, GRPO, and ES to approximately 79\% by step 500. Table~\ref{appendix::tab-acc} also suggests this. For example, ES + TT-MV increases the GSM8k accuracy of Qwen2.5-0.5B from 42.6\% to 61.2\%.

In fact, ensembling benefits these models regardless of the specific selection method used during training (e.g., random guessing, GRPO, or ES). Interestingly, as training progresses, the ensembled performance gap among these different baselines gradually shrinks (Figure~\ref{fig:teaser}(c)).

\section{Scaling Properties of \methodname}\label{sec::scaling_N_and_K}

\begin{figure}[t]
    \centering
    \begin{minipage}{0.48\textwidth}
        \centering
        \includegraphics[width=\linewidth]{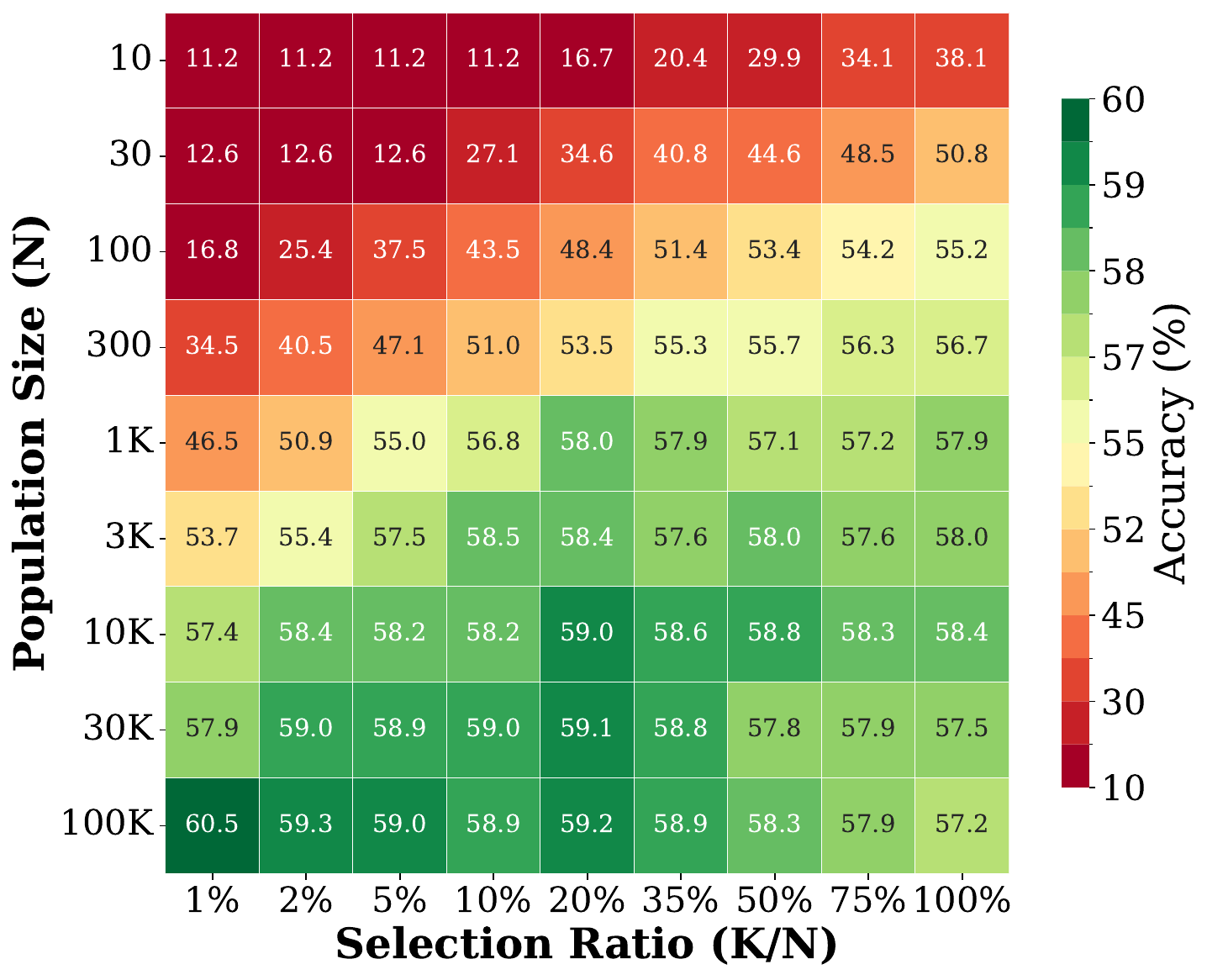}
        \caption{Heatmap of accuracy across population size $N$ and selection ratio $K/N$. Accuracy scales with population size. Task: Countdown, Model: Qwen2.5-3B-Inst.}
        \label{fig:heatmap}
    \end{minipage}
    \hfill
    \begin{minipage}{0.48\textwidth}
        \centering
        \includegraphics[width=\linewidth]{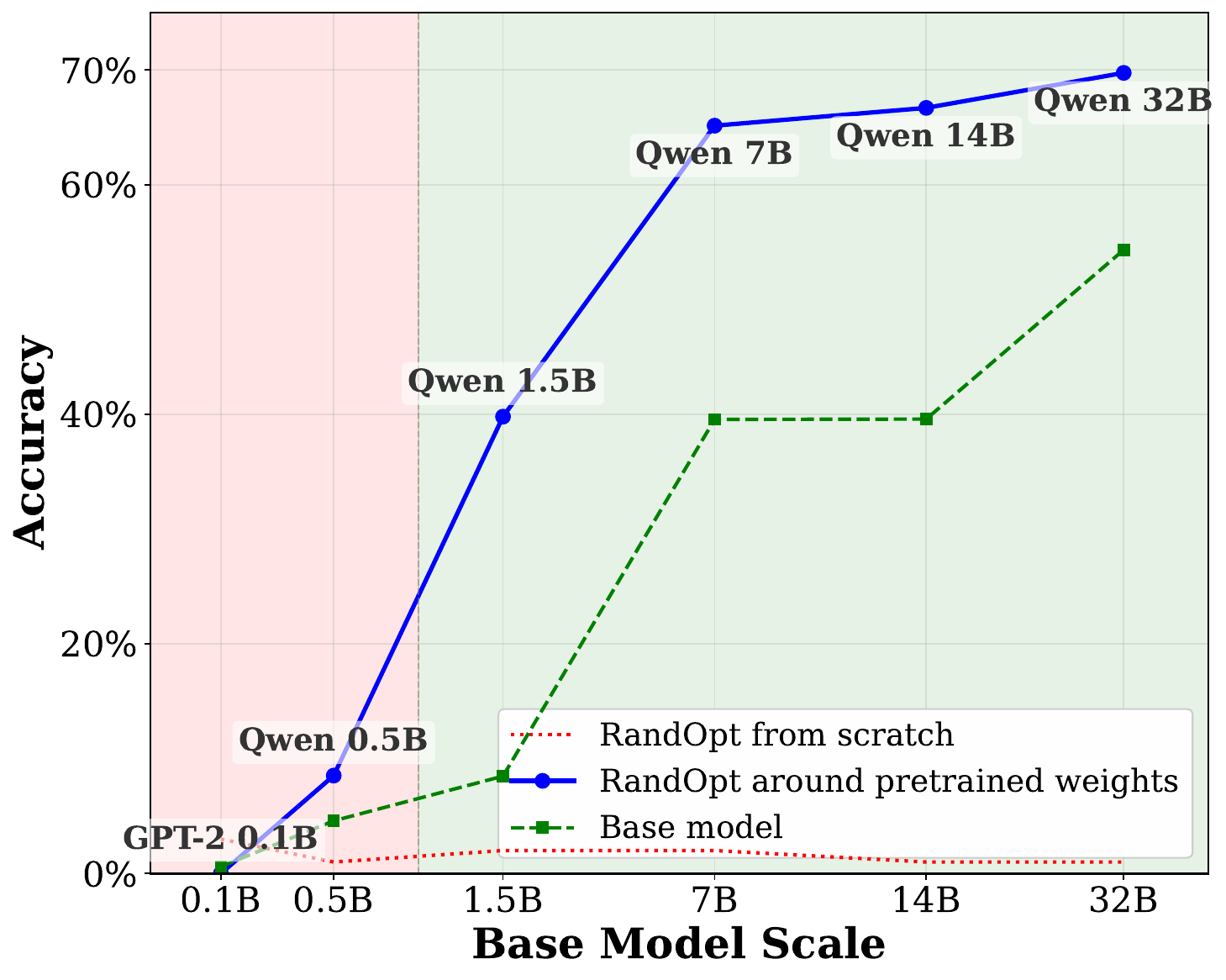}
        \caption{Relationship between model scale and \methodname performance. Good pretrained representations are important for \methodname start to work. Task: Countdown, N = 3k, K = 50.}
        \label{fig:modelsize_vs_acc}
    \end{minipage}
\end{figure}

We investigate how \methodname performance scales with respect to the search budget (population size $N$), the ensemble size ($K$), and the base model size.

\paragraph{Impact of Population Size and Selection Ratio.} 
Figure~\ref{fig:heatmap} measures the interplay between search budget ($N$) and selection ratio ($K/N$) on the Countdown task, with Qwen2.5-3B-Inst (see Appendix Figure~\ref{fig:scale} for further slices of this data). Note that the precise shape of these trends may vary from task to task and model to model, and here we only show results for Countdown. Two observations:

(1) For sufficiently low selection ratio, performance improves monotonically with population size $N$.

(2) Trade-off between $N$ and $K/N$: the optimal selection ratio decreases with increasing population size. 

Note that the top-1 model's performance on the training set is, by construction, non-decreasing in $N$. This is consistent with observation \#1; if a practitioner wants to avoid hyperparameter tuning, a reasonable strategy would therefore be to set $K$ small and set $N$ as large as their budget can afford.

\paragraph{The Emergence of Thickets at Scale.} 
Figure~\ref{fig:modelsize_vs_acc} illustrates the relationship between base model scale and \methodname's performance, showing that a strong base model is essential for \methodname to start to work. The blue line corresponds to applying \methodname to the base models, and the green dashed line is the performance of the base models. For very small models, such as GPT-2 0.1B, \methodname fails to improve performance; for small models (e.g., Qwen 0.5B), \methodname also offers small gains over the baseline. However, starting at around 1.5B parameters, \methodname triggers a rapid increase in accuracy. After this point, the base model accuracy begins to catch up and the relative improvement of \methodname shrinks as the models plateau.
Using \methodname on a model that has not been pretrained (``\methodname from scratch,'' red dotted line) remains near zero across all scales. These results suggest that sufficient pretraining and sufficient model scale are essential for \methodname to work; this matches our finding in Section~\ref{sec:density_diversity} for the conditions under which thickets emerge.

\section{Distillation}\label{app:distill}

A catch with \methodname, compared to standard post-training, is that good performance requires $K$ forward passes at test time. To address this, we explore distilling the top-K models into a single model. 

We perform distillation on the Qwen2.5-1.5B-Instruct and Qwen2.5-3B-Instruct models. We only use hard samples: for each input, we generate eight candidate answers and keep those for which more than half of the candidates are incorrect.

\begin{wraptable}{l}{0.45\textwidth}
\vspace{-12pt}
\centering
\small
\setlength{\tabcolsep}{8pt}
\caption{Distilling the top-K \methodname ensemble into a single model achieves performance comparable to the ensemble on GSM8K. 
}
\label{tab:distill}
\begin{tabular}{llc}
\toprule
\textbf{Model} & \textbf{Method} & \textbf{GSM8K} \\
\midrule
\multirow{3}{*}{Qwen2.5-1.5B-Inst.} & Base & 58.8 \\
                                      & Distill & 74.9 \\
                                      & \methodname & 76.4\\
\midrule
\multirow{3}{*}{Qwen2.5-3B-Inst.}   & Base & 79.8 \\
                                      & Distill & 84.3\\
                                      & \methodname & 87.1\\
\bottomrule
\end{tabular}
\vspace{-10pt}
\end{wraptable}

We use the top-50 models to generate 25,000 responses on 500 training examples. Each training sample is a pair $(x, [r;y])$, where $x$ is the input question, $r$ is the reasoning trace, and $y$ is the final answer. We then select hard examples and perform supervised fine-tuning (SFT) on the base model for 2 epochs, obtaining a distilled model. Specifically, let $s = (s_1, s_2, \ldots, s_T)$ denote the full token sequence $[x;r;y]$, and let $T_x$ be the length of $x$. The SFT objective minimizes the negative log-likelihood of the reasoning trace and final answer:
\begin{equation}
    \mathcal{L}_{\text{Distill}}(\theta) = -\sum_{t=T_{x}+1}^{T} \log \, p_\theta\!\left(s_t \mid x,\, s_{<t}\right),
\end{equation}
where $\theta$ denotes the model parameters. The input question $x$ is the context (with its loss masked), and the model learns to autoregressively generate the reasoning trace and final answer $[r;y]$.

The computational cost of distillation is small compared to training. Since training uses a population size of 5,000, and distillation only uses the top-50 models and runs for 10 SGD iterations, the cost of distillation is about 2\% of the training cost.

\section{Types of Thickets}\label{sec::types_of_thickets}

It is possible that the effects we have observed do not arise from models in the thicket employing fundamentally different forms of reasoning, but instead from differences that are comparatively shallow. For example, models may vary instead in surface-level behaviors such as answer formatting or style. Task performance can be highly sensitive to such factors: a system that expects outputs in JSON may fail entirely if a model instead emits free-form text. Are our results simply due to random perturbations improving output formatting?

We test this by measuring how much of the improvement on GSM8K is attributable to formatting fixes versus correcting the actual numerical answer. On $1319$ test data samples, we decompose performance, relative to the base model, into 1) retained correctness, 2) base wrong, adapted model correct (indicating a ``reasoning thicket'', where perturbations can help the model solve reasoning problems it could not solve before), 3) format fixed, then counted as correct (indicating a ``format thicket'', where the base model solved the problem but output the answer in a format that was marked as incorrect by a strict answer checker, e.g., the answer was not placed after the proper tag ``\#\#\#\#''), and 4) base correct, adapted model incorrect (a regression in model ability).

Results are shown in Figure~\ref{fig:types_of_thickets}. \methodname ($K=50$), reaches $86.7\%$ overall accuracy with $0.7\%$ regression, while still showing substantial contributions from both format ($19.0\%$) and reasoning ($12.3\%$) thickets.

This experiment demonstrates that thickets can come in a variety of types: we might have thickets of different answer formats, thickets of reasoning approaches, thickets of personalities, thickets of domain knowledge, and more. All of these in combination could contribute to task experts being dense and diverse, since to be an expert at a task, as defined in this paper, simply requires \textit{doing well on the benchmark for that task} and benchmarks measure a combination of format, skill, personality, knowledge, and more. Beyond language models, in Appendix \ref{app:diffusion}, we show a case of ``color thickets'' in an image generative model.

\begin{figure}[H]
    \centering
    \includegraphics[width=0.9\linewidth]{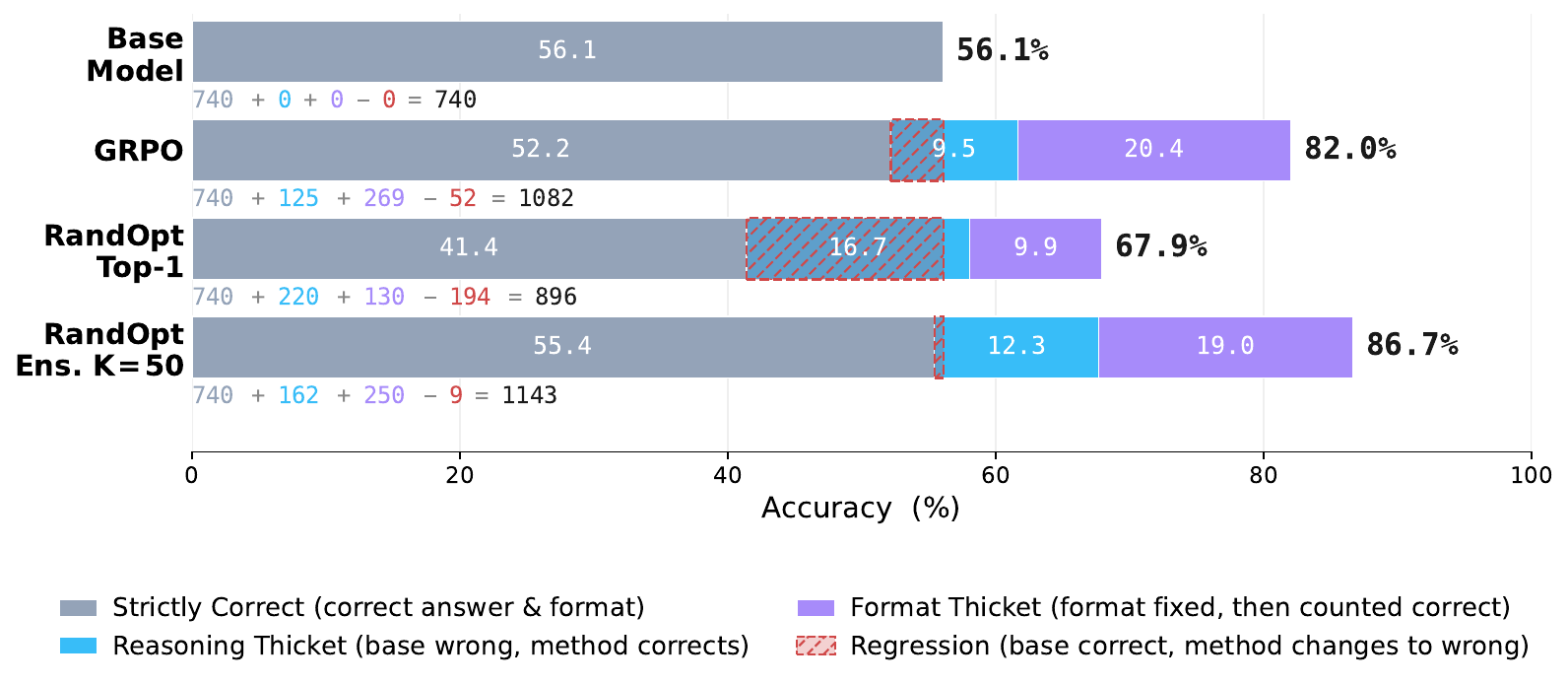}
    \caption{Accuracy decomposition on GSM8k using Qwen2.5-3B-Instruct (N=3000, K=50). Grey denotes strictly correct answers (both format and answer correct), light blue denotes the questions originally answered incorrectly that are corrected after training, and purple denotes the gains are just from fixing format. For both GRPO and \methodname, a large portion of the gains come from format correction, while another portion is from solving the problem correctly.}
    \label{fig:types_of_thickets}
\end{figure}

\section{Related Work}

\subsection{Structure of the Neural Net Loss Landscape}

\paragraph{Flat Minima} 
A prominent finding in the loss landscape literature is that training tends toward flat minima~\citep{keskar2017large}. Our findings reveal that flat minima can be hiding important structure below the surface: \textit{per-task}, the local accuracy landscape is not nearly so flat and the pretrained weights can even lie in a trough of accuracy. Pretraining aggregates over many tasks, hence a flat pretraining landscape is compatible with spiky per-task losses.

\paragraph{Multi-Task Loss Landscapes}
A large body of work aims to characterize the loss landscape of neural nets trained for a single objective (e.g,.~\citet{visualloss,choromanska2015loss}). Thickets, in contrast, are a property of the multi-task landscape. Prior work that takes a similar perspective includes Pareto front learning, where paths in weight space are identified that tradeoff between different task objectives~\citep{ma2020efficient}, and multi-task linear mode connectivity, where linear paths of low loss are observed between the minimizers of different tasks~\citep{mirzadeh2020linear}.

\paragraph{On Lottery Tickets and Neural Thickets} The Lottery Ticket Hypothesis suggests that, when training from scratch, finding a good initialization is akin to winning the lottery: random initialization will only rarely sample weights that train well~\citep{frankle2019lottery}. Our findings are compatible with this view, but suggest a qualitatively different regime after pretraining. At transfer time, the neighborhood around the initialization (i.e. around the pretrained weights) becomes abundant with good solutions.

\subsection{Post-Training as Selection} 
\paragraph{Reweighting the Pretrained Policy}
Many works have observed that certain post-training methods can be interpreted as reweighting behaviors already present in the pretrained distribution. For example, KL-regularized methods such as PPO~\citep{schulman2017proximal} constrain the policy to remain close to the pretrained model, and can be interpreted as reweighting the pretrained distribution~\citep{rafailov2023direct}. 

\paragraph{Self-improvement by Trace Selection}
In the self-improvement literature, a common recipe is to use test-time search to select good reasoning traces, and then train these traces back into the model weights (e.g., \citet{zelikman2022star, xiong2025minimalist}). These approaches aim to convert high pass@k performance into high pass@1 performance. Our results are consistent with the view that post-training selects or sharpens skills that are already latent in the pretrained model. While prior work characterizes how probability mass can be reweighted in output-space, we instead characterize the geometry of nearby weight-space optima.

\subsection{Randomized Search and Evolutionary Methods}

\paragraph{Random Search Can Be Effective for Training and Inference with Neural Nets} Many prior papers have shown that \textit{sequential} random search methods are competitive with RL for control problems (e.g., \citet{salimans2017evolution, mania2018simple}) and even for post-training LLMs~\citep{qiu2025evolution}. However, we are not aware of prior work that demonstrated the same for the parallel search case, except in very simple scenarios, such as those explored by \citet{schmidhuber2001evaluating} and by \citet{RWG_RL}. 
Parallel guess-and-check methods, such as Best-of-N, are also commonly used at test-time to improve model performance, and these methods perform well compared to more sophisticated inference methods~\citep{wu2025inference}. We note that the training phase of \methodname is essentially Best-of-N in weight-space, rather than in output space. Given a verifier or reward signal, \methodname could be applied at test-time as well.

\paragraph{Spurious Rewards Can Sometimes Be Effective} For some tasks, we find that density is so high that \textit{most} Gaussian perturbations increase task accuracy (Figure \ref{fig:density_2}). This phenomenon may provide a partial explanation of the recent finding that post-training on random or spurious rewards can sometimes be effective~\citep{shao2025spuriousrewardsrethinkingtraining}. Such rewards provide gradients in the wrong direction, but this wrong direction might still, by chance, be sufficiently right.

\paragraph{Evolving Adaptable Initializations}
The central conceit of our paper is that pretraining draws weights toward regions surrounded by adaptive specialists. \citet{simpson1953baldwin} argued that this property also holds true for evolved genomes, referring to this as the ``Baldwin Effect,'' in reference to prior work by James Baldwin~\citep{baldwin1896new}. \citet{hinton1987learning} provided an initial computational model of this effect. In short, evolution tends toward inits from which within-life learning can quickly adapt. These works provide a backdrop for modern methods in meta-learning, which optimize for neural net initializations from which task-specific solutions are a short step away. Prominent in this family is the MAML algorithm of \citet{finn2017model}. Our results indicate that pretraining is implicitly finding a MAML-like init.

\subsection{Direct Models of Weight Space}

\paragraph{Bayesian Neural Nets and Parameter Noise}
Bayesian neural nets treat parameters as random variables, which can be sampled from to estimate distributions over outputs~\citep{goan2020bayesian}. This approach is often used to quantify uncertainty and calibrate predictions, or to improve predictions by ensembling over samples~\citep{gal2016dropout}. Our new observation is that pretrained weights can be usefully treated as Gaussian random variables even when they were not trained to have this property. In other words, we view pretrained nets as \textit{implicitly} defining a distribution of representations about their weights. This differs from prior works on Bayesian methods that explicitly represent these distributions. Of prior approaches, PEP~\citep{mehrtash2020pep} is especially close to \methodname. PEP computes an ensemble of predictions from Gaussian perturbations of model weights; unlike \methodname, however, PEP has no selection step, aside from optimizing the variance of the Gaussian.

\paragraph{Weight Space Model Editing} While most learning methods manipulate weights indirectly, e.g., by backpropagating errors, there is also work on directly manipulating weights. \citet{cherepkov2021navigating} found that linear directions in the weight space of a generative adversarial network map to interpretable edits of the generated image; this is a weight-space analog to the popular notion that activation-space admits interpretable linear edits~\citep{park2024linear}. \citet{dravid2024interpreting} found that similarly simple weight manipulations work for diffusion models. More broadly, low-rank weight manipulations have become especially popular for model editing~\citep{hu2022lora}, as we discuss more next. Collectively, these works suggest that in weight-space, meaningful adaptations require only minor changes. The thickets phenomenon could help explain why.

\subsection{Low-dimensional structure in LLM fine-tuning}

\paragraph{Intrinsic dimension and parameter-efficient fine-tuning.}
Prior work~\citep{aghajanyan2020intrinsicdimensionality} shows that fine-tuning often succeeds within a surprisingly small random subspace of parameters, suggesting that downstream adaptation is effectively low-dimensional despite the enormous parameter space of LLMs. Consistent with this view, parameter-efficient fine-tuning methods such as LoRA~\citep{hu2022lora} restrict updates to low-rank components while freezing most of the base model, yet still achieve competitive performance across many tasks. More recently, \cite{morris2026learning} showed that math reasoning tasks can be learned by updating only 13 parameters.

\paragraph{Low-dimensional curvature in LLM fine-tuning.}
\cite{liang2026blessing} shows that LLM fine-tuning landscapes exhibit \emph{low-dimensional curvature}, where a small number of directions dominate reward improvements. Random projections have a higher chance of intersecting with a large, degenerate set of reward-improving directions as a consequence of the low-dimensionality. This view suggests interpreting the thickets phenomenon as the intersection of 
(a) a broad loss basin induced by pretraining and overparameterization, and 
(b) a set of task-relevant directions that are effectively low-dimensional (or low-rank) but embedded within the full parameter space.



\section{Implications}

\subsection{Rethinking Pretraining}

\paragraph{Pretrained Models as Distributions} We typically refer to the ``pretrained model'' as a singular thing; it's \textit{the} base model, or it's \textit{a} foundation model, on top of which further improvements can be made. Our results suggest that you can instead think about your pretrained weights as specifying a distribution over models. This distribution resists characterization just in terms of its mean: rather it contains diverse specialists whose behavior is qualitatively different from the singular pretrained weights.

\paragraph{Understanding Pretraining Requires Characterizing the Multi-task Loss Landscape} 
While the pretraining objective necessarily exhibits a local minimum at converged weights, our findings suggest that this scalar landscape obscures important structure. In particular, what governs downstream adaptation is not a single loss surface, but the collection of task-specific loss landscapes corresponding to different downstream objectives. The  pretrained weights might not be a minimizer of any individual element of this collection, but lie in a region surrounded by task-specific minima. This structure is invisible when analyzing the aggregate pretraining objective, and only emerges when that objective is decomposed into its constituent task-level losses.

\subsection{Rethinking Post-Training}

\paragraph{Pretraining Is All You Need?} We have shown that once a model has been sufficiently pretrained, further adaptation can be remarkably easy. This finding mirrors prior work, including: \citet{tian2020rethink} found that linear probes over pretrained representations can outperform sophisticated meta-learners; \citet{finn2018metalearning} proved that, given a good enough representation, gradient descent can approximate any learning algorithm; \citet{qiu2025evolution} showed that relatively simple algorithms such as ES can rival state-of-the-art RL methods for post-training LLMs. These are a few of the papers that, together with our work, suggest that it doesn't take much to obtain good downstream solutions given the right pretrained representation.

\paragraph{Decentralized, Parallel Adaptation} \methodname workers operate fully in parallel, and do not communicate with each other during training. Only at inference time do the workers interact, and only through ensembling their predictions. This may be attractive in a setting where compute nodes are cheap but communication is at a premium. Prior work has argued that ES requires less communication bandwidth than certain RL methods~\citep{salimans2017evolution}, and \methodname is cheaper still: $T$ steps of ES requires communicating scores $T$ times, while \methodname requires communicating scores just once. Further, \methodname could be preferable where wall-clock time is what matters: \methodname's wall-clock time is $\mathcal{O}(1)$ in optimization steps whereas sequential methods like ES are $\mathcal{O}(T)$. Due to its decentralized nature, \methodname may also be especially suitable for federated settings where data security or privacy are paramount.

\section{Limitations}\label{sec::limitation}

\paragraph{Pretraining Might Be All You Need, But You Do Need Pretraining} \methodname is not suitable for training large neural nets from scratch, and achieves negligible performance in this setting (e.g., see the dotted red line in Figure \ref{fig:modelsize_vs_acc}). It also struggles on small pretrained models, where the density of solutions is low. The success of \methodname around well-pretrained inits does not mean we should discard other learning algorithms. Rather \methodname works once you have a good enough representation, but to find that representation in the first place may still require structured search.

\paragraph{Capacity to Learn Dramatically New Skills?} Our results leave open the question of exactly how far beyond the base model's abilities random guessing and ensembling can take us. The scaling relationships we observe appear to saturate at large model size (Figure \ref{fig:scale}) and large $N$ (Figure \ref{fig:heatmap}); the saturation is visible even as a function of log resources. This may indicate that further improvement on these tasks requires exiting the local thickets and hunting farther and wider, where the search might return to needle-in-haystack dynamics and more structured methods may be necessary.

\paragraph{Inference-Time Cost} At inference time \methodname uses $K$ forward passes, and good performance typically requires $K>1$. This cost can be reduced by distilling the ensemble into a single model, as we have shown in Section \ref{app:distill}. However, distilling introduces several tradeoffs: 1) the algorithm is no longer fully parallel, 2) distillation requires additional training flops (although we found this cost to be minor in our experiments), and 3) our specific distillation approach is tailored to LLM reasoning toward a categorical final prediction; this approach might not be applicable in other settings.

\paragraph{Majority-Vote Ensembling Does Not Support Structured Prediction} We have focused primarily on problems where the answer is a single discrete class (or an integer), in which case majority-vote ensembling is straightforward to apply. In the 1D signal experiments in Appendix \ref{appendix:1D_signals}, we also show a case where mean ensembling can work. It is less clear how to ensemble, or distill, models that perform more structured kinds of prediction, such as writing a story, generating an image, designing a molecule, etc. To handle these cases, the ensembling approach in Algorithm \ref{algo:2} would need to be modified. In Appendix \ref{app:diffusion}, we show one simple ensembling approach on image generation with a diffusion model: mean ensembling at each step of denoising. We do not claim that this is the best choice but rather include it as a proof of concept that our framework could be extended to many possible ensembling methods beyond just majority voting.

\paragraph{Exactly When and Why Does Pretraining Enter the Thicket Regime?} Our results characterize properties of the pretrained landscape, but do not fully explain the mechanisms by which these properties arise. The experiments in Section \ref{sec:toy_expt} show a setting in which pretraining on a distribution of many different tasks is critical to thickets forming. Is this also the critical factor in developing thickets in LLMs and other large models? What exactly is it about the pretraining objective, or learning dynamics, that creates thickets? Our results invite further investigation.

\section*{Acknowledgements}
{
\small
This work was supported under project ID 43 as part of the Swiss AI Initiative, through a grant from the ETH Domain and computational resources provided by the Swiss National Supercomputing Centre (CSCS) under the Alps infrastructure. This work was also supported by a Packard Fellowship to P.I., and a Frederick (1953) and Barbara Cronin Fellowship to Y.G., and by ONR MURI grant N00014-22-1-2740. We thank Minyoung Huh and Jeremy Bernstein for inspiring discussions on earlier iterations of this project.
}

\clearpage
\newpage

\bibliography{main}
\bibliographystyle{icml2026}

\newpage
\appendix
\onecolumn

\section{Models}\label{sec:models}

We conduct experiments across the Qwen, Llama, and OLMo3 model families. Our selection encompasses diverse model sizes (0.5B to 8B parameters), multiple model families with different pretraining recipes, and both instruction-tuned and base models. Specifically, we evaluate:

\textit{Qwen2.5-Instruct}~\citep{qwen2025} at three scales (0.5B, 1.5B, and 3B). Qwen2.5-Instruct is a series of instruction-tuned language models ranging from 0.5B to 72B parameters, demonstrating strong performance on reasoning and coding tasks.

\textit{Llama-3.1-8B Instruct}~\citep{grattafiori2024llama3herdmodels}. This is an instruction-tuned variant of the Llama 3.1 family, optimized for dialogue and instruction-following capabilities.

\textit{OLMo3}~\citep{olmo2025olmo3} in both base and instruction-tuned variants at 7B. We include OLMo3 as it is fully open-source with transparent training data and procedures, mitigating concerns about potential data contamination or sandbagging that may affect evaluation integrity.

\section{Datasets}\label{sec::data}

We evaluate our method on benchmarks spanning five task categories: \textcolor{blue}{\textbf{mathematical reasoning}} (Countdown, GSM8K, OlympiadBench, MATH-500), \textcolor{teal}{\textbf{code generation}} (MBPP), \textcolor{orange}{\textbf{creative writing}} (CommonGen), \textcolor{purple}{\textbf{chemistry}} (USPTO) and \textcolor{brown}{\textbf{commonsense}} (GQA), a visual question answering benchmark commonly evaluated with vision-language models (VLMs).

\textcolor{blue}{\textbf{Mathematical Reasoning.}}
\textit{Countdown task}~\citep{wikipedia_countdown_2024, gandhi2024streamsearchsoslearning} measures symbolic and numerical reasoning ability by requiring models to construct arithmetic expressions that exactly reach a target value given a set of numbers.

\textit{GSM8K}~\citep{cobbe2021training} is a widely used benchmark for grade-school–level mathematical reasoning, consisting of multi-step word problems that require arithmetic calculations and logical reasoning.

\textit{OlympiadBench}~\citep{he2024olympiadbench} is a bilingual benchmark consisting of 8,476 Olympiad-level mathematics and physics problems drawn from international and Chinese competitions, designed to evaluate scientific reasoning capabilities including theorem application, multi-step derivations, and complex problem solving.

\textit{MATH-500}~\citep{mayilvahanan2025math} is a challenging subset of the MATH dataset, focusing on competition-level mathematical problems that test advanced multi-step reasoning and symbolic manipulation.

\textcolor{teal}{\textbf{Code Generation.}}
\textit{MBPP}~\citep{austin2021programsynthesislargelanguage} is a benchmark of approximately 1,000 crowd-sourced Python programming problems designed to be solvable by entry-level programmers. Each problem consists of a task description, a reference code solution, and three automated test cases, covering programming fundamentals and standard library functionality to evaluate function-level code generation capabilities.

\textcolor{orange}{\textbf{Creative Writing.}}
\textit{ROCStories}~\citep{mostafazadeh2016corpus} is a commonsense narrative generation benchmark consisting of short everyday stories. The dataset contains around 100,000 five-sentence narratives describing real-world events. It evaluates a model’s ability to generate coherent, fluent, and logically consistent story continuations grounded in commonsense reasoning.

\textcolor{purple}{\textbf{Chemistry.}}
\textit{USPTO}~\citep{vanderLingen2023} is a large-scale chemical reaction dataset extracted from United States Patent and Trademark Office patent documents, containing over 1.8 million organic chemical reactions represented as reaction SMILES. The benchmark evaluates models on reaction prediction and retrosynthesis tasks, requiring understanding of chemical transformations and molecular structure relationships.

\textcolor{brown}{\textbf{Commonsense.}}
\textit{GQA}~\citep{hudson2019gqa} is a visual question answering benchmark designed to evaluate compositional visual reasoning and grounded commonsense understanding. Questions require models to perform multi-step reasoning over object attributes and relations (e.g., spatial relationships, colors, or object interactions), testing the ability to combine visual grounding with commonsense reasoning.

\begin{figure*}
    \centering
    \includegraphics[width=\linewidth]{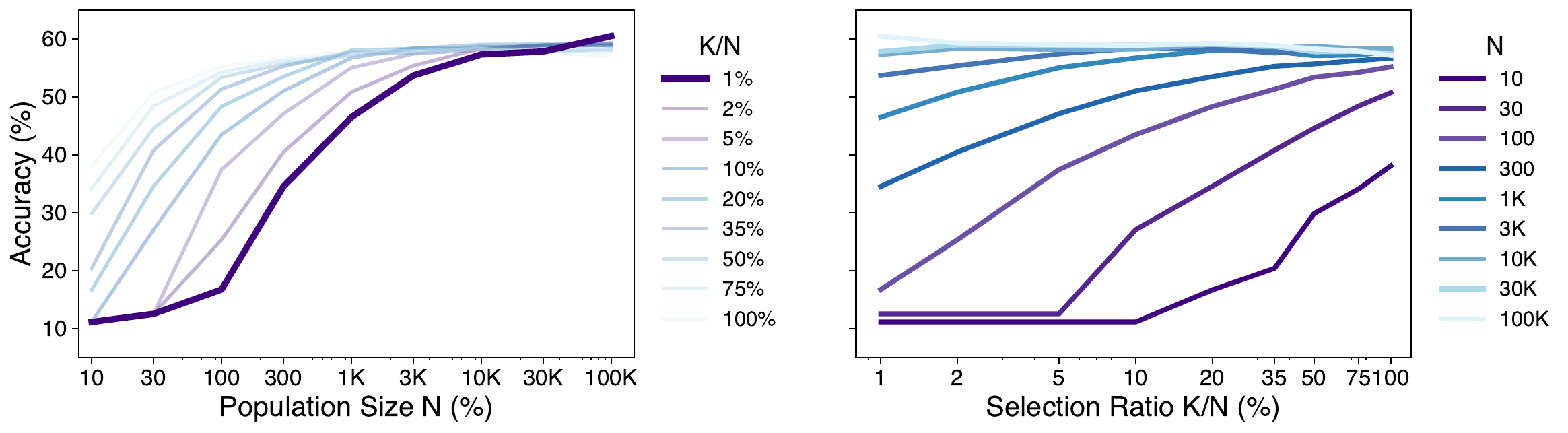}
    \caption{Scaling curves for population size $K$ and ensemble size $N$ on Countdown task. 
    }
    \label{fig:scale}
\end{figure*}

\begin{figure}[t]
    \centering
    \includegraphics[width=\linewidth]{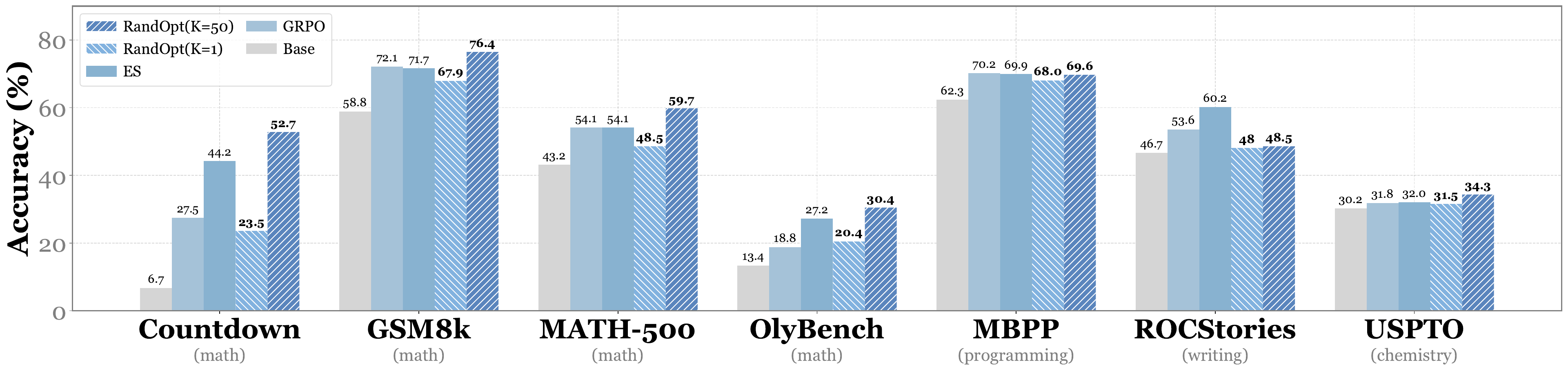}
    \caption{Model performance comparison on Countdown, GSM8k, MATH-500, and OlympiadBench using the Qwen2.5-1.5B-Instruct model. More results for additional models and baselines can be found in Appendix Table~\ref{appendix::tab-acc}. 
    }
    \label{fig:acc}
\end{figure}

\section{Baseline Methods}\label{sec:baselines}

\paragraph{Test-Time Majority Vote.}
Majority voting~\citep{wang2023selfconsistency} is a test-time inference strategy that samples multiple independent responses from a model and selects the most frequently occurring answer. This approach improves accuracy without updating model parameters by leveraging the diversity of sampled reasoning paths.

\paragraph{Best-of-N.}
Best-of-N~\citep{ouyang2022training} samples multiple responses at inference time and selects the highest-scoring one according to a predefined evaluation metric or reward function, rather than aggregating answers by frequency as in majority voting.

\paragraph{PPO.}
Proximal Policy Optimization~\citep{schulman2017proximal} uses a clipped surrogate objective to stabilize policy updates in RLHF, requiring both a policy model and a critic network.

\paragraph{GRPO.}
Group Relative Policy Optimization~\citep{shao2024deepseekmath} removes the critic by computing advantages from group-level reward statistics, reducing memory overhead compared to PPO.

\paragraph{ES.}
Evolution Strategies at Scale~\citep{qiu2025evolution} perform gradient-free optimization by perturbing parameters with Gaussian noise and updating based on fitness-weighted perturbations.

\section{Additional Experimental Analysis}\label{app:additional_results}

\paragraph{Full results on LLM} For post-training LLMs, Table \ref{appendix::tab-acc} reports our full results across different base models, adaptation methods, and tasks. 

\paragraph{Effect of ensembling} Figure \ref{fig:acc} compares \methodname(K=1) to \methodname(K=50), alongside several baselines. These results show that ensembling over many perturbations is critical to getting competitive performance on most tasks, but also that even without ensembling (just taking the top perturbation; K=1), we observe a substantial performance boost over the base model.

\begin{figure*}[!h]
    \centering
    \includegraphics[width=0.95\linewidth]{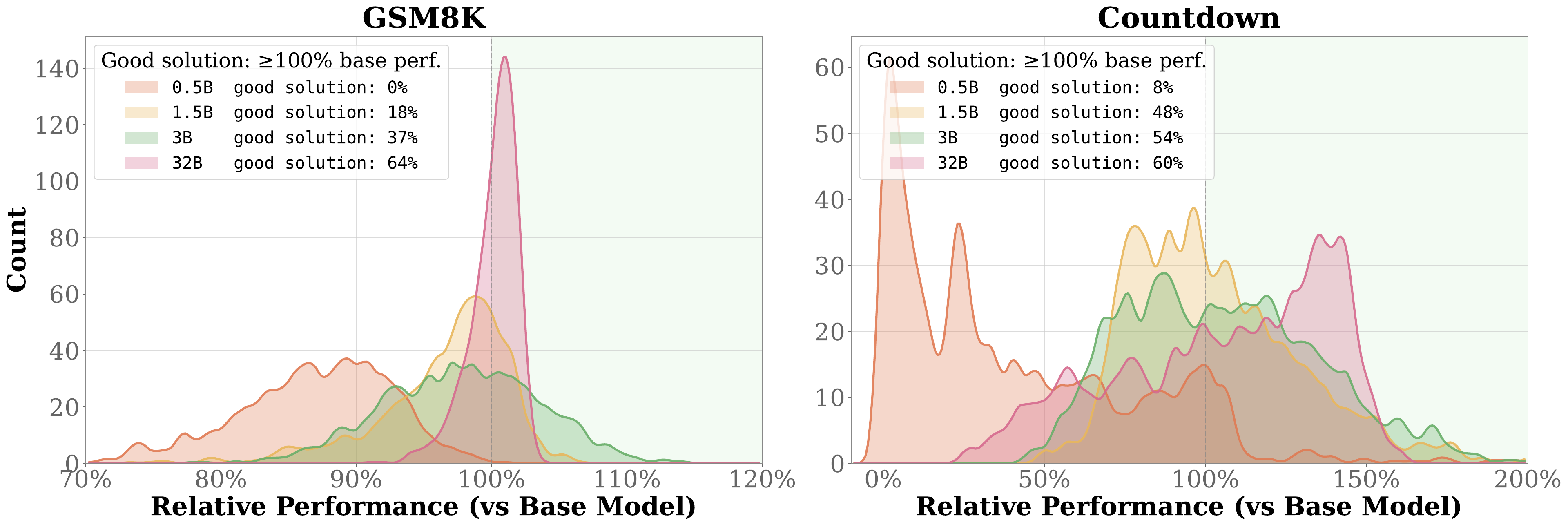}
    \caption{Performance distributions of 500 randomly perturbed models relative to base model accuracy on the math reasoning task GSM8k (left) and Countdown (right). The fraction of perturbations matching or exceeding base performance (shaded region) increases with model size: $0\% \to 64\%$ on GSM8K and $8\% \to 60 \%$ on Countdown as model size grows from 0.5B to 32B parameters. Note: x-axis shows relative performance; longer tails in smaller models do not equate to higher absolute accuracy. 
    }
    \label{fig:density_2}
    \vspace{-0.2cm}
\end{figure*}

\paragraph{Solution density histograms} In Figure \ref{fig:density_2}, we show the full distribution of performance improvement over random weight perturbations for GSM8K and Countdown, on the Qwen2.5-Instruct series of models.

\paragraph{Log-linear correlation between population size ($N$) and performance.} Appendix Figure~\ref{fig:scale} illustrates the scaling properties of our method on the Countdown task. As shown in the left panel, we observe a log-linear correlation between population size and performance across different selection ratios. \methodname benefits significantly from scaling $N$ in the population size.

\paragraph{Selection ratio saturation.} The right panel shows that while increasing the selection ratio significantly improves performance when $N$ is small, this benefit saturates as $N$ scales up. For sufficiently large $N$, the minimal value of $K$ (selecting only the top 1\%) yields high performance. This demonstrates that a large population size $N$ allows for a very small topK selection, which can reduce inference time costs.

\begin{figure}[H]
    \centering
    \begin{minipage}{0.49\columnwidth}
        \centering
        \includegraphics[width=\linewidth]{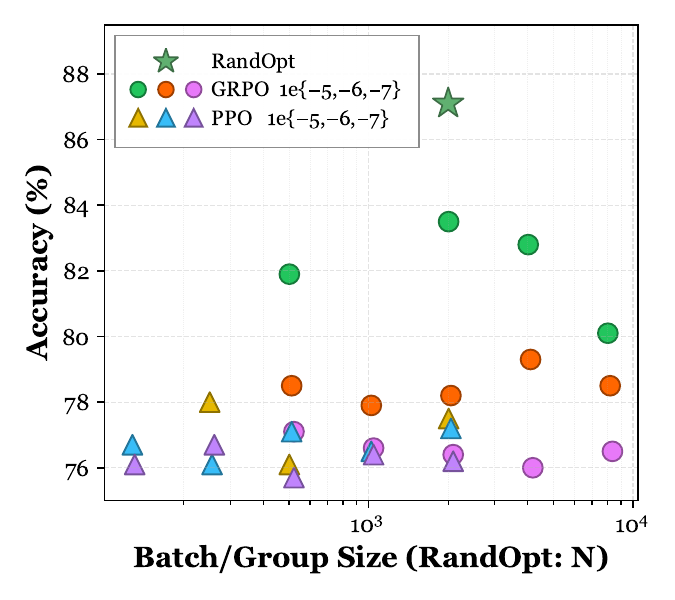}
        \vspace{-5pt}
        \captionof{figure}{Batch/group size vs. accuracy under one-step training. Scaling baseline parallelism does not get \methodname's performance. Task: GSM8K, Model: Qwen2.5-3B-Instruct, N=5000, K=50)}
        \label{fig:clock_acc}
    \end{minipage}
    \hfill
    \begin{minipage}{0.49\columnwidth}
    \paragraph{\textbf{Scaling Parallelism of the Baselines Does Not Match \methodname (1-Step Training).}}
    Figure~\ref{fig:clock_acc} plots batch/group size versus accuracy on GSM8K using Qwen2.5-3B-Instruct, with all methods run for one training step. The key takeaway is that using larger batch/group size does not really help. We grid-search GRPO and PPO on 8 GPUs over learning rates $1e-5$, $1e-6$, and $1e-7$; batch size (PPO) ranges from 128 to 2048, and group size (GRPO) ranges from $512$ to $8192$. The x-axis is batch/group size (\methodname uses population size $N$), color encodes learning rate, and marker shape denotes algorithm.

    \textbf{PPO} does not benefit consistently from larger batches:
    best accuracy is 78.0\% at batch size 256 (lr=$1e-5$), while larger-batch runs such as batch size 2048 reach at most 77.5\%.

    \textbf{GRPO} reaches its peak at 83.5\% (group size=$512\times4$, lr=$1e-5$), but increasing group size does not improve performance (e.g., 80.1\% at $2048\times4$ with lr=$1e-5$).

    \textbf{\methodname} ($N=3000$) reaches 87.1\%, exceeding all GRPO and PPO settings. This indicates that scaling baseline parallelism does not close the gap with \methodname’s performance.
    \end{minipage}
\end{figure}

\section{Implementation Details}\label{appendix::imple_details}

All experiments are conducted on NVIDIA GH200 GPUs. To ensure a fair comparison, we normalize the computational budgets of all methods by the total training FLOPs. For the experiments in Table~\ref{appendix::tab-acc}, We have two variants that differ in their selection strategy: \methodname (random) performs selection by random sampling, while \methodname (ES) updates the population using an evolution strategies (ES) algorithm and selects the top-50 models from the final ES population. We use a population size of 5,000 and ensemble the top-50 models for \methodname (random), and a population size of 100 with 50 iterations and ensemble the top-50 models for \methodname (ES). For the figures in Section~\ref{sec:density_diversity}, we use $\sigma = 0.005$. For all datasets, we use the first 200 samples from the training set as the \methodname training set, and all samples from the test set for evaluation. For MATH-500, we manually split the dataset: the first 200 samples are used for training and the remaining samples for testing. For OlympiadBench, we use the text-only English competition math subset (\texttt{OE\_TO\_maths\_en\_COMP.parquet}), since most other subsets require visual inputs.

\subsection{FLOPs Calculation}
For a model with $P$ parameters and a sequence length of $L$, we follow the standard estimation where a single forward pass requires approximately $2PL$ FLOPs, and a backward pass requires $4PL$ FLOPs~\citep{narayanan2021efficientlargescalelanguagemodel}. The total training compute is determined by the number of samples processed and the computational overhead per sample.

\subsection{Method-Specific Compute}

\begin{itemize}
    \item \textbf{\color[HTML]{2E86C1} GRPO}: Each step processes a batch of $B$ questions with $G$ responses per question. Each response involves a policy forward, a reference model forward, and a policy backward pass.
    \begin{equation}
        \text{FLOPs}_{\text{GRPO}} = T_{\text{GRPO}} \cdot B \cdot G \cdot \underbrace{(2 + 2 + 4)}_{\text{fwd + ref + bwd}} \cdot PL = \mathbf{8 \cdot T_{\text{GRPO}} \cdot B \cdot G \cdot PL}
    \end{equation}

    \item \textbf{\color[HTML]{D35400} PPO}: Adds critic forward and backward passes to the GRPO baseline.
    \begin{equation}
        \text{FLOPs}_{\text{PPO}} = T_{\text{PPO}} \cdot B \cdot G \cdot \underbrace{(2 + 2 + 2 + 4 + 4)}_{\text{fwd + ref + crit\_fwd + bwd + crit\_bwd}} \cdot PL = \mathbf{14 \cdot T_{\text{PPO}} \cdot B \cdot G \cdot PL}
    \end{equation}

    \item \textbf{\color[HTML]{27AE60} ES \& \methodname}: These gradient-free methods only require forward passes for evaluation. For a population size $N$ and evaluation dataset size $D$:
    \begin{equation}
        \text{FLOPs}_{\text{ES/\methodname}} = T_{\text{ES}} \cdot N \cdot D \cdot \underbrace{2}_{\text{fwd}} \cdot PL = \mathbf{2 \cdot T_{\text{ES}} \cdot N \cdot D \cdot PL}
    \end{equation}
\end{itemize}

\subsection{Hyperparameters}\label{appendix:hyperparameters}

Appendix Table~\ref{appendix:table:hyperparams} shows the hyperparemeters across methods.
To ensure a fair comparison, we align the hyperparameters such that all methods consume equivalent total training FLOPs. We balance the batch size and iteration counts to account for algorithmic overheads. For instance, GRPO uses a larger batch size ($B=1024$) compared to PPO ($B=128$) due to the latter's additional memory cost for the critic network. Similarly, we match the total number of sample evaluations between the iterative ES (30 population $\times$ 167 steps) and the single-step \methodname\ (5000 population $\times$ 1 step). All experiments utilize bfloat16 precision with a maximum sequence length of 1024.

\begin{table}[h]
\centering
\small
\renewcommand{\arraystretch}{1.2}
\caption{Hyperparameters. '\na'\ indicates that the hyperparameter is not required for the corresponding algorithm.}
\label{appendix:table:hyperparams}

\begin{tabular}{lcccc c} 
\toprule
\textbf{Category} & \textbf{Parameter} & \textbf{GRPO} & \textbf{PPO} & \textbf{ES} & \textbf{RandOpt} \\ 
\midrule

\multirow{3}{*}{\shortstack[l]{Budget \\ \& Scale}} 
& Batch Size & 1024 & 128 & \na & \na \\
& Group/Population Size & 8 & 1 & 30 & 5000 \\
& Iterations & 200 & 600 & 167 & 1 \\
\midrule

\multirow{5}{*}{\shortstack[l]{Optimization \\ Complexity}} 
& Actor Learning Rate & 1e-6 & 1e-6 & 5e-4 ($\alpha$) & \na \\
& Critic Learning Rate & \na & 1e-5 & \na & \na \\
& Optimizer & AdamW & AdamW & - & \na \\
& KL Coefficient ($\beta$) & 0.001 & 0.001 & \na & \na \\
& Backpropagation & Yes & Yes & \na & \na \\
\midrule

\multirow{3}{*}{\shortstack[l]{Model \\ Requirements}} 
& Reference Model & Required & Required & \na & \na \\
& Critic Model & \na & Required & \na & \na \\
& Value Head & \na & Required & \na & \na \\
\midrule

\multirow{3}{*}{\shortstack[l]{Other \\ Hyperparams}} 
& Perturbation Scale ($\sigma$) & \na & \na & 0.001 & \{1, 2, 3\}\text{e-}3 \\
& Precision & bf16 & bf16 & bf16 & bf16 \\
& Max Seq. Length & 1024 & 1024 & 1024 & 1024 \\

\bottomrule
\end{tabular}
\end{table}

\section{Additional Results on 1D Signals}\label{appendix:1D_signals}

We show additional results of the experiments on 1D signals in Tables \ref{tab:1d_qualitative_examples_approximation} and \ref{tab:1d_qualitative_examples_generalization}. Table \ref{tab:1d_qualitative_examples_approximation} tests the approximation ability of \methodname: here we select perturbations based on their ability to fit a single test function (plotted in blue). Table \ref{tab:1d_qualitative_examples_generalization} instead tests generalization: we select the top-K perturbations that perform best on a post-training dataset, then plot predictions on a newly sampled test function of the same function type as the post-training type (noted in the second column). In this table, we also include mean performance over 256 randomly sampled test functions of the same type (rightmost column).

All three of the regimes we have discussed in the main paper can be observed here:

1) \textit{Needle in haystack} regime: Without pretraining (Table~\ref{tab:1d_qualitative_examples_approximation}, top two rows), the Gaussian perturbations do not sample effective models. We show this for two different weight initializations: Xavier initialization~\citep{glorot2010understanding} and Kaiming initialization~\citep{he2015delving}. In both cases, we need to ramp up the $\sigma$ value of the perturbations to see any visible effect. Nonetheless, even with large engough $\sigma$ to see interesting variation in the predictions, the variation does not contain good continuations of the test functions. 

2) \textit{Thicket} regime: With pretraining on mixed signal types (Table \ref{tab:1d_qualitative_examples_approximation}, row 3, Table \ref{tab:1d_qualitative_examples_generalization} rows 1-3), the base model fails to reliably predict the correct type given a limited context; after \methodname post-training on this type, the results improve.

3) \textit{Plateau} regime: If you pretrain on just a single function type and test on this type (Table~\ref{tab:1d_qualitative_examples_approximation} bottom row, left column; Table~\ref{tab:1d_qualitative_examples_generalization} rows 4-5), then the base model already is at ceiling performance and further adaptation is unnecessary.

\section{Additional discussion on Sandbagging}\label{app:sandbagging_analysis}

In this section, we provide a more detailed discussion on why we think the performance improvements of \methodname cannot be attributed to the alleviation of ``sandbagging.'' Sandbagging refers to a phenomenon where a model underperforms relative to its true underlying capabilities, perhaps as a side-effect of safety alignment or instruction tuning that penalizes certain types of outputs. A potential concern is that if the baseline model's performance is artificially suppressed by such alignment, any perturbation might simply bypass these constraints to reveal pre-existing latent performance.

\paragraph{Evidence from Transparent Base Models.} 
The strongest evidence against the sandbagging hypothesis comes from our results on the OLMo3-7B Base model. Unlike many proprietary models whose training recipes are unknown, OLMo's training data, code, and training pipeline are open-source. This transparency allows us to verify that the base model is not subject to explicit sandbagging.
As shown in Table~\ref{appendix::tab-acc}, \methodname is still quite effective on this model.

\paragraph{Inconsistency with Strategic Behavior in Small Models.} 
Sandbagging is generally considered a property of larger models. However, \methodname yields consistent gains across all scales. For instance, on the Qwen2.5-0.5B-Inst model, \methodname improves GSM8k performance from 39.9\% (Base) to 61.2\%.

\paragraph{Comparison with Test-Time Majority Voting (TT-MV).} 
If the models were simply sandbagging by occasionally providing wrong answers despite ``knowing'' the correct ones, test-time techniques like Majority Voting (TT-MV) might effectively recover that latent performance. Our experimental results in Table~\ref{appendix::tab-acc} show that \methodname consistently outperforms TT-MV across most benchmarks.

\section{Derivation of Spectral Discordance Bounds}
\label{app:discordance_bounds}

Here we provide the formal proof for the theoretical bounds of the Spectral Discordance metric $\mathcal{D}$ defined in the main text.

\begin{proposition}
For any valid correlation matrix $\mathbf{C} \in \mathbb{R}^{M \times M}$, the Spectral Discordance $\mathcal{D} = 1 - \frac{1}{M(M-1)} \sum_{j \neq k} \mathbf{C}_{jk}$ is bounded by:
\begin{equation}
    0 \leq \mathcal{D} \leq \frac{M}{M-1}
\end{equation}
\end{proposition}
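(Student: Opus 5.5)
The proof reduces both bounds to bounds on the single quantity $S := \sum_{j \neq k} \mathbf{C}_{jk}$. Since $\mathcal{D} = 1 - S/(M(M-1))$ is a decreasing affine function of $S$, the claim $0 \leq \mathcal{D} \leq \frac{M}{M-1}$ is equivalent to
\begin{equation*}
    -M \;\leq\; S \;\leq\; M(M-1).
\end{equation*}
We only need two defining properties of a valid correlation matrix. First, $\mathbf{C}_{jj} = 1$ and $|\mathbf{C}_{jk}| \leq 1$ for all $j,k$. Second, $\mathbf{C}$ is symmetric positive semidefinite, since it is a Gram matrix of standardized column vectors. Each bound on $S$ uses one of these properties.

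For the lower bound $\mathcal{D} \geq 0$, use entrywise boundedness. Each of the $M(M-1)$ off-diagonal entries is at most $1$, so $S \leq M(M-1)$, and hence $\mathcal{D} \geq 0$. Equality holds exactly when all columns are perfectly positively correlated, i.e.\ $\mathbf{C} = \mathbf{1}\mathbf{1}^\top$. This is the ``parallel rankings'' (generalist) case from the main text.

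For the upper bound $\mathcal{D} \leq \frac{M}{M-1}$, entrywise bounds are too weak. They would allow $S = -M(M-1)$, but $M$ variables cannot all be pairwise perfectly anti-correlated when $M > 2$. Instead, use positive semidefiniteness with the all-ones test vector:
\begin{equation*}
    0 \;\leq\; \mathbf{1}^\top \mathbf{C} \mathbf{1} \;=\; \sum_{j} \mathbf{C}_{jj} + \sum_{j \neq k} \mathbf{C}_{jk} \;=\; M + S .
\end{equation*}
This gives $S \geq -M$, and substituting, $\mathcal{D} \leq 1 + \frac{M}{M(M-1)} = \frac{M}{M-1}$. Equivalently, $\mathbf{1}^\top \mathbf{C} \mathbf{1}$ is the variance of the sum of the standardized columns, which cannot be negative.

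The upper bound is the only step that needs more than entrywise reasoning, so it is the key step. We would also check tightness, to justify calling $\frac{M}{M-1}$ the ``limit of anti-correlation.'' Equality requires $\mathbf{1} \in \ker \mathbf{C}$, i.e.\ the standardized columns sum to zero. This is achieved by $\mathbf{C}_{jk} = -\frac{1}{M-1}$ for all $j \neq k$, which is the Gram matrix of the $M$ vertices of a centered regular simplex. For $M=2$ it reduces to correlation $-1$. Finally, we note that $M \geq 2$ is implicitly required for $\mathcal{D}$ to be defined.
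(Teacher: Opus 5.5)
Your proof is correct and follows the paper's argument essentially step for step: $\mathcal{D}\ge 0$ comes from $\mathbf{C}_{jk}\le 1$, and $\mathcal{D}\le \frac{M}{M-1}$ comes from positive semidefiniteness via $\mathbf{1}^\top\mathbf{C}\mathbf{1}\ge 0$, which gives $\sum_{j\neq k}\mathbf{C}_{jk}\ge -M$. Your tightness remarks (equality at $\mathbf{C}=\mathbf{1}\mathbf{1}^\top$, and at the regular-simplex matrix with off-diagonal entries $-\frac{1}{M-1}$) match the paper's informal comments on when the bounds are attained and make them slightly more precise.
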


\begin{proof}
Let $\bar{\rho}$ denote the average off-diagonal correlation:
\begin{equation}
    \bar{\rho} = \frac{1}{M(M-1)} \sum_{j \neq k} \mathbf{C}_{jk}
\end{equation}
By definition, $\mathcal{D} = 1 - \bar{\rho}$. We analyze the bounds of $\bar{\rho}$.

\textbf{1. Lower Bound of $\mathcal{D}$ (Upper Bound of $\bar{\rho}$):}
The maximum value of any correlation coefficient is $1$. If all tasks are perfectly correlated ($\mathbf{C}_{jk} = 1, \forall j,k$), then $\bar{\rho} = 1$, yielding the minimum discordance:
\begin{equation}
    \mathcal{D}_{\min} = 1 - 1 = 0
\end{equation}
This corresponds to the \textit{Generalist} regime where all tasks ranks are identical.

\textbf{2. Upper Bound of $\mathcal{D}$ (Lower Bound of $\bar{\rho}$):}
The correlation matrix $\mathbf{C}$ must be positive semi-definite (PSD). Consider the quadratic form with the all-ones vector $\mathbf{1} \in \mathbb{R}^M$:
\begin{equation}
    \mathbf{1}^\top \mathbf{C} \mathbf{1} \geq 0
\end{equation}
Expanding this quadratic form:
\begin{equation}
    \sum_{i=1}^M \sum_{j=1}^M \mathbf{C}_{ij} = \sum_{i=1}^M \mathbf{C}_{ii} + \sum_{j \neq k} \mathbf{C}_{jk} \geq 0
\end{equation}
Since diagonal elements $\mathbf{C}_{ii} = 1$:
\begin{equation}
    M + M(M-1)\bar{\rho} \geq 0
\end{equation}
Solving for $\bar{\rho}$:
\begin{equation}
    \bar{\rho} \geq -\frac{1}{M-1}
\end{equation}
Substituting this into the definition of $\mathcal{D}$:
\begin{equation}
    \mathcal{D}_{\max} = 1 - \left( -\frac{1}{M-1} \right) = 1 + \frac{1}{M-1} = \frac{M}{M-1}
\end{equation}
This upper bound is achieved when the tasks are maximally anti-correlated (simplex structure). In our experiments ($M=7$), the theoretical maximum is $\mathcal{D} \approx 1.17$.
\end{proof}

\section{Prompts}

We set up the prompts for different datasets in our experiments following EvalScope~\citep{evalscope_2024} and Verl~\citep{sheng2024hybridflow}.

\begin{tcolorbox}[
  colback=white,
  colframe=gray!60,
  sharp corners,
  boxrule=0.8pt,
  left=6pt,
  right=6pt,
  top=8pt,
  bottom=8pt,
  width=\textwidth,
  enhanced jigsaw,
  breakable
]

\colorbox{blue!30}{\textbf{Countdown}}
\vspace{0.2cm}

\colorbox{blue!10}{{User Template:}}
\vspace{0.1cm}

Using the numbers \verb|{numbers}|, create an equation that equals \verb|{target}|. You can use basic arithmetic operations (\verb|+|, \verb|-|, \verb|*|, \verb|/|) and each number can only be used once. Show your work in \verb|<work>| \ldots \verb|</work>| tags. And return the final answer in \verb|<answer>| \ldots \verb|</answer>| tags, for example \verb|<answer> (1 + 2) / 3 </answer>|.

\vspace{0.4cm}
\colorbox{blue!30}{\textbf{GSM8k, Math-500, OlympiadBench}}

\vspace{0.2cm}
\colorbox{blue!10}{{User Template:}}
\vspace{0.1cm}

\verb|{question}|. Let's think step by step and output the final answer after
\verb|####|.

\vspace{0.4cm}
\colorbox{blue!30}{\textbf{MBPP}}

\vspace{0.2cm}
\colorbox{blue!10}{{User Template:}}
\vspace{0.1cm}

You are an expert Python programmer, and here is your task: \verb|{question}|
Your code should pass these tests: \verb|{tests}|

\vspace{0.4cm}
\colorbox{blue!30}{\textbf{USPTO-50k}}

\vspace{0.2cm}
\colorbox{blue!10}{{User Template:}}
\vspace{0.1cm}
You are an expert organic chemist. Your task is to classify chemical reactions into one of 10 standard reaction categories based on the transformation type. 
Reaction Classes:
1: Heteroatom alkylation/arylation - N, O, S attacking C (e.g., SN2, ether formation).

2: Acylation - Forming C=O bonds with N, O, S (e.g., amide, ester formation).

3: C-C bond formation - New C-C bonds (e.g., Suzuki, Heck, Grignard).

4: Heterocycle formation - Creating rings with N, O, S.

5: Protections - Adding protecting groups (Boc, Bn, TBS, etc.).

6: Deprotections - Removing protecting groups.

7: Reductions - Adding H, removing O (e.g., ketone$\to$alcohol, nitro$\to$amine)

8: Oxidations - Adding O, removing H (e.g., alcohol$\to$ketone).

9: Functional group interconversion - Changing one FG to another.

10: Functional group addition - Adding new FG to molecule (e.g., halogenation)"

Classify this reaction:
Reactants >> Product: \verb|{rxn_smiles}|
Analyze the key transformation and output the class number (1-10) in \verb|<answer>X</answer>| tags.

\vspace{0.4cm}
\colorbox{blue!30}{\textbf{ROCStories}}

\vspace{0.2cm}
\colorbox{blue!10}{{User Template:}}
\vspace{0.1cm}
Below are 5 sentences from a story, but they are in the wrong order.
Please arrange them in the correct chronological order.

Title: \verb|{title}|

Sentence A: \verb|{sentence1}|
Sentence B: \verb|{sentence2}|
Sentence C: \verb|{sentence3}|
Sentence D: \verb|{sentence4}|
Sentence E: \verb|{sentence5}|

Output the correct order as comma-separated letters (e.g., B,A,D,E,C). Only output the letters, nothing else.

\vspace{0.4cm}
\colorbox{blue!30}{\textbf{GQA}}

\vspace{0.2cm}
\colorbox{blue!10}{{User Template:}}
\vspace{0.1cm}

Look at the image and answer the question.

Question: \verb|{question}|

Please reason step by step, and put your final answer within \verb|boxed{}|.

\end{tcolorbox}

\section{Color thickets}\label{app:diffusion}

Beyond language models, we observe similar effects in diffusion models. Here, one can think of ``color thickets,'' where certain regions of parameter space preferentially generate images with specific color palettes or visual styles. These differences may not reflect fundamentally distinct generative mechanisms, yet they still produce dense clusters of high-scoring samples under color- or style-sensitive metrics.

For instance, if the evaluation rewards blue-dominant images, a region that consistently generates bluish outputs forms a “blue thicket.” More generally, thickets may arise not only from differences in reasoning, but also from biases in generative tendencies such as color, texture, or style.

\begin{table*}[htb]
\centering
\footnotesize
\renewcommand\arraystretch{0.97}
\setlength\tabcolsep{5pt}
\caption{\textbf{Experimental results on reasoning benchmarks.} We compare \methodname against RL-based methods (PPO, GRPO), Evolution Strategies (ES), Best-of-N and test-time majority voting (TT-MV) across model scales. Results are averaged over 3 runs with standard deviation shown in \textcolor{gray}{gray}. \textbf{Bold} indicates best performance, \underline{underlined} indicates runner-up. Details of the hyperparameters for all methods are provided in Appendix~\ref{appendix:hyperparameters}. 
}
\resizebox{\textwidth}{!}{
\begin{tabular}{ll *{7}{c}}
\toprule
\textbf{Model} & \textbf{Method} & \textbf{Countdown} & \textbf{GSM8k} & \textbf{MATH-500} & \textbf{OlyBench} & \textbf{MBPP} & \textbf{ROCStories} & \textbf{USPTO} \\
& & \footnotesize{(math)} $\uparrow$ & \footnotesize{(math)} $\uparrow$ & \footnotesize{(math)} $\uparrow$ & \footnotesize{(math)} $\uparrow$ & \footnotesize{(prog.)} $\uparrow$ & \footnotesize{(writing)} $\uparrow$ & \footnotesize{(chemistry)} $\uparrow$ \\
\midrule
\multirow{8}{*}{\textbf{\shortstack[l]{Qwen2.5-\\0.5B-Inst}}}
& Base & 0.1{\scriptsize\textcolor{gray}{$_{\pm\text{0.1}}$}} & 39.9{\scriptsize\textcolor{gray}{$_{\pm\text{0.0}}$}} & 19.8{\scriptsize\textcolor{gray}{$_{\pm\text{0.0}}$}} & 4.2{\scriptsize\textcolor{gray}{$_{\pm\text{0.4}}$}} & 30.9{\scriptsize\textcolor{gray}{$_{\pm\text{0.2}}$}} & 22.0{\scriptsize\textcolor{gray}{$_{\pm\text{0.0}}$}} & 29.9{\scriptsize\textcolor{gray}{$_{\pm\text{0.3}}$}} \\
& TT-MV\textsuperscript{\dag} & 0.3{\scriptsize\textcolor{gray}{$_{\pm\text{0.3}}$}} & 41.0{\scriptsize\textcolor{gray}{$_{\pm\text{0.1}}$}} & 29.2{\scriptsize\textcolor{gray}{$_{\pm\text{0.5}}$}} & 6.7{\scriptsize\textcolor{gray}{$_{\pm\text{0.2}}$}} & 37.0{\scriptsize\textcolor{gray}{$_{\pm\text{0.3}}$}} & 19.5{\scriptsize\textcolor{gray}{$_{\pm\text{0.2}}$}} & 30.2{\scriptsize\textcolor{gray}{$_{\pm\text{0.2}}$}} \\
& Best-of-N\textsuperscript{\ddag} & 5.9{\scriptsize\textcolor{gray}{$_{\pm\text{2.0}}$}}  & 40.5{\scriptsize\textcolor{gray}{$_{\pm\text{1.4}}$}} & 34.5{\scriptsize\textcolor{gray}{$_{\pm\text{0.5}}$}} & 10.5{\scriptsize\textcolor{gray}{$_{\pm\text{0.3}}$}} & 34.9{\scriptsize\textcolor{gray}{$_{\pm\text{1.3}}$}} & 20.3{\scriptsize\textcolor{gray}{$_{\pm\text{0.3}}$}} & 26.8{\scriptsize\textcolor{gray}{$_{\pm\text{1.2}}$}} \\
& PPO & \underline{14.8}{\scriptsize\textcolor{gray}{$_{\pm\text{2.8}}$}} & 43.2{\scriptsize\textcolor{gray}{$_{\pm\text{1.2}}$}} & 33.4{\scriptsize\textcolor{gray}{$_{\pm\text{1.7}}$}} & \underline{16.1}{\scriptsize\textcolor{gray}{$_{\pm\text{0.0}}$}} & 37.8{\scriptsize\textcolor{gray}{$_{\pm\text{3.0}}$}} & 19.1{\scriptsize\textcolor{gray}{$_{\pm\text{0.8}}$}} & 31.2{\scriptsize\textcolor{gray}{$_{\pm\text{0.6}}$}} \\
& GRPO & 13.0{\scriptsize\textcolor{gray}{$_{\pm\text{0.0}}$}} & 48.4{\scriptsize\textcolor{gray}{$_{\pm\text{0.8}}$}} & 33.7{\scriptsize\textcolor{gray}{$_{\pm\text{0.9}}$}} & 6.9{\scriptsize\textcolor{gray}{$_{\pm\text{0.1}}$}} & 42.8{\scriptsize\textcolor{gray}{$_{\pm\text{3.7}}$}} & 30.9{\scriptsize\textcolor{gray}{$_{\pm\text{1.1}}$}} & 31.7{\scriptsize\textcolor{gray}{$_{\pm\text{0.2}}$}} \\
& ES & \textbf{14.9}{\scriptsize\textcolor{gray}{$_{\pm\text{1.6}}$}} & 42.6{\scriptsize\textcolor{gray}{$_{\pm\text{0.7}}$}} & 30.5{\scriptsize\textcolor{gray}{$_{\pm\text{0.7}}$}} & \textbf{16.4}{\scriptsize\textcolor{gray}{$_{\pm\text{0.2}}$}} & 45.1{\scriptsize\textcolor{gray}{$_{\pm\text{3.8}}$}} & \underline{32.0}{\scriptsize\textcolor{gray}{$_{\pm\text{0.8}}$}} & 31.6{\scriptsize\textcolor{gray}{$_{\pm\text{0.3}}$}} \\
\cmidrule(lr){2-9}
& \methodname & 8.4{\scriptsize\textcolor{gray}{$_{\pm\text{0.3}}$}} & \underline{54.1}{\scriptsize\textcolor{gray}{$_{\pm\text{0.8}}$}} & \underline{35.3}{\scriptsize\textcolor{gray}{$_{\pm\text{0.6}}$}} & 15.8{\scriptsize\textcolor{gray}{$_{\pm\text{0.7}}$}} & \underline{46.2}{\scriptsize\textcolor{gray}{$_{\pm\text{0.4}}$}} & \textbf{32.2}{\scriptsize\textcolor{gray}{$_{\pm\text{0.4}}$}} & \underline{32.2}{\scriptsize\textcolor{gray}{$_{\pm\text{0.9}}$}} \\
& ES + TT-MV & 11.2{\scriptsize\textcolor{gray}{$_{\pm\text{0.8}}$}} & \textbf{61.2}{\scriptsize\textcolor{gray}{$_{\pm\text{0.5}}$}} & \textbf{41.3}{\scriptsize\textcolor{gray}{$_{\pm\text{0.5}}$}} & 13.5{\scriptsize\textcolor{gray}{$_{\pm\text{0.7}}$}} & \textbf{48.9}{\scriptsize\textcolor{gray}{$_{\pm\text{0.7}}$}} & 30.5{\scriptsize\textcolor{gray}{$_{\pm\text{0.9}}$}} & \textbf{32.5}{\scriptsize\textcolor{gray}{$_{\pm\text{0.4}}$}} \\
\midrule
\multirow{8}{*}{\textbf{\shortstack[l]{Qwen2.5-\\1.5B-Inst}}}
& Base & 6.7{\scriptsize\textcolor{gray}{$_{\pm\text{0.1}}$}} & 58.8{\scriptsize\textcolor{gray}{$_{\pm\text{0.2}}$}} & 43.2{\scriptsize\textcolor{gray}{$_{\pm\text{0.1}}$}} & 13.4{\scriptsize\textcolor{gray}{$_{\pm\text{0.4}}$}} & 62.3{\scriptsize\textcolor{gray}{$_{\pm\text{0.2}}$}} & 46.7{\scriptsize\textcolor{gray}{$_{\pm\text{0.1}}$}} & 30.2{\scriptsize\textcolor{gray}{$_{\pm\text{0.5}}$}} \\
& TT-MV\textsuperscript{\dag} & 30.8{\scriptsize\textcolor{gray}{$_{\pm\text{0.4}}$}} & 69.1{\scriptsize\textcolor{gray}{$_{\pm\text{0.3}}$}} & 50.0{\scriptsize\textcolor{gray}{$_{\pm\text{0.4}}$}} & 15.9{\scriptsize\textcolor{gray}{$_{\pm\text{0.5}}$}} & 68.0{\scriptsize\textcolor{gray}{$_{\pm\text{0.3}}$}} & 43.5{\scriptsize\textcolor{gray}{$_{\pm\text{0.5}}$}} & 33.0{\scriptsize\textcolor{gray}{$_{\pm\text{0.4}}$}} \\
& Best-of-N\textsuperscript{\ddag} & 19.5{\scriptsize\textcolor{gray}{$_{\pm\text{0.2}}$}} & 65.4{\scriptsize\textcolor{gray}{$_{\pm\text{1.9}}$}} & 53.0{\scriptsize\textcolor{gray}{$_{\pm\text{0.6}}$}} & 20.0{\scriptsize\textcolor{gray}{$_{\pm\text{0.3}}$}} & 69.5{\scriptsize\textcolor{gray}{$_{\pm\text{0.3}}$}} & 42.2{\scriptsize\textcolor{gray}{$_{\pm\text{0.8}}$}} & 32.7{\scriptsize\textcolor{gray}{$_{\pm\text{1.1}}$}} \\
& PPO & 27.0{\scriptsize\textcolor{gray}{$_{\pm\text{0.0}}$}} & 71.6{\scriptsize\textcolor{gray}{$_{\pm\text{0.7}}$}} & 55.9{\scriptsize\textcolor{gray}{$_{\pm\text{0.3}}$}} & 26.3{\scriptsize\textcolor{gray}{$_{\pm\text{0.1}}$}} & 68.5{\scriptsize\textcolor{gray}{$_{\pm\text{0.4}}$}} & 51.8{\scriptsize\textcolor{gray}{$_{\pm\text{0.8}}$}} & 31.9{\scriptsize\textcolor{gray}{$_{\pm\text{0.2}}$}} \\
& GRPO & 27.5{\scriptsize\textcolor{gray}{$_{\pm\text{0.7}}$}} & 72.1{\scriptsize\textcolor{gray}{$_{\pm\text{0.7}}$}} & 54.1{\scriptsize\textcolor{gray}{$_{\pm\text{0.5}}$}} & 18.8{\scriptsize\textcolor{gray}{$_{\pm\text{0.8}}$}} & \textbf{70.2}{\scriptsize\textcolor{gray}{$_{\pm\text{0.4}}$}} & {53.6}{\scriptsize\textcolor{gray}{$_{\pm\text{1.3}}$}} & 31.8{\scriptsize\textcolor{gray}{$_{\pm\text{0.0}}$}} \\
& ES & \underline{44.2}{\scriptsize\textcolor{gray}{$_{\pm\text{0.0}}$}} & 71.7{\scriptsize\textcolor{gray}{$_{\pm\text{0.9}}$}} & 54.1{\scriptsize\textcolor{gray}{$_{\pm\text{2.8}}$}} & 27.2{\scriptsize\textcolor{gray}{$_{\pm\text{1.2}}$}} & 69.9{\scriptsize\textcolor{gray}{$_{\pm\text{0.6}}$}} & \textbf{60.2}{\scriptsize\textcolor{gray}{$_{\pm\text{0.6}}$}} & 32.0{\scriptsize\textcolor{gray}{$_{\pm\text{0.2}}$}} \\
\cmidrule(lr){2-9}
& \methodname & \textbf{52.7}{\scriptsize\textcolor{gray}{$_{\pm\text{0.5}}$}} & \underline{76.4}{\scriptsize\textcolor{gray}{$_{\pm\text{0.3}}$}} & \underline{59.7}{\scriptsize\textcolor{gray}{$_{\pm\text{0.6}}$}} & \textbf{30.4}{\scriptsize\textcolor{gray}{$_{\pm\text{0.7}}$}} & {69.6}{\scriptsize\textcolor{gray}{$_{\pm\text{0.5}}$}} & 48.5{\scriptsize\textcolor{gray}{$_{\pm\text{0.7}}$}} & \textbf{34.3}{\scriptsize\textcolor{gray}{$_{\pm\text{0.5}}$}} \\
& ES + TT-MV & 39.7{\scriptsize\textcolor{gray}{$_{\pm\text{0.3}}$}} & \textbf{80.4}{\scriptsize\textcolor{gray}{$_{\pm\text{0.8}}$}} & \textbf{60.7}{\scriptsize\textcolor{gray}{$_{\pm\text{0.7}}$}} & \underline{28.9}{\scriptsize\textcolor{gray}{$_{\pm\text{0.7}}$}} & \textbf{70.2}{\scriptsize\textcolor{gray}{$_{\pm\text{0.6}}$}} & \underline{59.1}{\scriptsize\textcolor{gray}{$_{\pm\text{0.2}}$}} & \underline{32.2}{\scriptsize\textcolor{gray}{$_{\pm\text{0.6}}$}} \\
\midrule
\multirow{8}{*}{\textbf{\shortstack[l]{Qwen2.5-\\3B-Inst}}}
& Base & 10.0{\scriptsize\textcolor{gray}{$_{\pm\text{0.1}}$}} & 79.8{\scriptsize\textcolor{gray}{$_{\pm\text{0.4}}$}} & 58.6{\scriptsize\textcolor{gray}{$_{\pm\text{0.2}}$}} & 24.5{\scriptsize\textcolor{gray}{$_{\pm\text{0.2}}$}} & 69.5{\scriptsize\textcolor{gray}{$_{\pm\text{0.3}}$}} & 54.7{\scriptsize\textcolor{gray}{$_{\pm\text{0.1}}$}} & 38.5{\scriptsize\textcolor{gray}{$_{\pm\text{0.5}}$}} \\
& TT-MV\textsuperscript{\dag} & 12.8{\scriptsize\textcolor{gray}{$_{\pm\text{0.4}}$}} & 82.5{\scriptsize\textcolor{gray}{$_{\pm\text{0.2}}$}} & 60.8{\scriptsize\textcolor{gray}{$_{\pm\text{0.2}}$}} & 21.8{\scriptsize\textcolor{gray}{$_{\pm\text{0.3}}$}} & 74.5{\scriptsize\textcolor{gray}{$_{\pm\text{0.5}}$}} & \underline{57.3}{\scriptsize\textcolor{gray}{$_{\pm\text{0.4}}$}} & 43.2{\scriptsize\textcolor{gray}{$_{\pm\text{0.7}}$}} \\
& Best-of-N\textsuperscript{\ddag} & 28.5{\scriptsize\textcolor{gray}{$_{\pm\text{1.2}}$}} & 83.3{\scriptsize\textcolor{gray}{$_{\pm\text{1.4}}$}} & 62.5{\scriptsize\textcolor{gray}{$_{\pm\text{0.8}}$}} & 28.0{\scriptsize\textcolor{gray}{$_{\pm\text{0.6}}$}} & 73.0{\scriptsize\textcolor{gray}{$_{\pm\text{0.9}}$}} & 55.0{\scriptsize\textcolor{gray}{$_{\pm\text{0.7}}$}} & 44.3{\scriptsize\textcolor{gray}{$_{\pm\text{1.0}}$}} \\
& PPO & 35.3{\scriptsize\textcolor{gray}{$_{\pm\text{0.4}}$}} & 83.1{\scriptsize\textcolor{gray}{$_{\pm\text{0.2}}$}} & 64.1{\scriptsize\textcolor{gray}{$_{\pm\text{1.1}}$}} & 34.4{\scriptsize\textcolor{gray}{$_{\pm\text{0.2}}$}} & 76.3{\scriptsize\textcolor{gray}{$_{\pm\text{1.0}}$}} & 49.0{\scriptsize\textcolor{gray}{$_{\pm\text{0.6}}$}} & 44.7{\scriptsize\textcolor{gray}{$_{\pm\text{5.8}}$}} \\
& GRPO & 32.6{\scriptsize\textcolor{gray}{$_{\pm\text{0.1}}$}} & 83.2{\scriptsize\textcolor{gray}{$_{\pm\text{0.2}}$}} & 64.6{\scriptsize\textcolor{gray}{$_{\pm\text{1.0}}$}} & 29.0{\scriptsize\textcolor{gray}{$_{\pm\text{0.0}}$}} & \underline{77.0}{\scriptsize\textcolor{gray}{$_{\pm\text{0.9}}$}} & 56.3{\scriptsize\textcolor{gray}{$_{\pm\text{4.4}}$}} & \underline{49.7}{\scriptsize\textcolor{gray}{$_{\pm\text{2.1}}$}} \\
& ES & 55.6{\scriptsize\textcolor{gray}{$_{\pm\text{0.5}}$}} & 85.8{\scriptsize\textcolor{gray}{$_{\pm\text{5.1}}$}} & 61.9{\scriptsize\textcolor{gray}{$_{\pm\text{0.3}}$}} & 36.4{\scriptsize\textcolor{gray}{$_{\pm\text{0.1}}$}} & \textbf{77.2}{\scriptsize\textcolor{gray}{$_{\pm\text{1.2}}$}} & \textbf{64.5}{\scriptsize\textcolor{gray}{$_{\pm\text{0.6}}$}} & \textbf{52.9}{\scriptsize\textcolor{gray}{$_{\pm\text{1.0}}$}} \\
\cmidrule(lr){2-9}
& \methodname & \underline{58.4}{\scriptsize\textcolor{gray}{$_{\pm\text{0.2}}$}} & \underline{87.1}{\scriptsize\textcolor{gray}{$_{\pm\text{0.8}}$}} & \textbf{68.7}{\scriptsize\textcolor{gray}{$_{\pm\text{0.7}}$}} & \underline{39.2}{\scriptsize\textcolor{gray}{$_{\pm\text{0.6}}$}} & 75.9{\scriptsize\textcolor{gray}{$_{\pm\text{0.6}}$}} & 56.5{\scriptsize\textcolor{gray}{$_{\pm\text{0.3}}$}} & 42.3{\scriptsize\textcolor{gray}{$_{\pm\text{0.3}}$}} \\
& ES + TT-MV & \textbf{61.9}{\scriptsize\textcolor{gray}{$_{\pm\text{0.8}}$}} & \textbf{87.9}{\scriptsize\textcolor{gray}{$_{\pm\text{0.9}}$}} & \underline{67.7}{\scriptsize\textcolor{gray}{$_{\pm\text{0.7}}$}} & \textbf{39.7}{\scriptsize\textcolor{gray}{$_{\pm\text{0.4}}$}} & 76.3{\scriptsize\textcolor{gray}{$_{\pm\text{1.1}}$}} & 55.0{\scriptsize\textcolor{gray}{$_{\pm\text{0.4}}$}} & 39.8{\scriptsize\textcolor{gray}{$_{\pm\text{0.3}}$}} \\
\midrule
\multirow{8}{*}{\textbf{\shortstack[l]{OLMo3-\\7B-Inst}}}
& Base & 64.8{\scriptsize\textcolor{gray}{$_{\pm\text{0.2}}$}} & 82.9{\scriptsize\textcolor{gray}{$_{\pm\text{0.4}}$}} & 60.6{\scriptsize\textcolor{gray}{$_{\pm\text{0.1}}$}} & 28.7{\scriptsize\textcolor{gray}{$_{\pm\text{0.1}}$}} & 65.9{\scriptsize\textcolor{gray}{$_{\pm\text{0.2}}$}} & 64.0{\scriptsize\textcolor{gray}{$_{\pm\text{0.3}}$}} & 27.2{\scriptsize\textcolor{gray}{$_{\pm\text{0.4}}$}} \\
& TT-MV\textsuperscript{\dag} & 66.8{\scriptsize\textcolor{gray}{$_{\pm\text{0.3}}$}} & 81.4{\scriptsize\textcolor{gray}{$_{\pm\text{0.4}}$}} & 61.5{\scriptsize\textcolor{gray}{$_{\pm\text{0.5}}$}} & 26.5{\scriptsize\textcolor{gray}{$_{\pm\text{0.4}}$}} & 60.5{\scriptsize\textcolor{gray}{$_{\pm\text{0.5}}$}} & 63.2{\scriptsize\textcolor{gray}{$_{\pm\text{0.4}}$}} & 30.2{\scriptsize\textcolor{gray}{$_{\pm\text{0.5}}$}} \\
& Best-of-N\textsuperscript{\ddag} & 67.5{\scriptsize\textcolor{gray}{$_{\pm\text{0.8}}$}} & 85.0{\scriptsize\textcolor{gray}{$_{\pm\text{1.2}}$}} & 63.0{\scriptsize\textcolor{gray}{$_{\pm\text{0.7}}$}} & 30.5{\scriptsize\textcolor{gray}{$_{\pm\text{0.5}}$}} & 64.0{\scriptsize\textcolor{gray}{$_{\pm\text{0.9}}$}} & 63.5{\scriptsize\textcolor{gray}{$_{\pm\text{0.6}}$}} & 32.0{\scriptsize\textcolor{gray}{$_{\pm\text{1.0}}$}} \\
& PPO & 69.0{\scriptsize\textcolor{gray}{$_{\pm\text{0.1}}$}} & 88.4{\scriptsize\textcolor{gray}{$_{\pm\text{0.4}}$}} & 63.1{\scriptsize\textcolor{gray}{$_{\pm\text{0.3}}$}} & 28.0{\scriptsize\textcolor{gray}{$_{\pm\text{0.2}}$}} & 67.7{\scriptsize\textcolor{gray}{$_{\pm\text{0.2}}$}} & 64.7{\scriptsize\textcolor{gray}{$_{\pm\text{1.6}}$}} & 40.2{\scriptsize\textcolor{gray}{$_{\pm\text{3.2}}$}} \\
& GRPO & 68.5{\scriptsize\textcolor{gray}{$_{\pm\text{0.7}}$}} & 87.0{\scriptsize\textcolor{gray}{$_{\pm\text{0.2}}$}} & 63.5{\scriptsize\textcolor{gray}{$_{\pm\text{0.4}}$}} & 27.9{\scriptsize\textcolor{gray}{$_{\pm\text{0.6}}$}} & 70.8{\scriptsize\textcolor{gray}{$_{\pm\text{2.2}}$}} & \underline{65.8}{\scriptsize\textcolor{gray}{$_{\pm\text{0.6}}$}} & \textbf{51.0}{\scriptsize\textcolor{gray}{$_{\pm\text{0.3}}$}} \\
& ES & 71.0{\scriptsize\textcolor{gray}{$_{\pm\text{0.8}}$}} & 87.2{\scriptsize\textcolor{gray}{$_{\pm\text{0.2}}$}} & \underline{69.9}{\scriptsize\textcolor{gray}{$_{\pm\text{1.0}}$}} & 33.1 {\scriptsize\textcolor{gray}{$_{\pm\text{0.8}}$}} & 72.0{\scriptsize\textcolor{gray}{$_{\pm\text{0.5}}$}} & 65.7{\scriptsize\textcolor{gray}{$_{\pm\text{1.4}}$}} & 45.2{\scriptsize\textcolor{gray}{$_{\pm\text{1.0}}$}} \\
\cmidrule(lr){2-9}
& \methodname & \textbf{85.0}{\scriptsize\textcolor{gray}{$_{\pm\text{0.7}}$}} & \underline{89.5}{\scriptsize\textcolor{gray}{$_{\pm\text{0.2}}$}} & \textbf{73.7}{\scriptsize\textcolor{gray}{$_{\pm\text{0.4}}$}} & \underline{35.4}{\scriptsize\textcolor{gray}{$_{\pm\text{0.4}}$}} & \textbf{75.1}{\scriptsize\textcolor{gray}{$_{\pm\text{0.9}}$}} & 64.5{\scriptsize\textcolor{gray}{$_{\pm\text{0.3}}$}} & 43.0{\scriptsize\textcolor{gray}{$_{\pm\text{0.5}}$}} \\
& ES + TT-MV & \underline{75.6}{\scriptsize\textcolor{gray}{$_{\pm\text{0.7}}$}} & \textbf{90.2}{\scriptsize\textcolor{gray}{$_{\pm\text{0.6}}$}} & 62.0 {\scriptsize\textcolor{gray}{$_{\pm\text{1.1}}$}} & \textbf{44.7}{\scriptsize\textcolor{gray}{$_{\pm\text{0.3}}$}} & \underline{72.5}{\scriptsize\textcolor{gray}{$_{\pm\text{0.6}}$}} & \textbf{66.0}{\scriptsize\textcolor{gray}{$_{\pm\text{0.8}}$}} & \underline{46.3}{\scriptsize\textcolor{gray}{$_{\pm\text{0.6}}$}} \\
\midrule
\multirow{8}{*}{\textbf{\shortstack[l]{OLMo3-\\7B}}}
& Base & 9.8{\scriptsize\textcolor{gray}{$_{\pm\text{0.4}}$}} & 78.5{\scriptsize\textcolor{gray}{$_{\pm\text{0.4}}$}} & 31.3{\scriptsize\textcolor{gray}{$_{\pm\text{0.3}}$}} & 13.9{\scriptsize\textcolor{gray}{$_{\pm\text{0.3}}$}} & 29.1{\scriptsize\textcolor{gray}{$_{\pm\text{0.3}}$}} & 24.1{\scriptsize\textcolor{gray}{$_{\pm\text{0.1}}$}} & 29.0{\scriptsize\textcolor{gray}{$_{\pm\text{0.3}}$}} \\
& TT-MV\textsuperscript{\dag} & 11.5{\scriptsize\textcolor{gray}{$_{\pm\text{0.3}}$}} & 79.5{\scriptsize\textcolor{gray}{$_{\pm\text{0.2}}$}} & 36.2{\scriptsize\textcolor{gray}{$_{\pm\text{0.2}}$}} & 15.7{\scriptsize\textcolor{gray}{$_{\pm\text{0.2}}$}} & 39.1{\scriptsize\textcolor{gray}{$_{\pm\text{0.3}}$}} & 42.2{\scriptsize\textcolor{gray}{$_{\pm\text{0.2}}$}} & 35.4{\scriptsize\textcolor{gray}{$_{\pm\text{0.2}}$}} \\
& Best-of-N\textsuperscript{\ddag} & 18.0{\scriptsize\textcolor{gray}{$_{\pm\text{0.5}}$}} & 82.0{\scriptsize\textcolor{gray}{$_{\pm\text{1.0}}$}} & 45.0{\scriptsize\textcolor{gray}{$_{\pm\text{0.8}}$}} & 20.5{\scriptsize\textcolor{gray}{$_{\pm\text{0.4}}$}} & 38.0{\scriptsize\textcolor{gray}{$_{\pm\text{1.2}}$}} & 50.0{\scriptsize\textcolor{gray}{$_{\pm\text{0.8}}$}} & 38.0{\scriptsize\textcolor{gray}{$_{\pm\text{0.9}}$}} \\
& PPO & 21.9{\scriptsize\textcolor{gray}{$_{\pm\text{0.7}}$}} & 82.8{\scriptsize\textcolor{gray}{$_{\pm\text{0.9}}$}} & 51.8{\scriptsize\textcolor{gray}{$_{\pm\text{0.7}}$}} & 22.9{\scriptsize\textcolor{gray}{$_{\pm\text{0.5}}$}} & 57.9{\scriptsize\textcolor{gray}{$_{\pm\text{2.0}}$}} & \underline{64.8}{\scriptsize\textcolor{gray}{$_{\pm\text{0.3}}$}} & \textbf{49.8}{\scriptsize\textcolor{gray}{$_{\pm\text{0.3}}$}} \\
& GRPO & \underline{28.8}{\scriptsize\textcolor{gray}{$_{\pm\text{0.1}}$}} & 78.2{\scriptsize\textcolor{gray}{$_{\pm\text{0.3}}$}} & 52.0{\scriptsize\textcolor{gray}{$_{\pm\text{0.9}}$}} & 6.9{\scriptsize\textcolor{gray}{$_{\pm\text{0.6}}$}} & \underline{58.5}{\scriptsize\textcolor{gray}{$_{\pm\text{2.8}}$}} & 62.2{\scriptsize\textcolor{gray}{$_{\pm\text{2.8}}$}} & 48.0{\scriptsize\textcolor{gray}{$_{\pm\text{0.5}}$}} \\
& ES & 26.0{\scriptsize\textcolor{gray}{$_{\pm\text{0.3}}$}} & \textbf{89.1}{\scriptsize\textcolor{gray}{$_{\pm\text{0.6}}$}} & \underline{61.0}{\scriptsize\textcolor{gray}{$_{\pm\text{4.9}}$}} & \underline{30.2}{\scriptsize\textcolor{gray}{$_{\pm\text{0.4}}$}} & \textbf{61.8}{\scriptsize\textcolor{gray}{$_{\pm\text{2.3}}$}} & 64.4{\scriptsize\textcolor{gray}{$_{\pm\text{1.0}}$}} & \underline{48.1}{\scriptsize\textcolor{gray}{$_{\pm\text{1.5}}$}} \\
\cmidrule(lr){2-9}
& \methodname & \textbf{30.2}{\scriptsize\textcolor{gray}{$_{\pm\text{0.2}}$}} & 85.0{\scriptsize\textcolor{gray}{$_{\pm\text{0.3}}$}} & 59.3{\scriptsize\textcolor{gray}{$_{\pm\text{0.5}}$}} & 28.9{\scriptsize\textcolor{gray}{$_{\pm\text{0.5}}$}} & 40.5{\scriptsize\textcolor{gray}{$_{\pm\text{0.2}}$}} & 64.5{\scriptsize\textcolor{gray}{$_{\pm\text{0.3}}$}} & 44.3{\scriptsize\textcolor{gray}{$_{\pm\text{0.3}}$}} \\
& ES + TT-MV & 23.6{\scriptsize\textcolor{gray}{$_{\pm\text{0.2}}$}} & \underline{86.4}{\scriptsize\textcolor{gray}{$_{\pm\text{0.5}}$}} & \textbf{69.0}{\scriptsize\textcolor{gray}{$_{\pm\text{0.5}}$}} & \textbf{43.3}{\scriptsize\textcolor{gray}{$_{\pm\text{0.2}}$}} & 56.8{\scriptsize\textcolor{gray}{$_{\pm\text{0.6}}$}} & \textbf{76.3}{\scriptsize\textcolor{gray}{$_{\pm\text{0.6}}$}} & 38.1{\scriptsize\textcolor{gray}{$_{\pm\text{0.8}}$}} \\
\midrule
\multirow{8}{*}{\textbf{\shortstack[l]{Llama3.1-\\8B-Inst}}}
& Base & 10.8{\scriptsize\textcolor{gray}{$_{\pm\text{0.2}}$}} & 79.8{\scriptsize\textcolor{gray}{$_{\pm\text{0.3}}$}} & 47.0{\scriptsize\textcolor{gray}{$_{\pm\text{0.0}}$}} & 19.2{\scriptsize\textcolor{gray}{$_{\pm\text{0.3}}$}} & 56.4{\scriptsize\textcolor{gray}{$_{\pm\text{0.0}}$}} & 51.8{\scriptsize\textcolor{gray}{$_{\pm\text{0.0}}$}} & 19.5{\scriptsize\textcolor{gray}{$_{\pm\text{0.3}}$}} \\
& TT-MV\textsuperscript{\dag} & 25.6{\scriptsize\textcolor{gray}{$_{\pm\text{0.3}}$}} & 72.2{\scriptsize\textcolor{gray}{$_{\pm\text{0.4}}$}} & 49.2{\scriptsize\textcolor{gray}{$_{\pm\text{0.3}}$}} & \underline{24.5}{\scriptsize\textcolor{gray}{$_{\pm\text{0.5}}$}} & 59.5{\scriptsize\textcolor{gray}{$_{\pm\text{0.4}}$}} & 53.2{\scriptsize\textcolor{gray}{$_{\pm\text{0.4}}$}} & 25.4{\scriptsize\textcolor{gray}{$_{\pm\text{0.2}}$}} \\
& Best-of-N\textsuperscript{\ddag} & 40.0{\scriptsize\textcolor{gray}{$_{\pm\text{1.0}}$}} & 83.5{\scriptsize\textcolor{gray}{$_{\pm\text{1.2}}$}} & 52.0{\scriptsize\textcolor{gray}{$_{\pm\text{0.9}}$}} & 23.0{\scriptsize\textcolor{gray}{$_{\pm\text{0.6}}$}} & 60.0{\scriptsize\textcolor{gray}{$_{\pm\text{1.0}}$}} & 54.0{\scriptsize\textcolor{gray}{$_{\pm\text{0.8}}$}} & 28.0{\scriptsize\textcolor{gray}{$_{\pm\text{1.1}}$}} \\
& PPO & 9.9{\scriptsize\textcolor{gray}{$_{\pm\text{0.1}}$}} & 81.6{\scriptsize\textcolor{gray}{$_{\pm\text{4.7}}$}} & 45.5{\scriptsize\textcolor{gray}{$_{\pm\text{2.0}}$}} & 16.1{\scriptsize\textcolor{gray}{$_{\pm\text{0.6}}$}} & 55.2{\scriptsize\textcolor{gray}{$_{\pm\text{1.5}}$}} & 57.8{\scriptsize\textcolor{gray}{$_{\pm\text{1.3}}$}} & 32.9{\scriptsize\textcolor{gray}{$_{\pm\text{4.6}}$}} \\
& GRPO & 10.0{\scriptsize\textcolor{gray}{$_{\pm\text{0.2}}$}} & 80.2{\scriptsize\textcolor{gray}{$_{\pm\text{1.1}}$}} & 45.1{\scriptsize\textcolor{gray}{$_{\pm\text{1.4}}$}} & 23.7{\scriptsize\textcolor{gray}{$_{\pm\text{0.9}}$}} & 61.0{\scriptsize\textcolor{gray}{$_{\pm\text{1.9}}$}} & 62.2{\scriptsize\textcolor{gray}{$_{\pm\text{2.8}}$}} & 35.0{\scriptsize\textcolor{gray}{$_{\pm\text{0.7}}$}} \\
& ES & 60.3{\scriptsize\textcolor{gray}{$_{\pm\text{0.5}}$}} & 82.4{\scriptsize\textcolor{gray}{$_{\pm\text{0.7}}$}} & 39.1{\scriptsize\textcolor{gray}{$_{\pm\text{1.7}}$}} & 22.3{\scriptsize\textcolor{gray}{$_{\pm\text{0.5}}$}} & \underline{64.8}{\scriptsize\textcolor{gray}{$_{\pm\text{0.8}}$}} & \textbf{68.1}{\scriptsize\textcolor{gray}{$_{\pm\text{5.5}}$}} & 34.1{\scriptsize\textcolor{gray}{$_{\pm\text{2.0}}$}} \\
\cmidrule(lr){2-9}
& \methodname & \textbf{63.6}{\scriptsize\textcolor{gray}{$_{\pm\text{0.4}}$}} & \underline{86.7}{\scriptsize\textcolor{gray}{$_{\pm\text{0.6}}$}} & \underline{59.5}{\scriptsize\textcolor{gray}{$_{\pm\text{0.8}}$}} & \textbf{32.1}{\scriptsize\textcolor{gray}{$_{\pm\text{0.0}}$}} & \textbf{65.2}{\scriptsize\textcolor{gray}{$_{\pm\text{0.9}}$}} & 59.0{\scriptsize\textcolor{gray}{$_{\pm\text{0.7}}$}} & \underline{41.0}{\scriptsize\textcolor{gray}{$_{\pm\text{0.6}}$}} \\
& ES + TT-MV & \underline{62.5}{\scriptsize\textcolor{gray}{$_{\pm\text{0.9}}$}} & \textbf{87.5}{\scriptsize\textcolor{gray}{$_{\pm\text{0.6}}$}} & \textbf{62.0}{\scriptsize\textcolor{gray}{$_{\pm\text{0.7}}$}} & \textbf{32.1}{\scriptsize\textcolor{gray}{$_{\pm\text{0.5}}$}} & 63.1{\scriptsize\textcolor{gray}{$_{\pm\text{0.8}}$}} & \underline{65.2}{\scriptsize\textcolor{gray}{$_{\pm\text{0.7}}$}} & \textbf{64.5}{\scriptsize\textcolor{gray}{$_{\pm\text{0.7}}$}} \\
\bottomrule
\end{tabular}
}
\vspace{10mm}
\parbox{0.9\textwidth}{\footnotesize \textsuperscript{\dag}TT-MV: Test-Time Majority Vote over test samples from a single trained model with different seeds.

\textsuperscript{\ddag}Best-of-N: Pass@k metric with k=50.}
\label{appendix::tab-acc}
\end{table*}

\begin{table*}[htb]
\centering
\footnotesize
\renewcommand\arraystretch{1.15}
\setlength\tabcolsep{6pt}
\caption{\textbf{\methodname on 1D signals (approximation).}
Each cell corresponds to a pretraining–post-training pair. In this experiment we plot results on the same function that we fit to during post-training; this is a test of apprpoximation only, not generalization to new functions. We also compare to Xavier initialization~\citep{glorot2010understanding} and Kaiming initialization~\cite{he2015delving}. For each pretraining method, we pick a perturbation noise scale, $\sigma$ that is large enough to show functional variation; this value is shown underneath the pretraining method name.
}
\resizebox{\textwidth}{!}{
\begin{tabular}{c c c c}
\toprule
\diagbox{\textbf{Pretraining}}{\textbf{Post-training}}
& One linear function
& One square wave
& One sinusoid \\

\begin{tabular}[c]{@{}c@{}}
None (Xavier init)\\
$\sigma = 0.05$
\end{tabular}
&
\begin{minipage}[c]{0.28\textwidth}\centering
  \includegraphics[width=\linewidth]{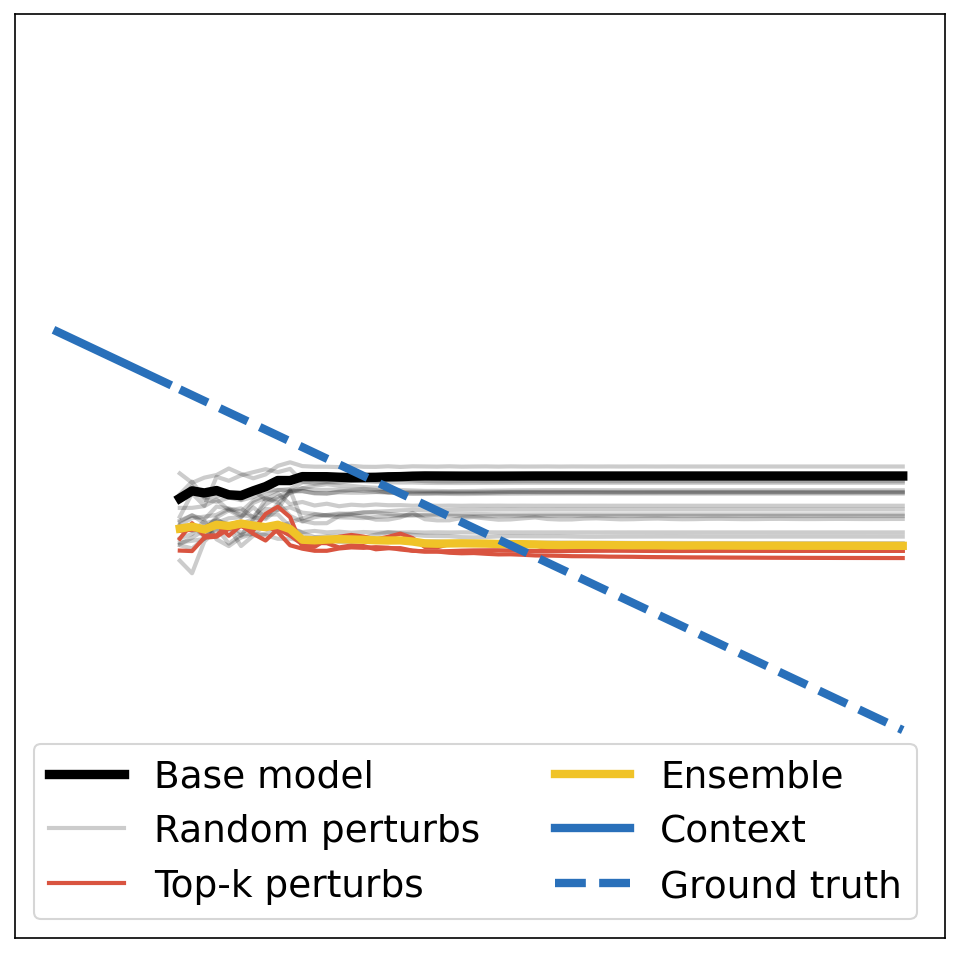}
\end{minipage}
&
\begin{minipage}[c]{0.28\textwidth}\centering
  \includegraphics[width=\linewidth]{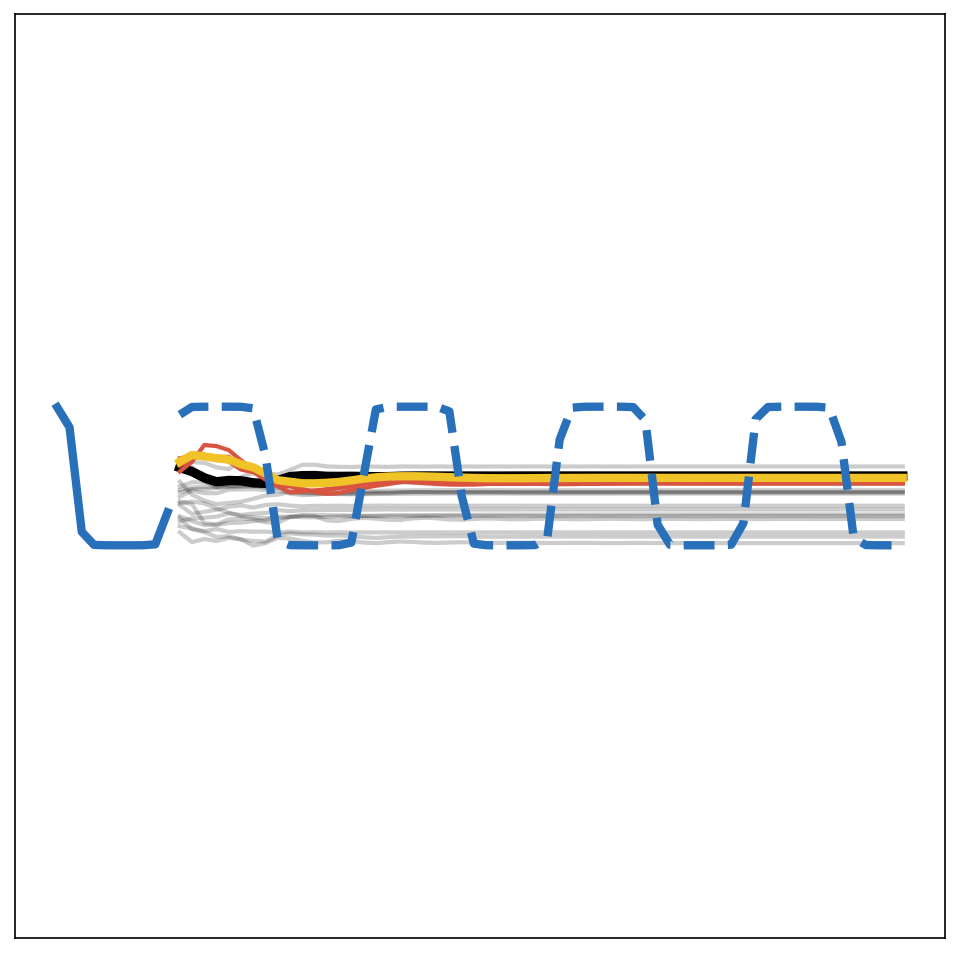}
\end{minipage}
&
\begin{minipage}[c]{0.28\textwidth}\centering
  \includegraphics[width=\linewidth]{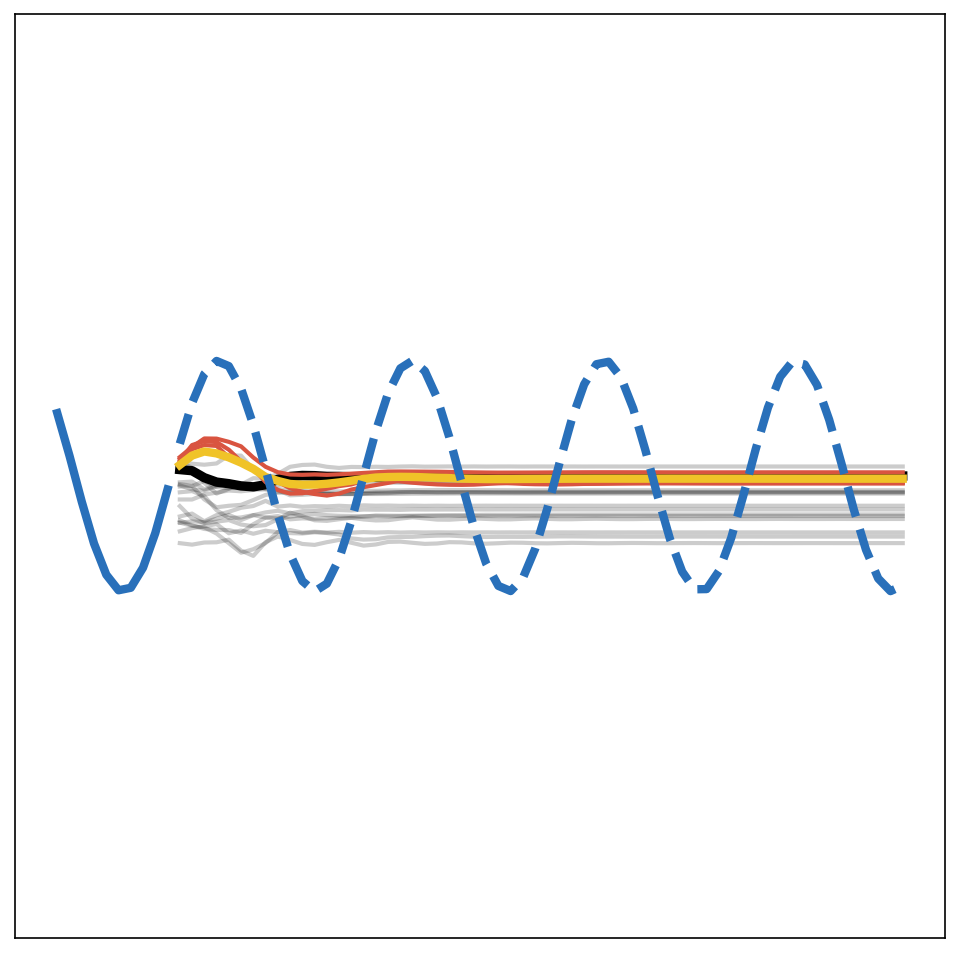}
\end{minipage}
\\[6pt]

\begin{tabular}[c]{@{}c@{}}
None (Kaiming init)\\
$\sigma = 0.05$
\end{tabular}
&
\begin{minipage}[c]{0.28\textwidth}\centering
  \includegraphics[width=\linewidth]{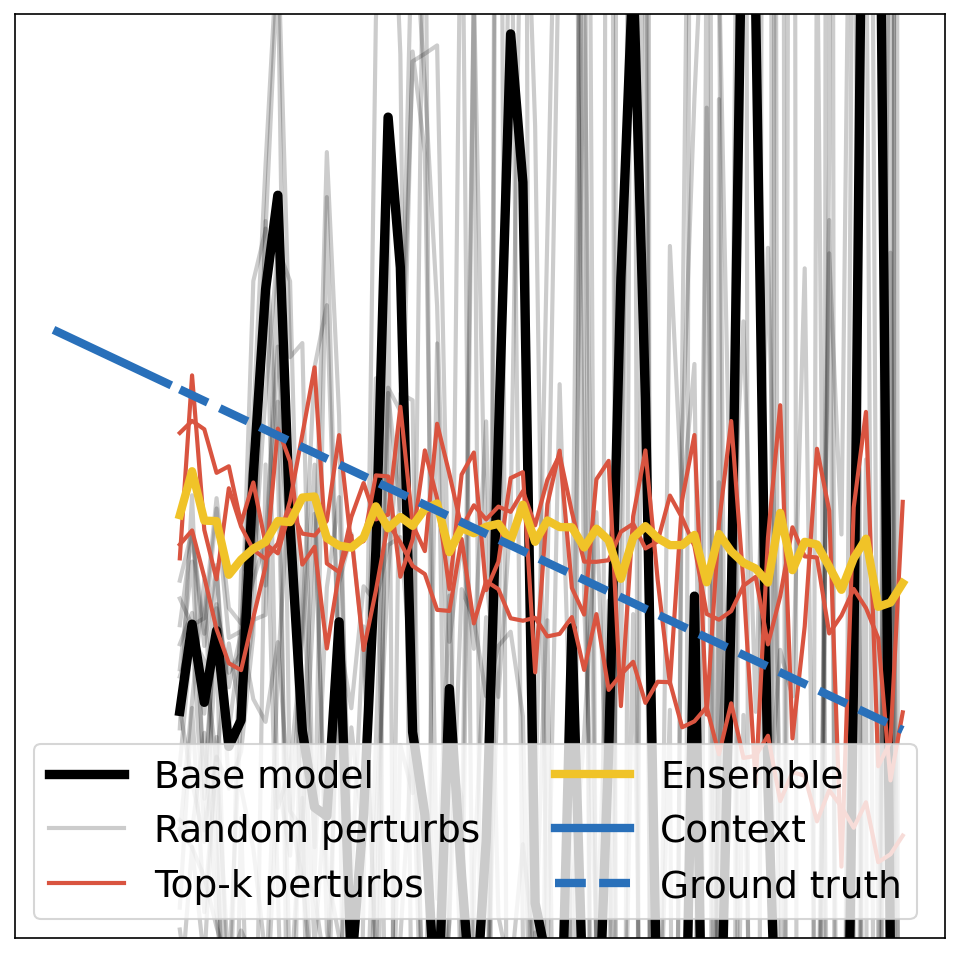}
\end{minipage}
&
\begin{minipage}[c]{0.28\textwidth}\centering
  \includegraphics[width=\linewidth]{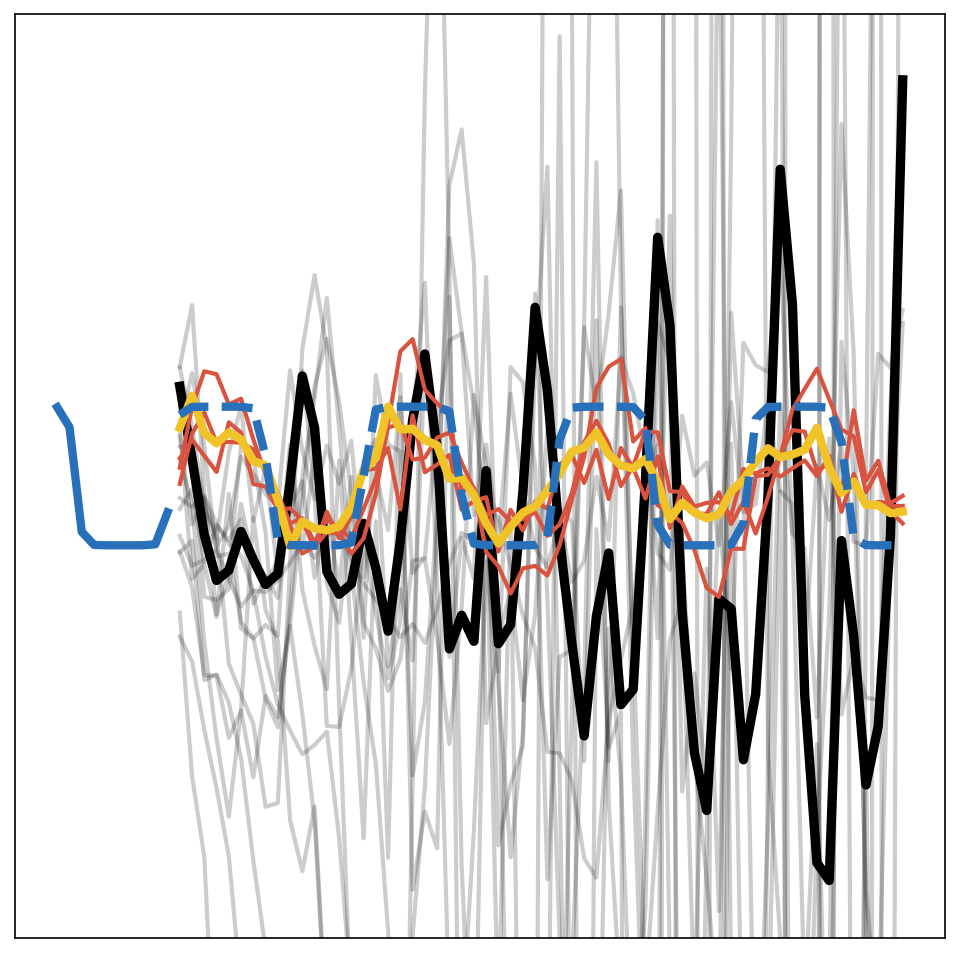}
\end{minipage}
&
\begin{minipage}[c]{0.28\textwidth}\centering
  \includegraphics[width=\linewidth]{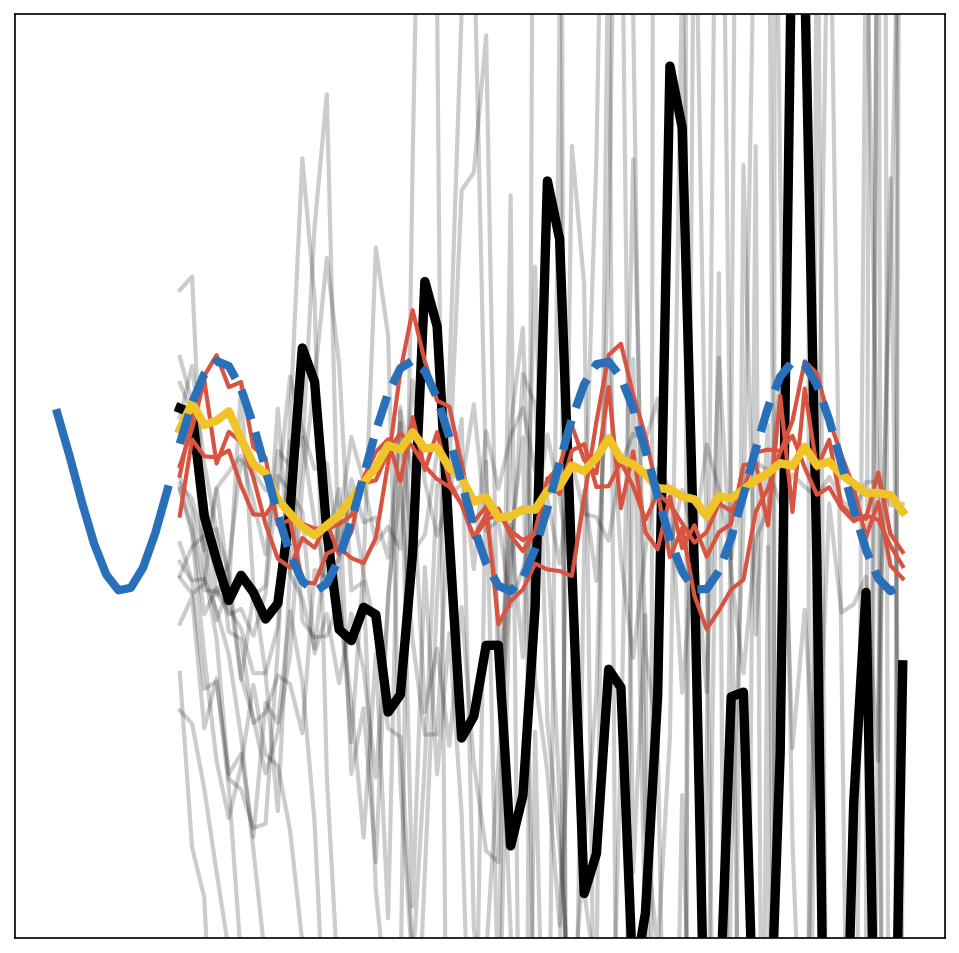}
\end{minipage}
\\[6pt]

\begin{tabular}[c]{@{}c@{}}
Mixed\\
$\sigma = 0.002$
\end{tabular}
&
\begin{minipage}[c]{0.28\textwidth}\centering
  \includegraphics[width=\linewidth]{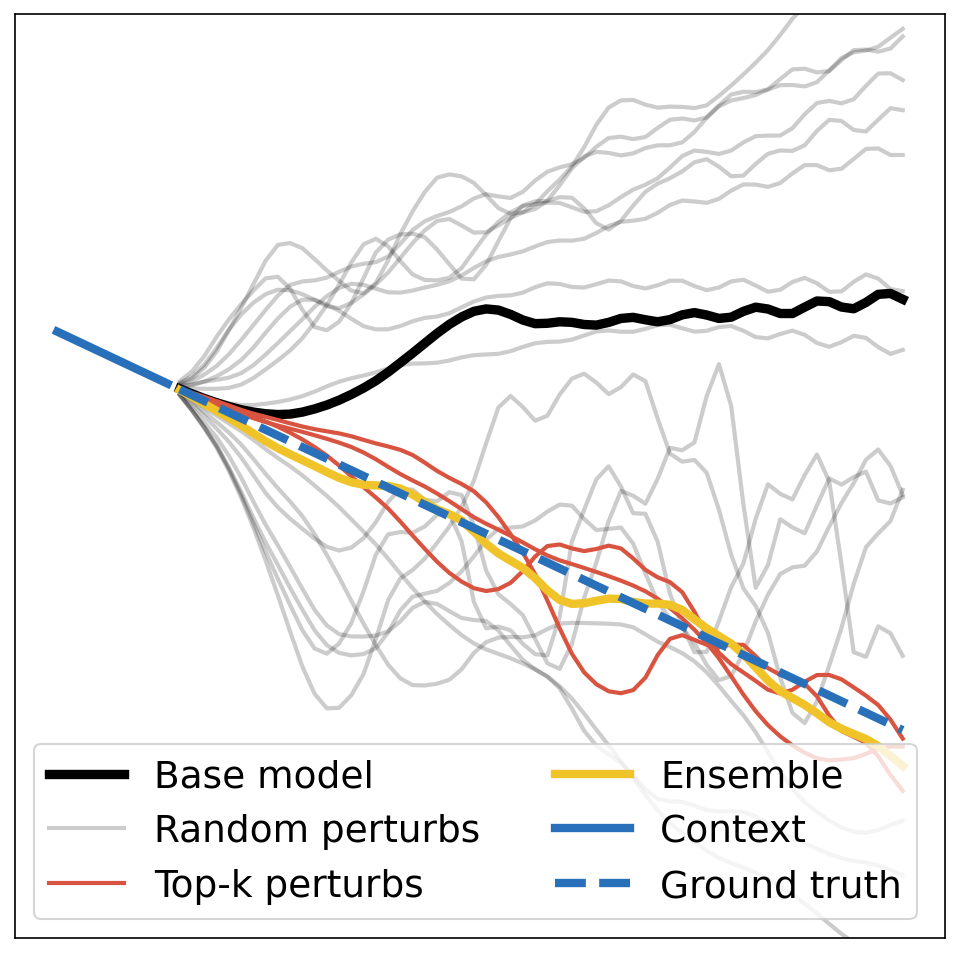}
\end{minipage}
&
\begin{minipage}[c]{0.28\textwidth}\centering
  \includegraphics[width=\linewidth]{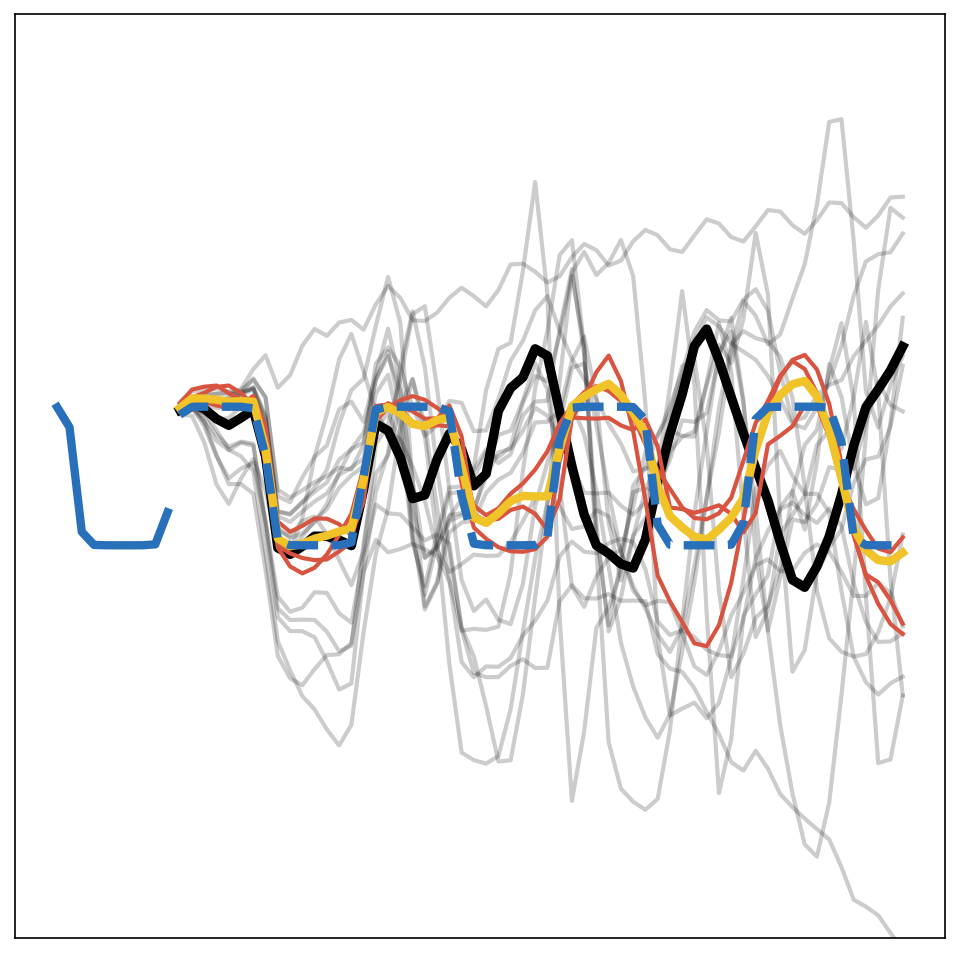}
\end{minipage}
&
\begin{minipage}[c]{0.28\textwidth}\centering
  \includegraphics[width=\linewidth]{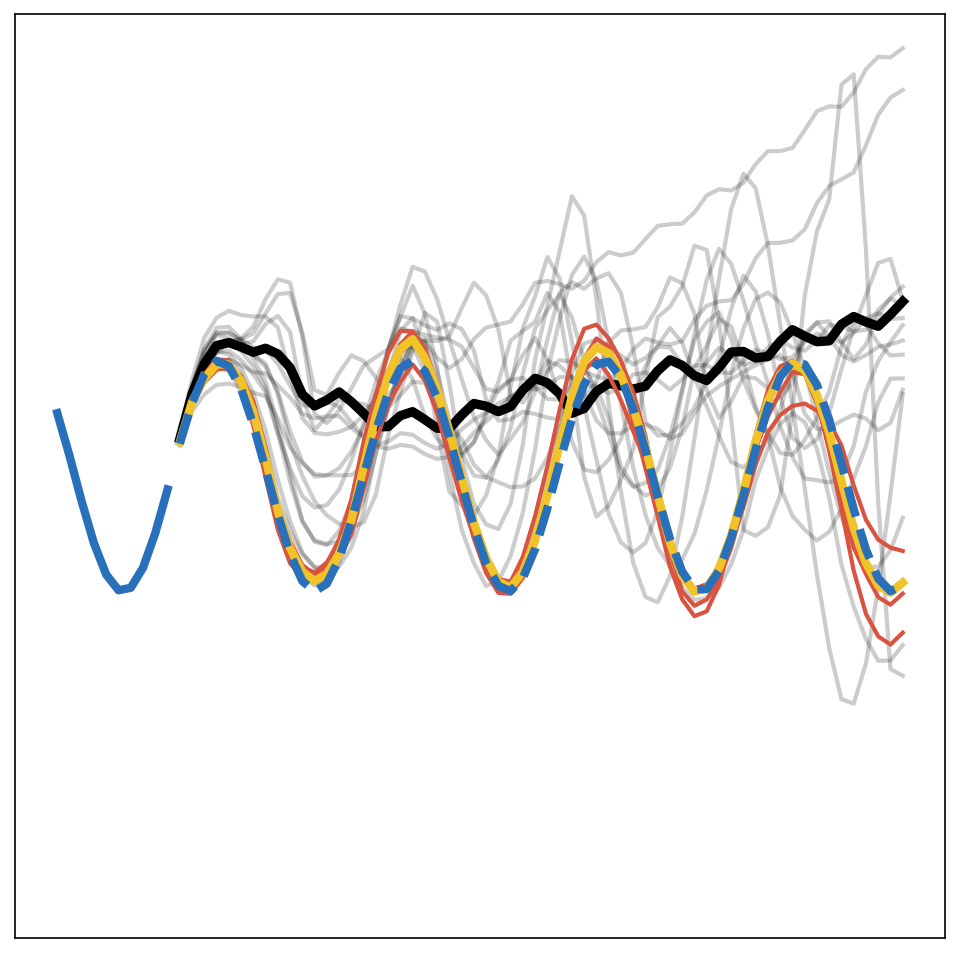}
\end{minipage}
\\[6pt]

\begin{tabular}[c]{@{}c@{}}
Linear\\
$\sigma = 0.002$
\end{tabular}
&
\begin{minipage}[c]{0.28\textwidth}\centering
  \includegraphics[width=\linewidth]{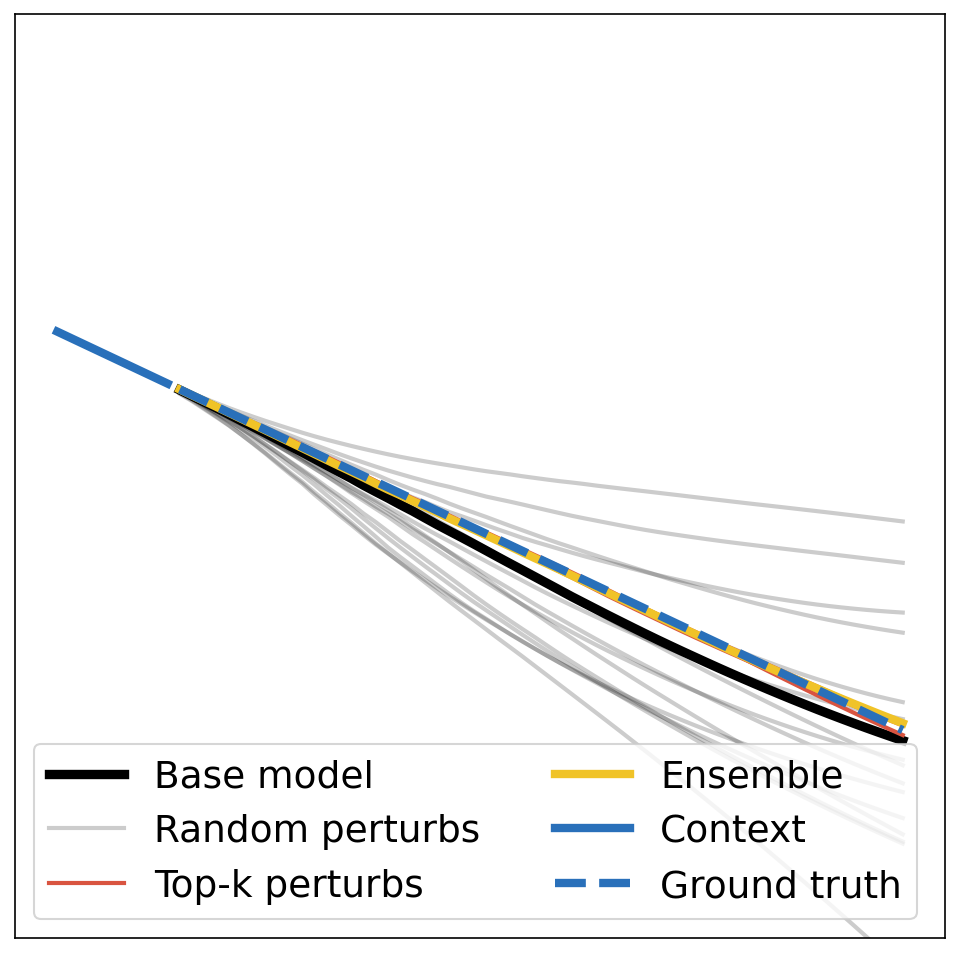}
\end{minipage}
&
\begin{minipage}[c]{0.28\textwidth}\centering
  \includegraphics[width=\linewidth]{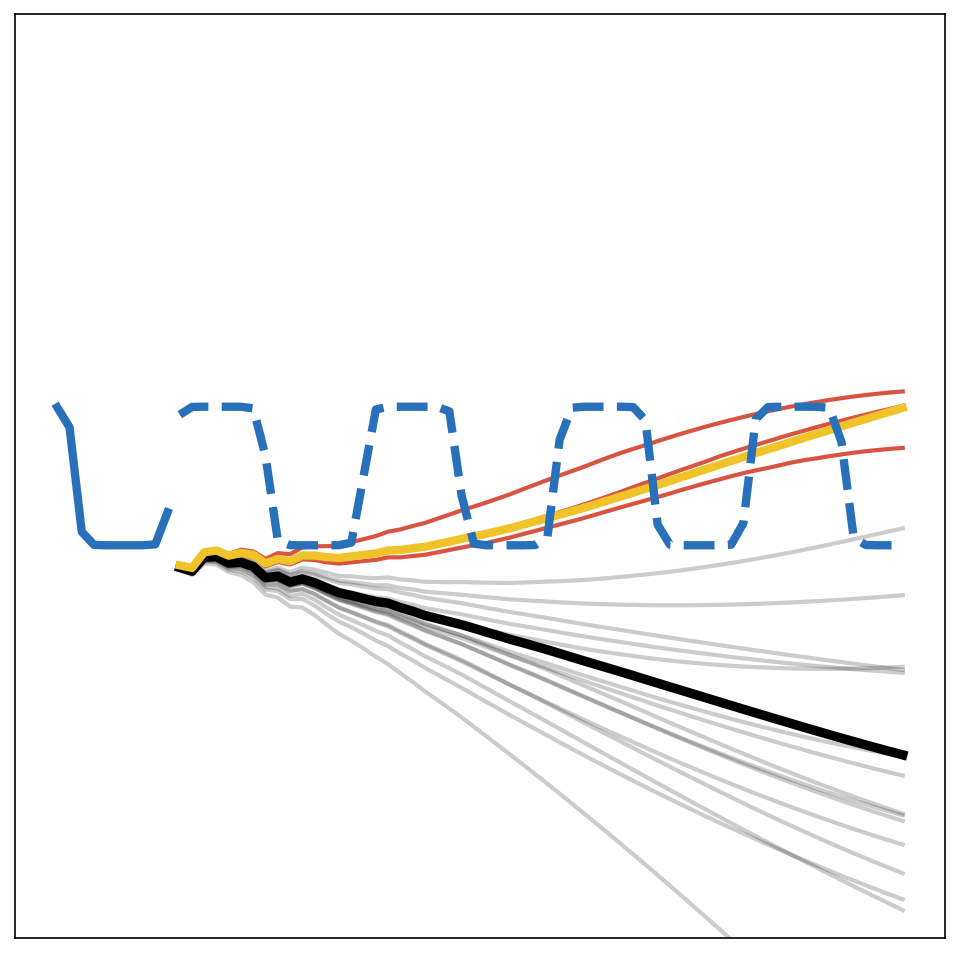}
\end{minipage}
&
\begin{minipage}[c]{0.28\textwidth}\centering
  \includegraphics[width=\linewidth]{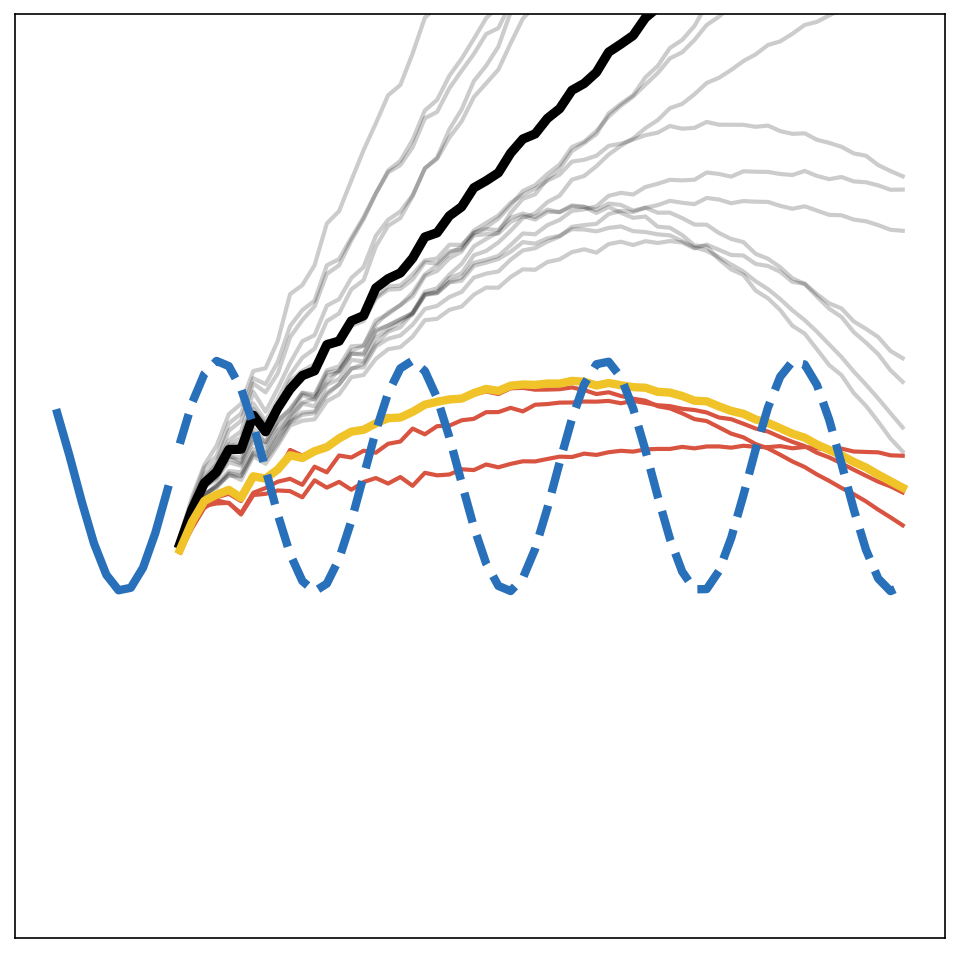}
\end{minipage}
\\[6pt]

\bottomrule
\end{tabular}
}
\label{tab:1d_qualitative_examples_approximation}
\end{table*}

\begin{table*}[htb]
\centering
\footnotesize
\renewcommand\arraystretch{1.15}
\setlength\tabcolsep{6pt}
\caption{\textbf{\methodname on 1D signals (generalization).}
Each row corresponds to a pretraining–post-training pair. For all rows, the test set functions are of the post-trained type, and we show three random test examples. Rightmost column shows the average mean squared error over the entire test set, for each method.}
\resizebox{\textwidth}{!}{
\begin{tabular}{c c c c}
\toprule
\textbf{Pretraining} &
\textbf{Post-training} &
\multicolumn{1}{c}{\textbf{Test set predictions}} &
\textbf{Test set performance}
\\
\cmidrule(lr){3-3}
& & \footnotesize{(three examples)} & 
\\
\midrule

Mixed &
Linear &
\begin{minipage}[c]{0.85\textwidth}
    \centering
    \includegraphics[width=0.32\textwidth]{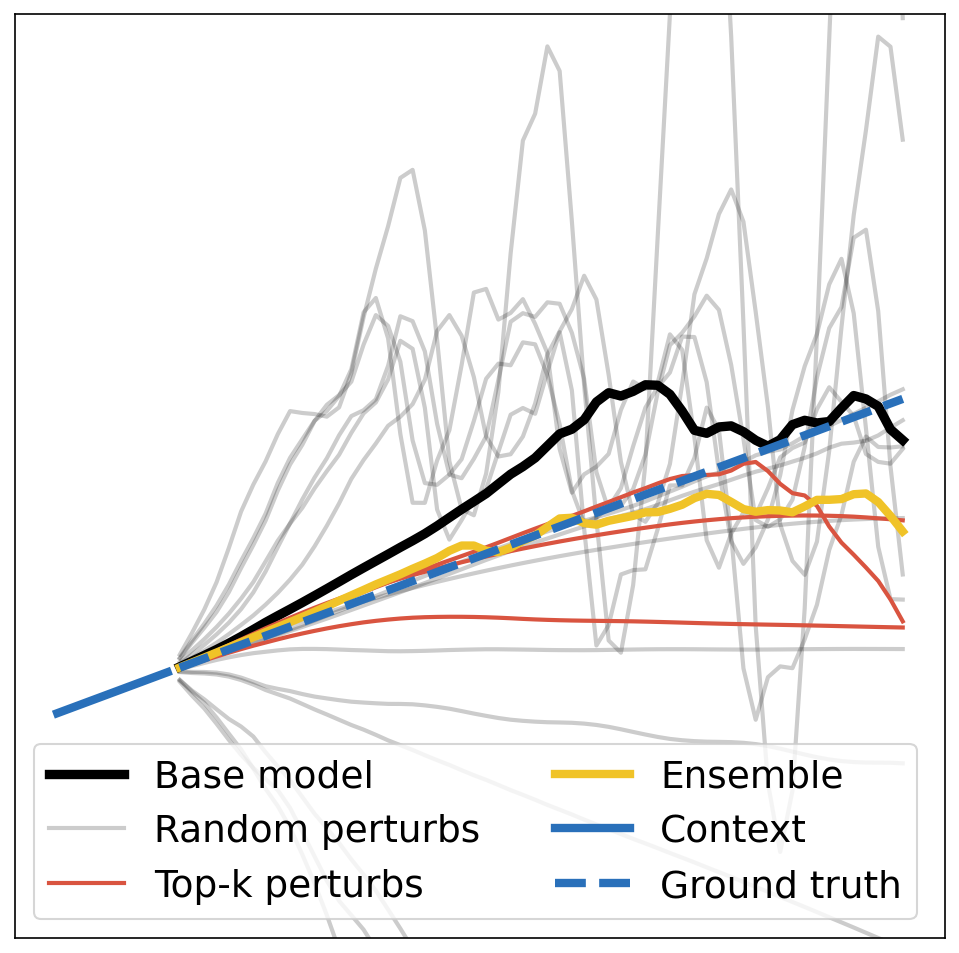}\hfill
    \includegraphics[width=0.32\textwidth]{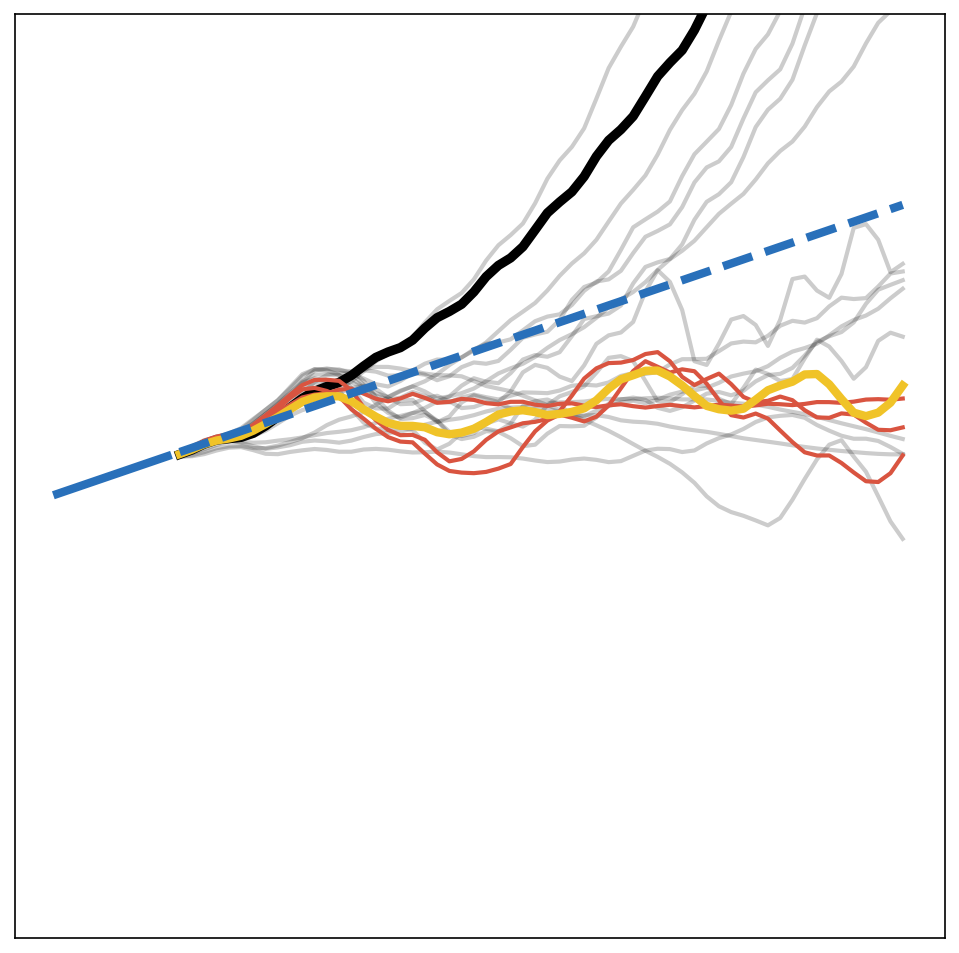}\hfill
    \includegraphics[width=0.32\textwidth]{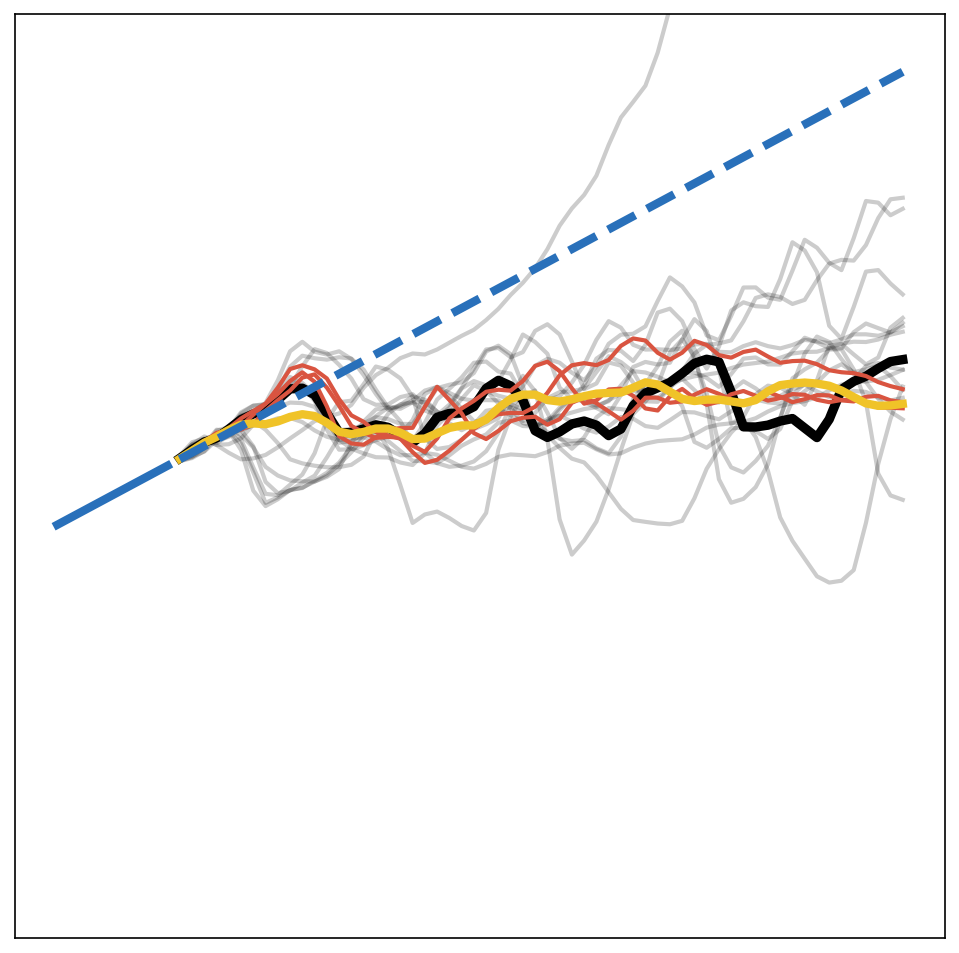}
\end{minipage}
&

\begin{minipage}[c]{0.11\textwidth}
    \centering
    \includegraphics[width=1.0\textwidth]{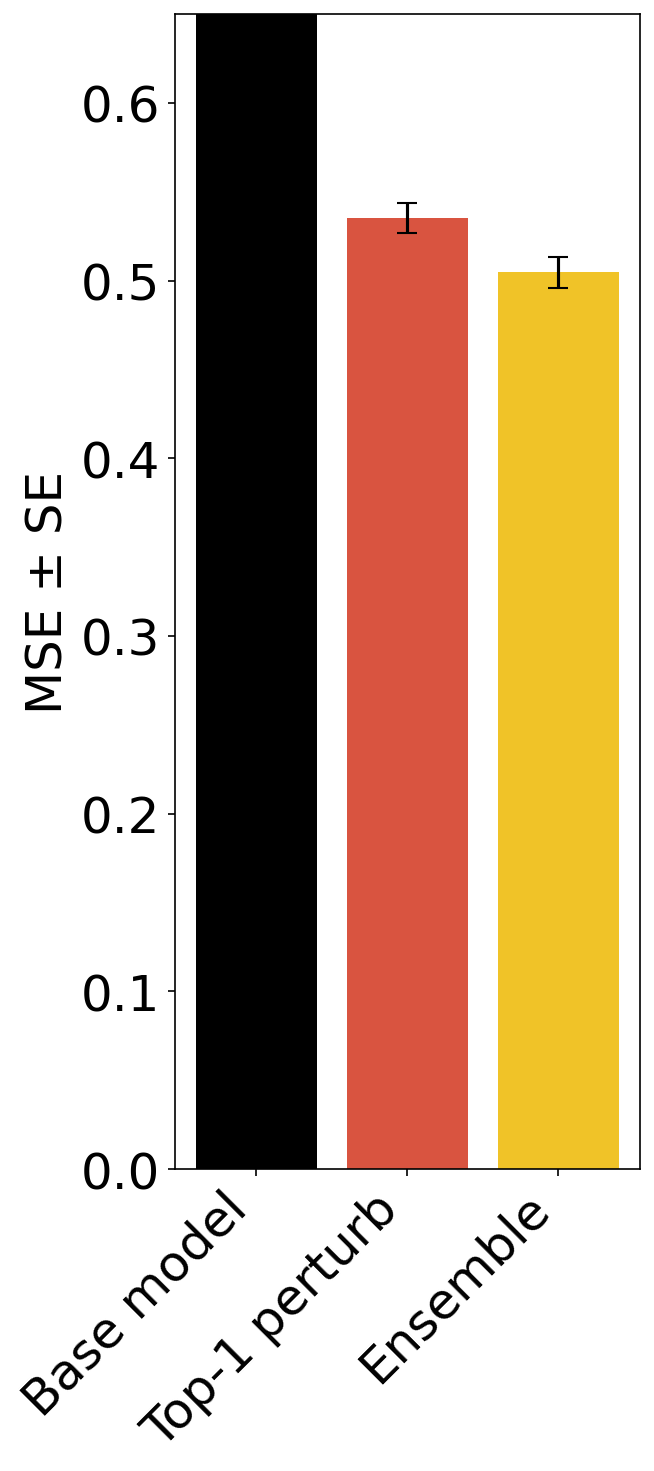}\hfill
\end{minipage}
\\[4pt]

Mixed &
Square waves  &
\begin{minipage}[c]{0.85\textwidth}
    \centering
    \includegraphics[width=0.32\textwidth]{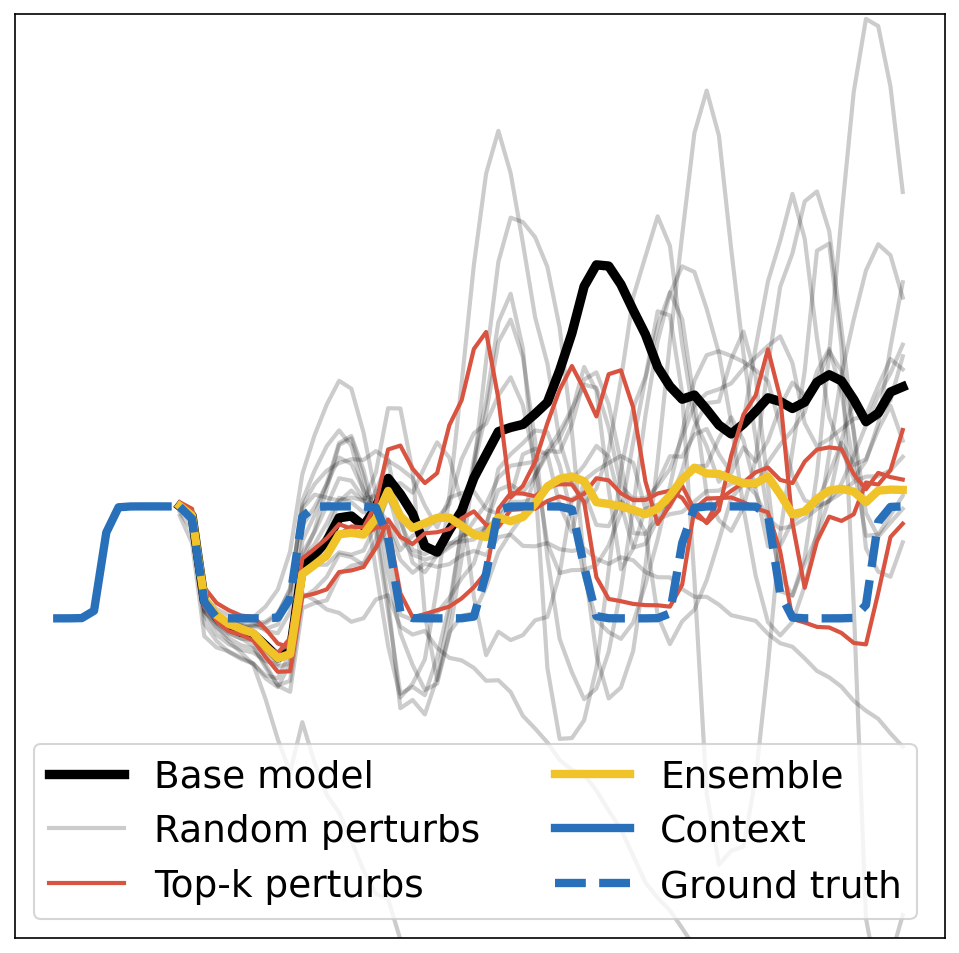}\hfill
    \includegraphics[width=0.32\textwidth]{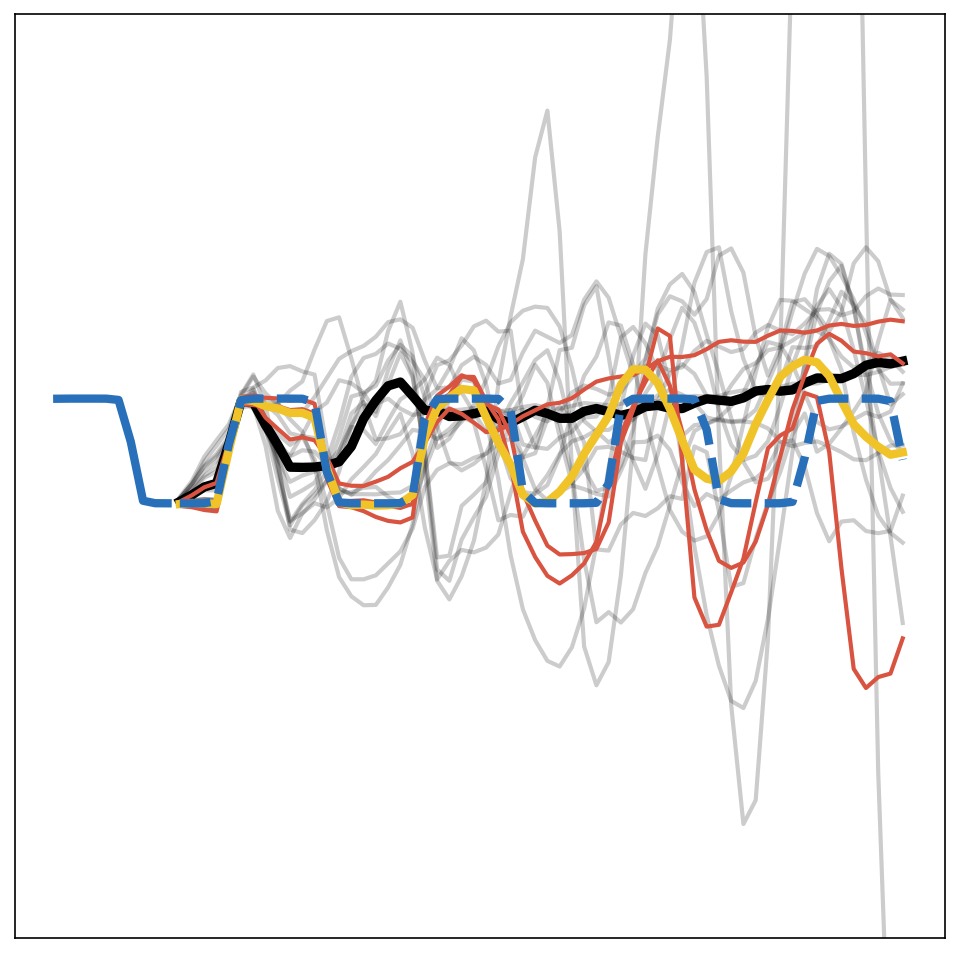}\hfill
    \includegraphics[width=0.32\textwidth]{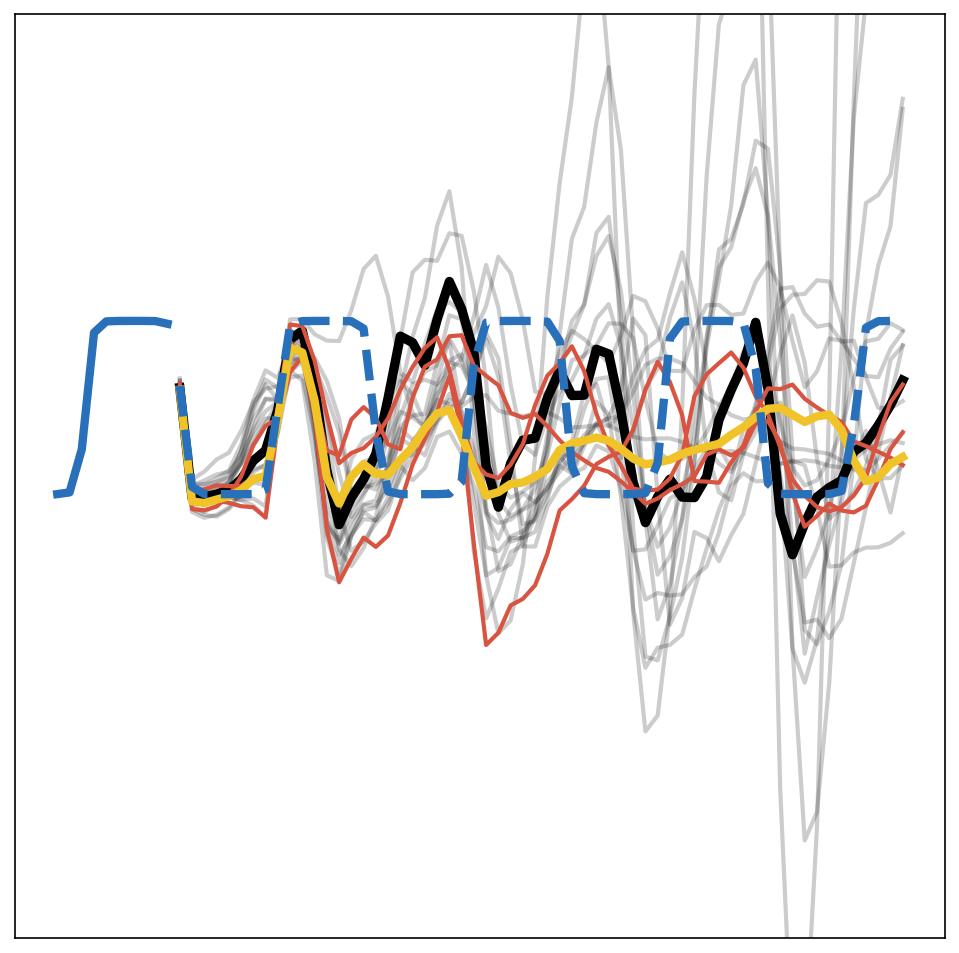}
\end{minipage}
&
\begin{minipage}[c]{0.11\textwidth}
    \centering
    \includegraphics[width=1.0\textwidth]{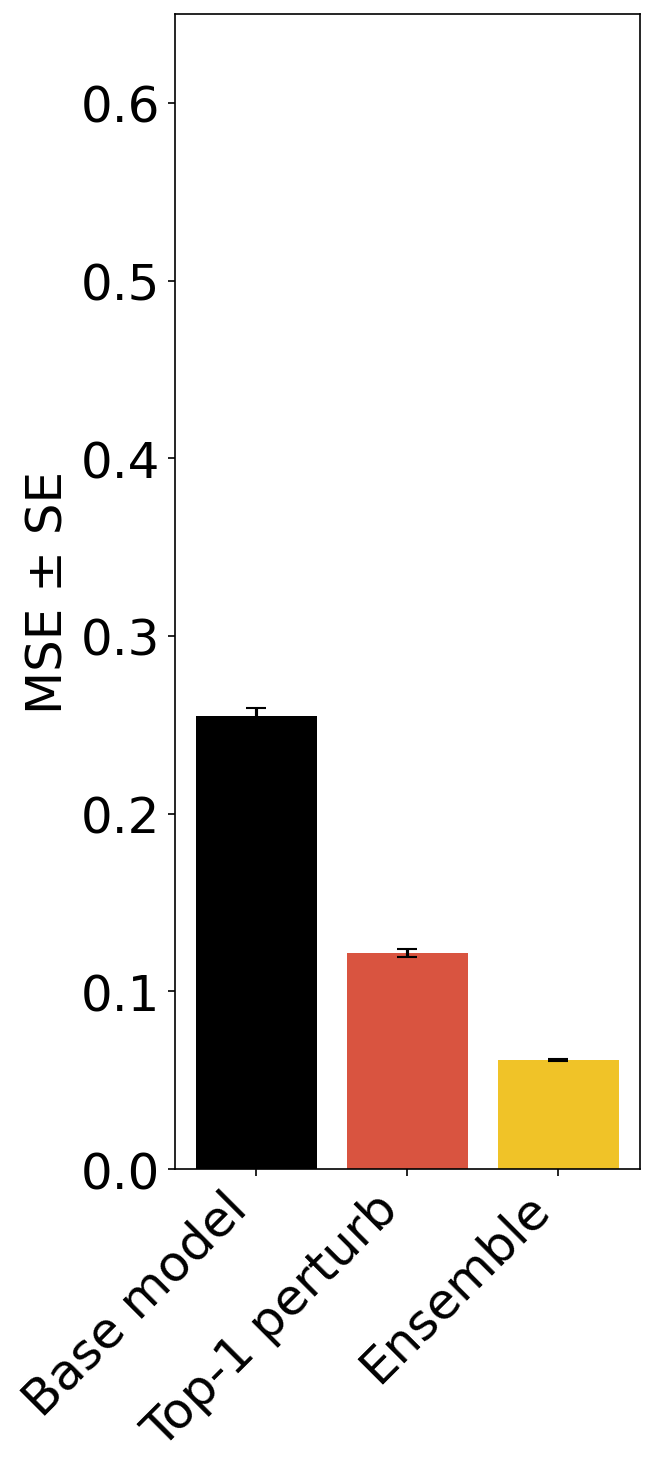}\hfill
\end{minipage}
\\[4pt]

Mixed &
Sinusoids &
\begin{minipage}[c]{0.85\textwidth}
    \centering
    \includegraphics[width=0.32\textwidth]{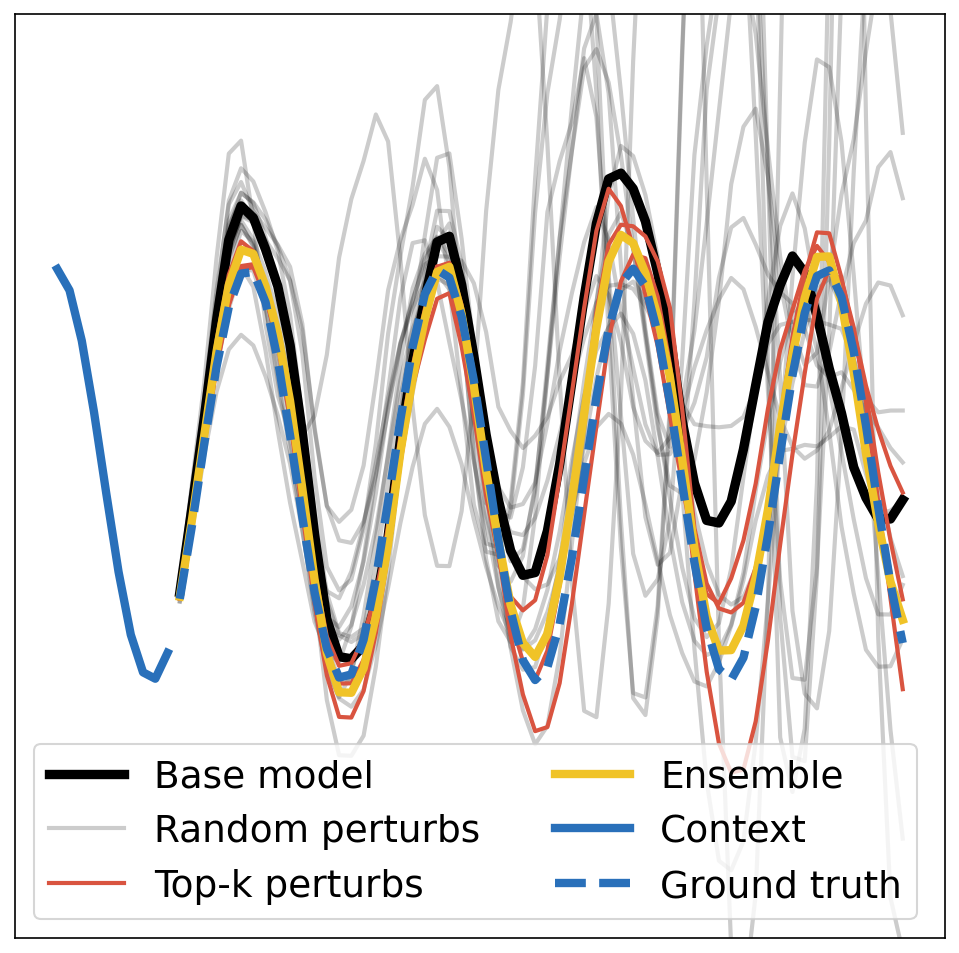}\hfill
    \includegraphics[width=0.32\textwidth]{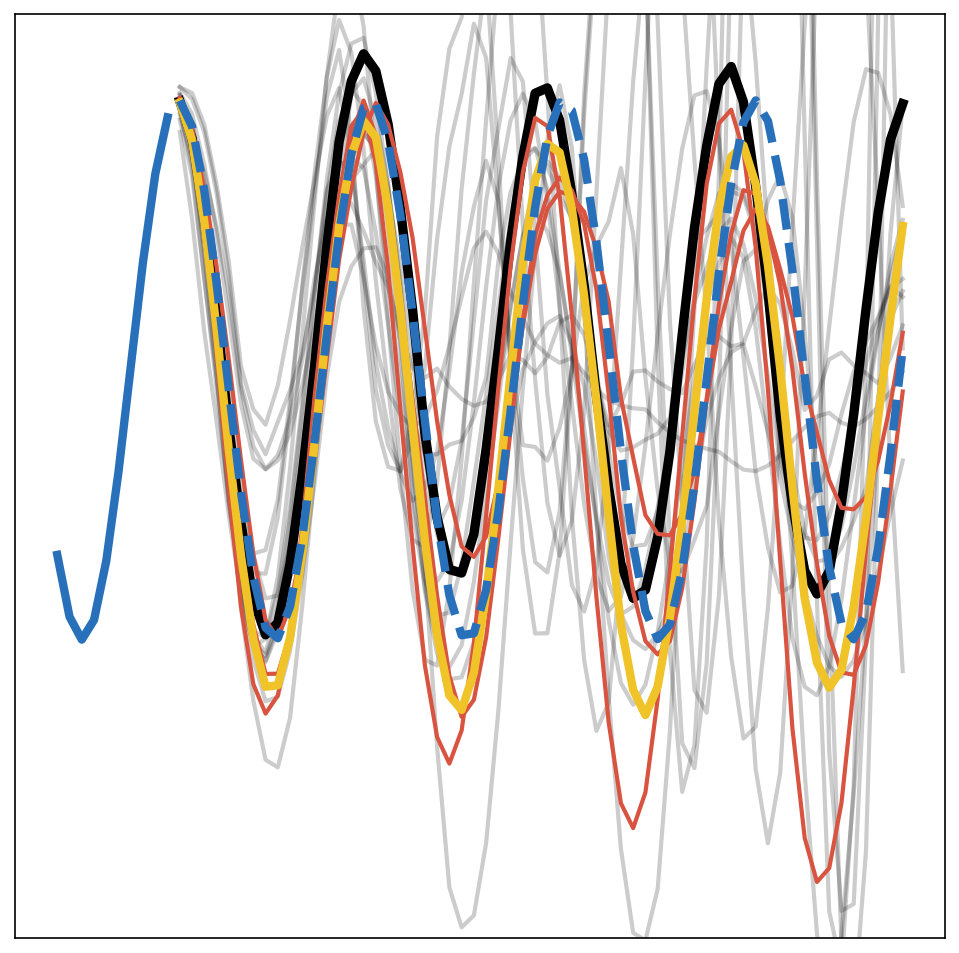}\hfill
    \includegraphics[width=0.32\textwidth]{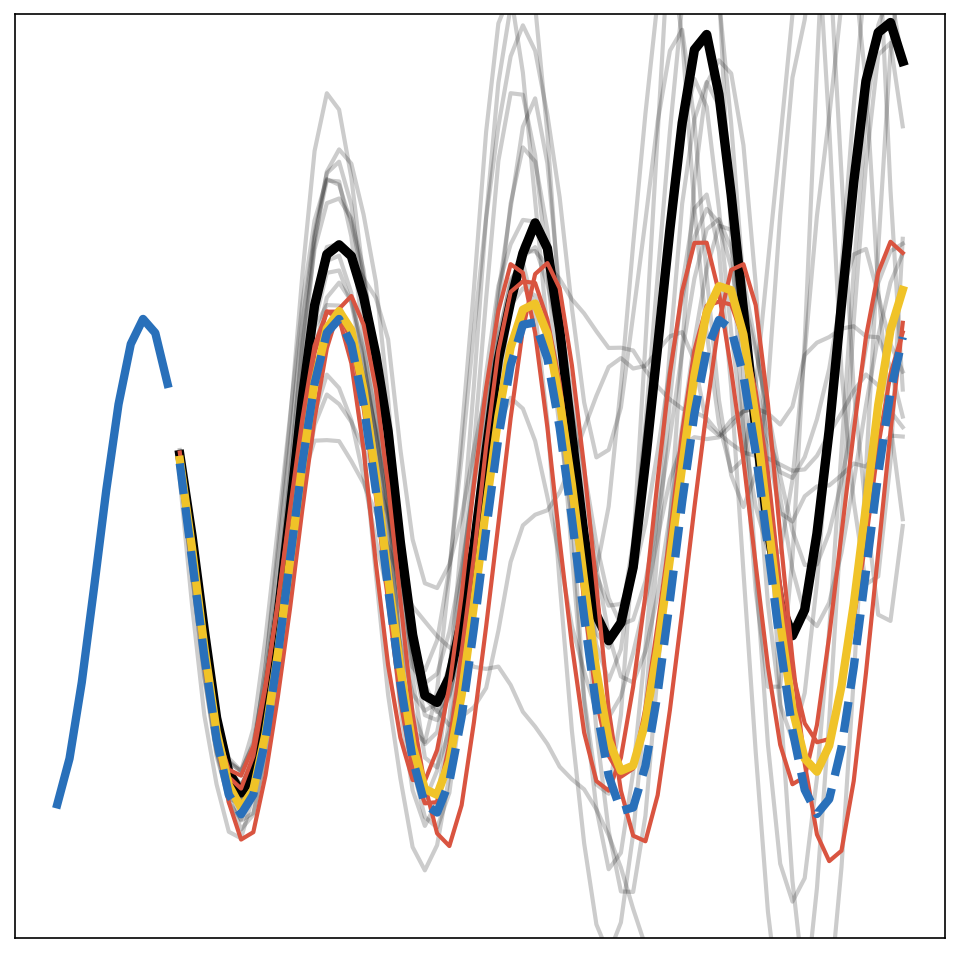}
\end{minipage}
&
\begin{minipage}[c]{0.11\textwidth}
    \centering
    \includegraphics[width=1.0\textwidth]{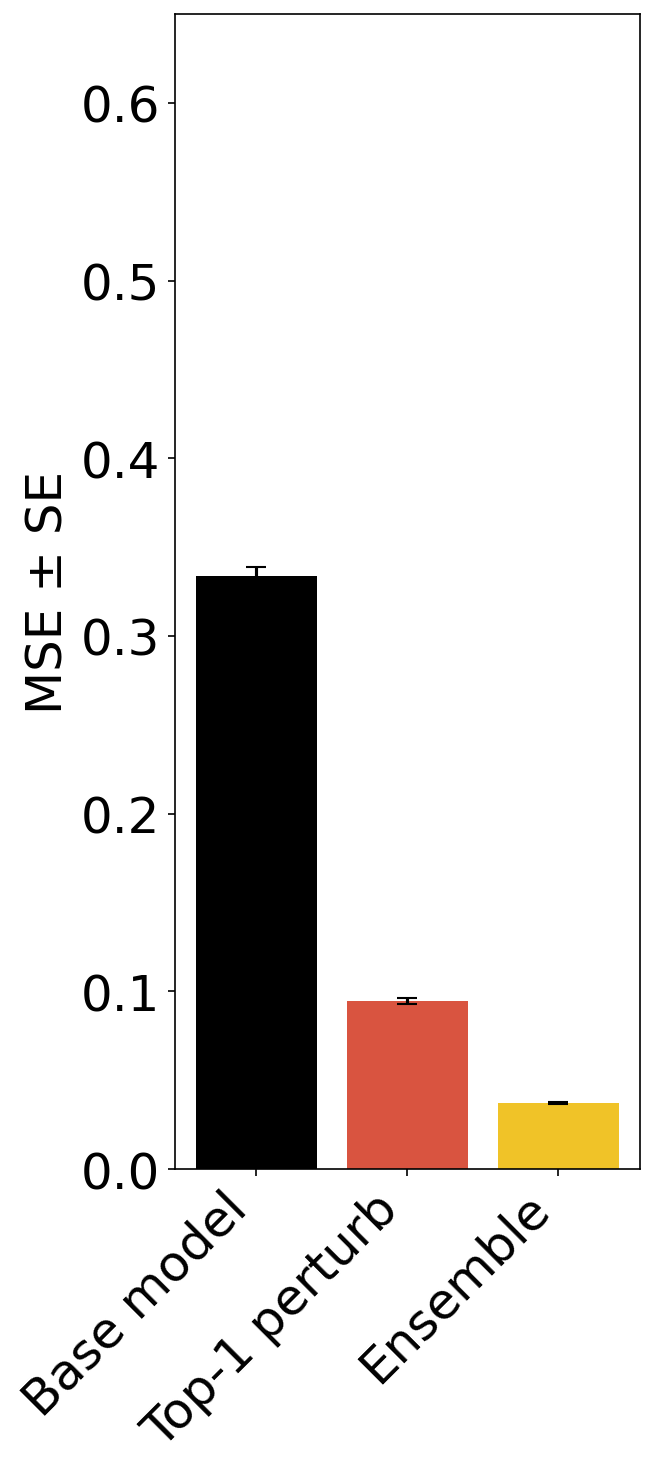}\hfill
\end{minipage}
\\[4pt]

Sinusoids &
Square waves &
\begin{minipage}[c]{0.85\textwidth}
    \centering
    \includegraphics[width=0.32\textwidth]{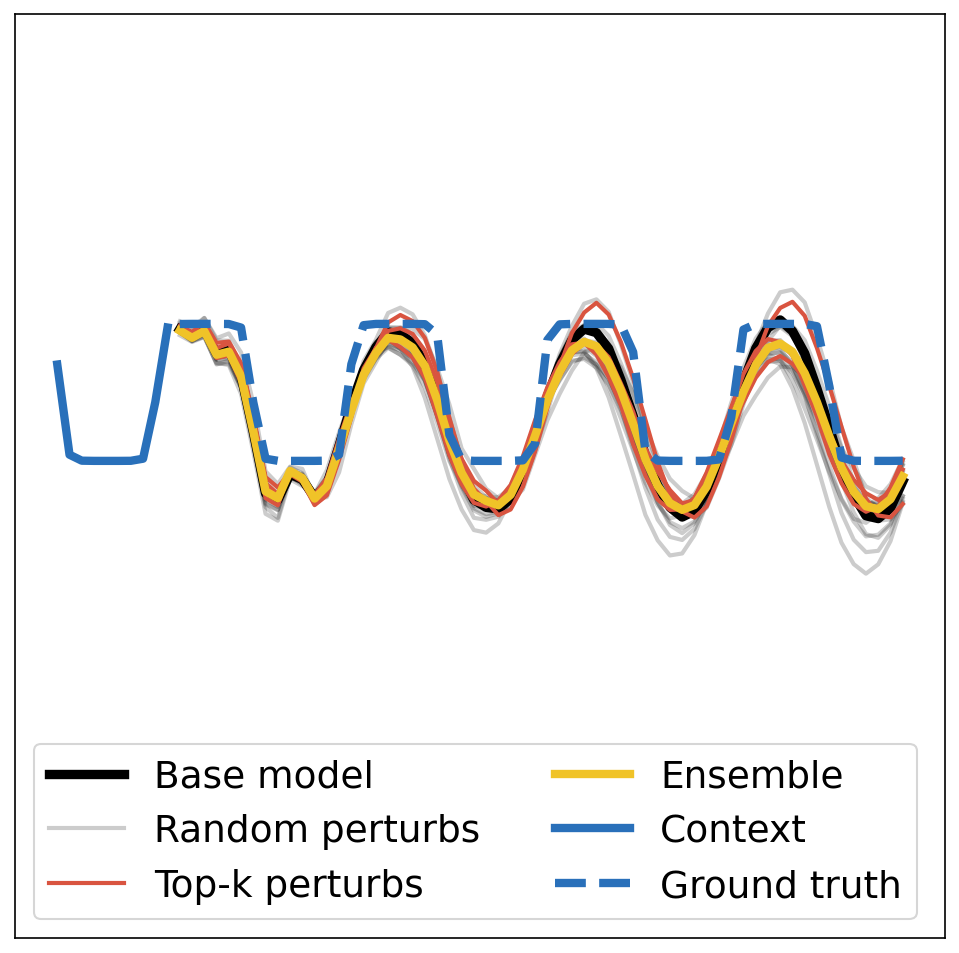}\hfill
    \includegraphics[width=0.32\textwidth]{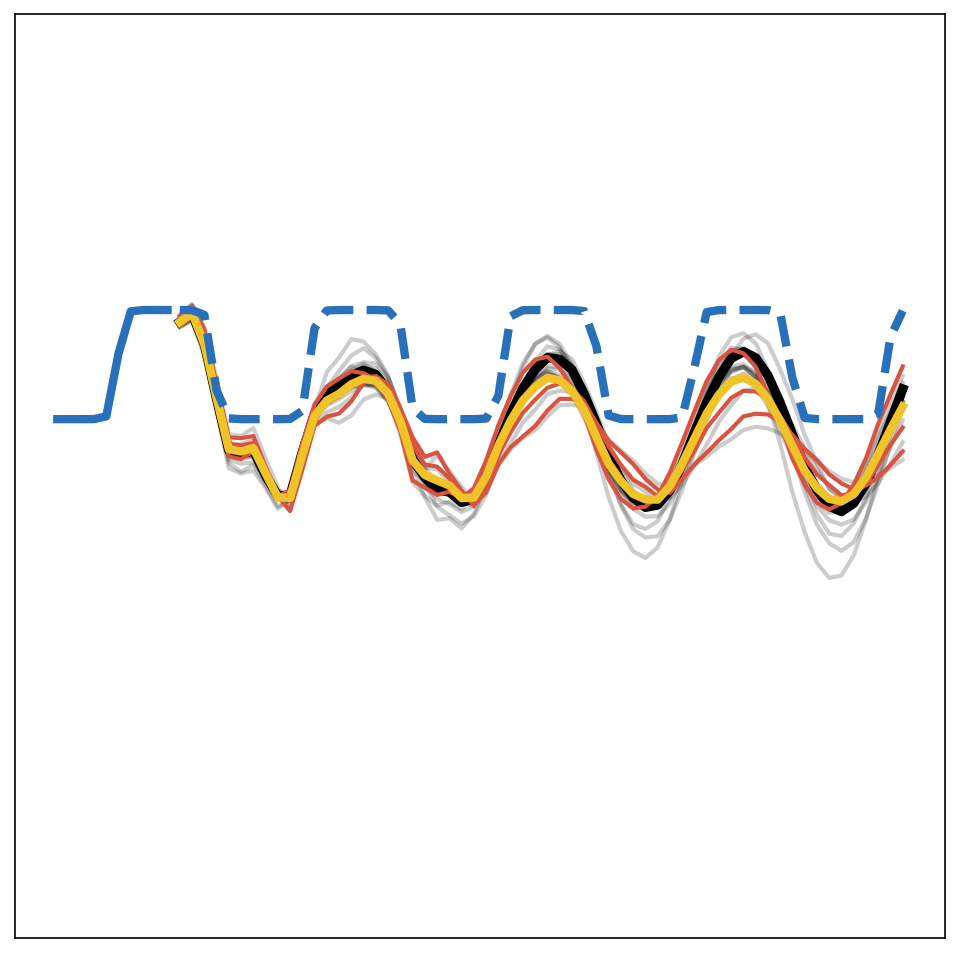}\hfill
    \includegraphics[width=0.32\textwidth]{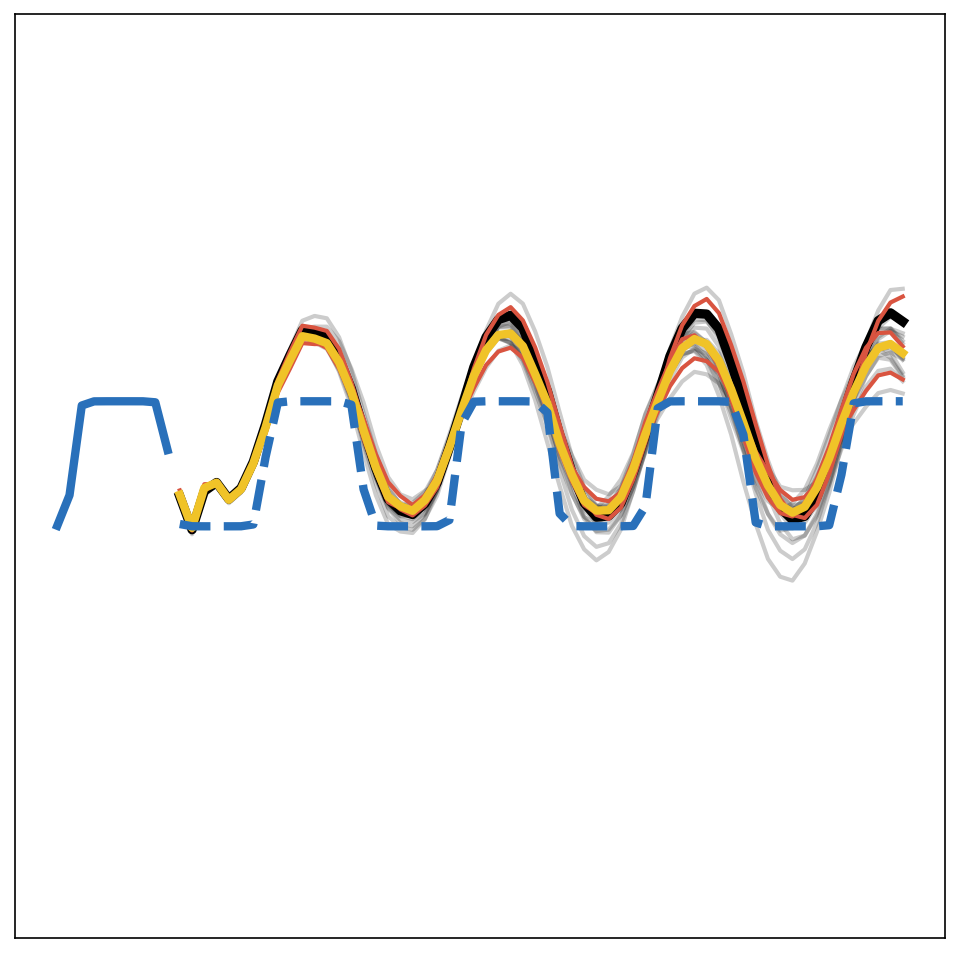}
\end{minipage}
&
\begin{minipage}[c]{0.11\textwidth}
    \centering
    \includegraphics[width=1.0\textwidth]{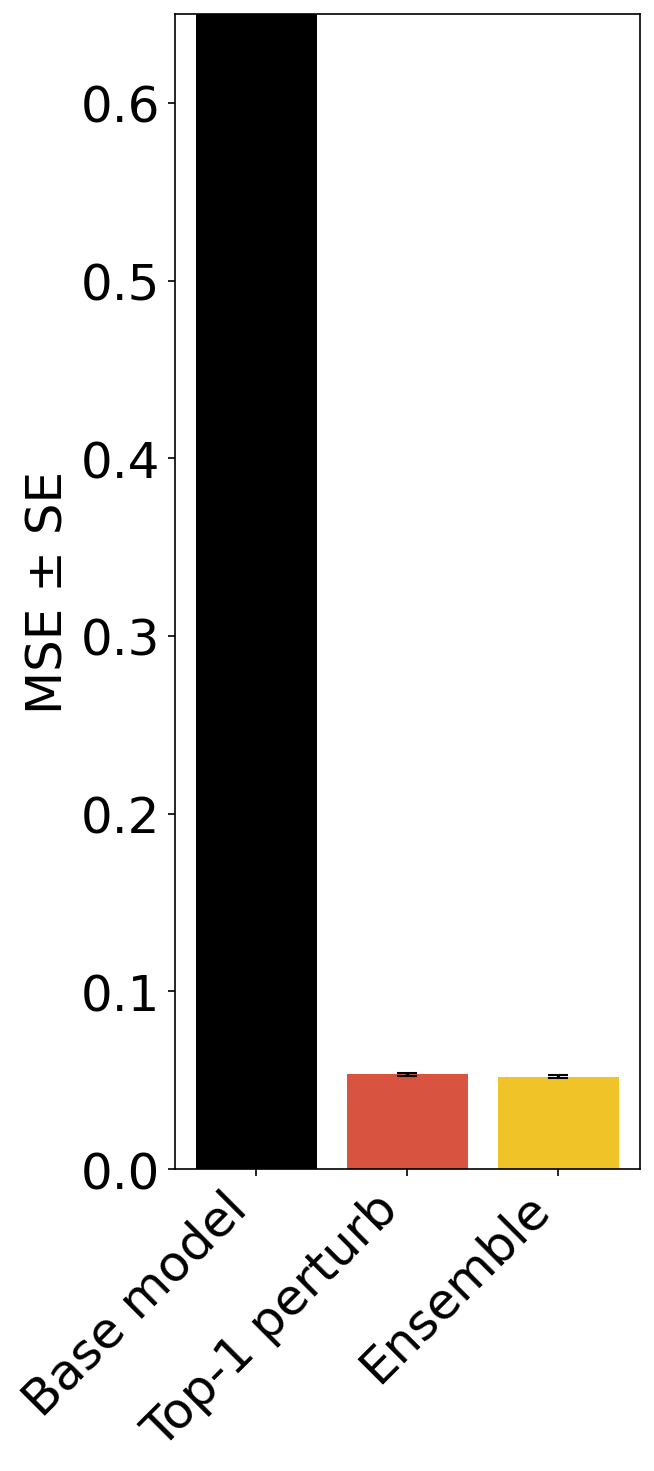}\hfill
\end{minipage}
\\[4pt]

Square waves &
Square waves &
\begin{minipage}[c]{0.85\textwidth}
    \centering
    \includegraphics[width=0.32\textwidth]{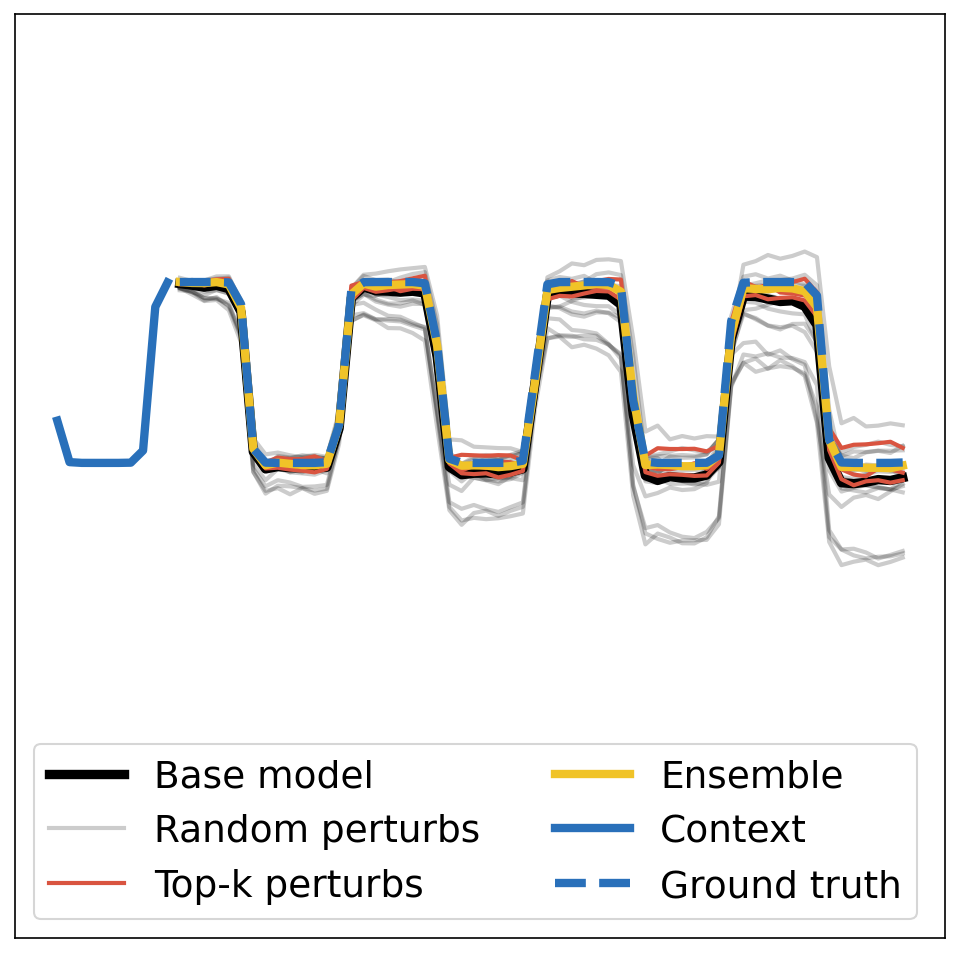}\hfill
    \includegraphics[width=0.32\textwidth]{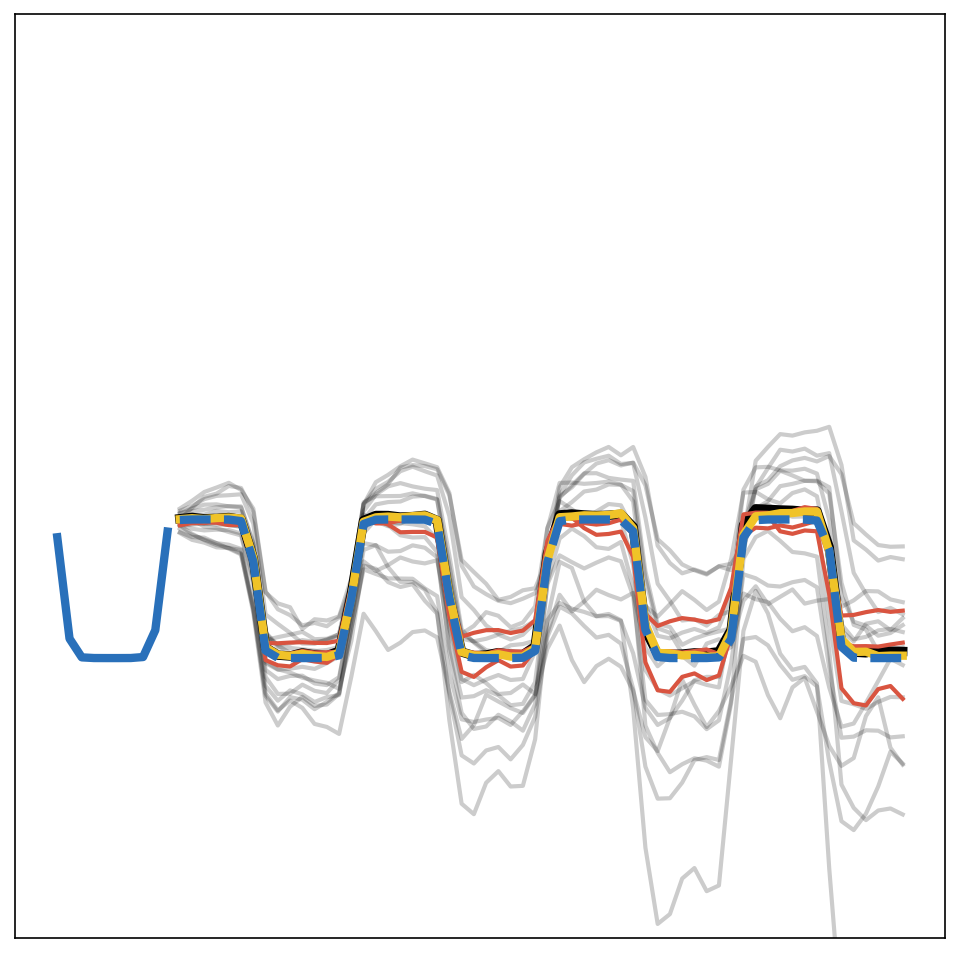}\hfill
    \includegraphics[width=0.32\textwidth]{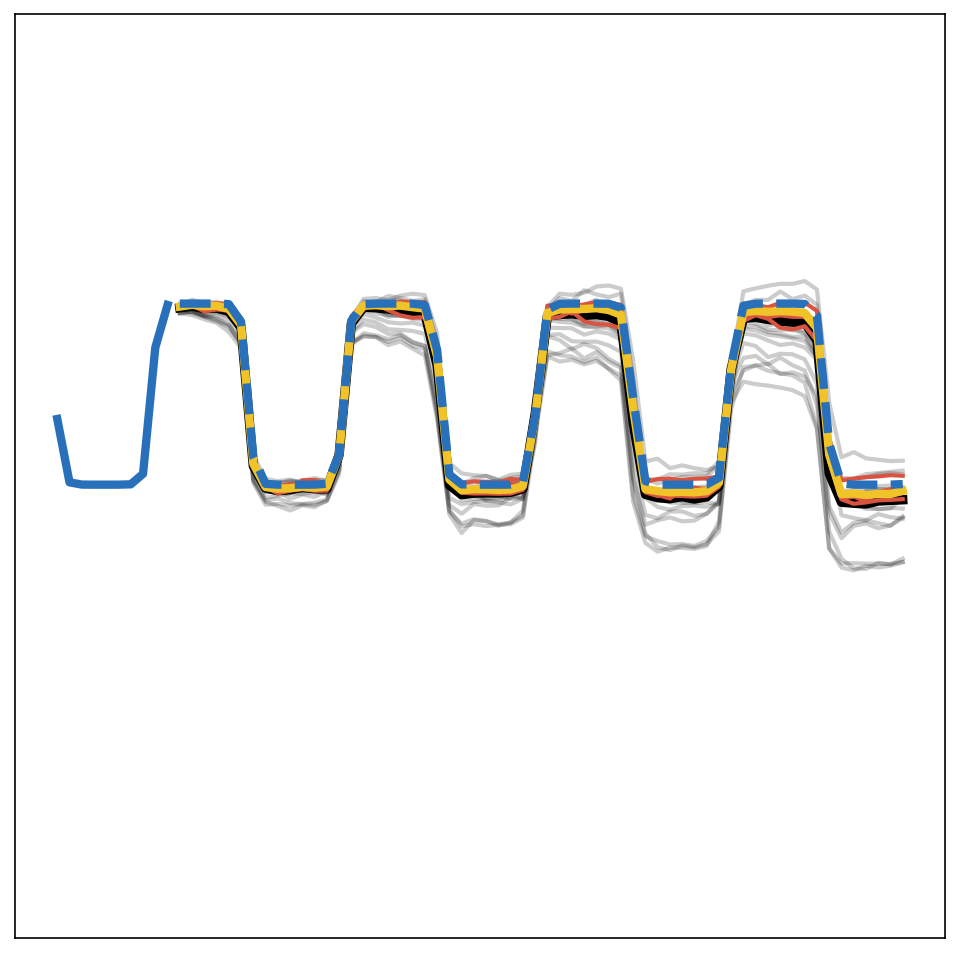}
\end{minipage}
&
\begin{minipage}[c]{0.11\textwidth}
    \centering
    \includegraphics[width=1.0\textwidth]{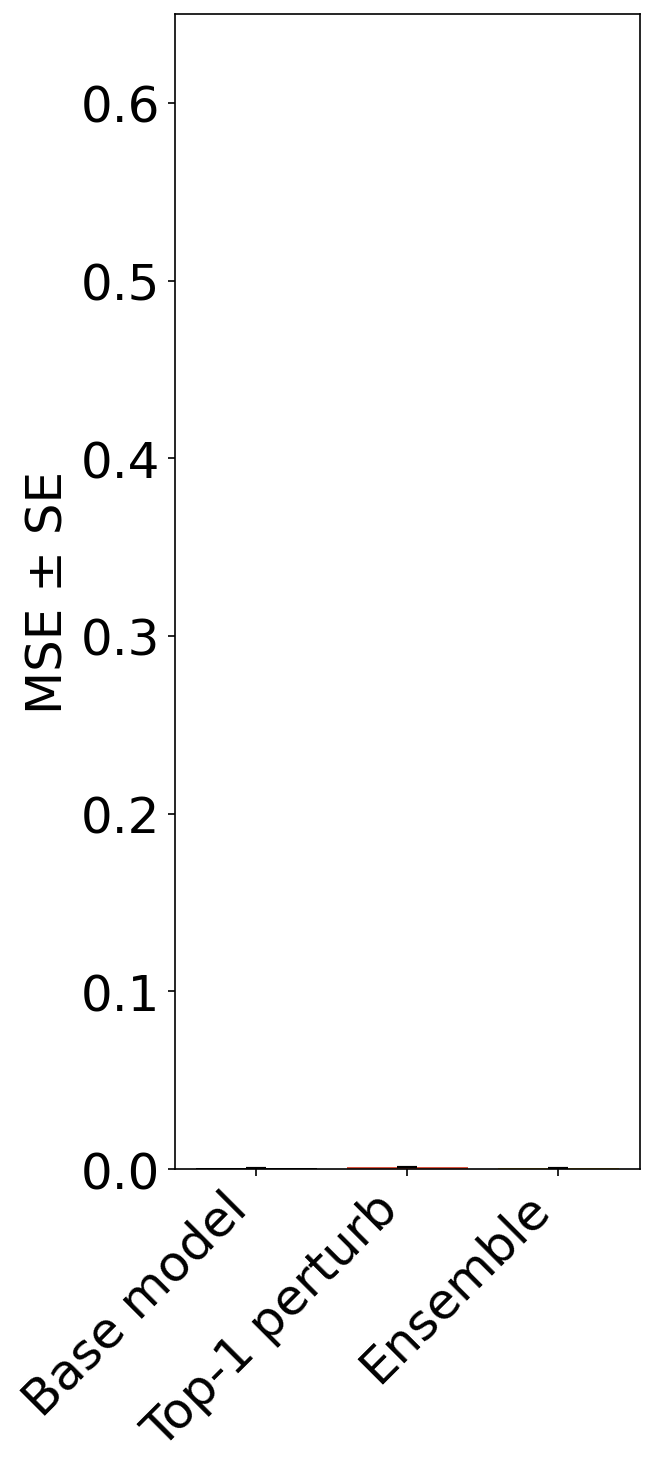}\hfill
\end{minipage}
\\

\bottomrule
\end{tabular}
}
\label{tab:1d_qualitative_examples_generalization}
\end{table*}

\newcommand{\borderedimg}[1]{%
  \setlength{\fboxsep}{0.5pt}%
  \setlength{\fboxrule}{0.6pt}%
  \fcolorbox{gray!50}{white}{\includegraphics[width=0.145\textwidth]{#1}}%
}

\begin{table*}[htb]
\centering
\footnotesize
\renewcommand\arraystretch{0.95}
\setlength\tabcolsep{2pt}
\caption{\textbf{\methodname on Text-to-Image models (Train Set).} We use the Stable Diffusion XL model~\citep{podell2023sdxl}. Images are generated from a text prompt. \methodname selects the top-K models by scoring generated images with a target text (e.g., ``blue'') using GPT-5.2, and performs mean ensembling over the K models at each denoising step.}
\resizebox{\textwidth}{!}{
\begin{tabular}{m{1.2cm} m{0.16\textwidth} m{0.16\textwidth} m{0.16\textwidth} m{0.16\textwidth} m{0.16\textwidth}}
\toprule
& \centering\scriptsize\textit{A corgi astronaut, full body, centered\ldots}
& \centering\scriptsize\textit{Kyoto street in the rain, pedestrians with umbrellas\ldots}
& \centering\scriptsize\textit{Boston skyline at sunset, Charles River in foreground}
& \centering\scriptsize\textit{A bowl of ramen with steam rising, chopsticks lifting\ldots}
& \centering\arraybackslash\scriptsize\textit{A glass of iced coffee on a wooden table by a window\ldots}
\\[1pt]
\centering\rotatebox[origin=c]{90}{Base} &
\borderedimg{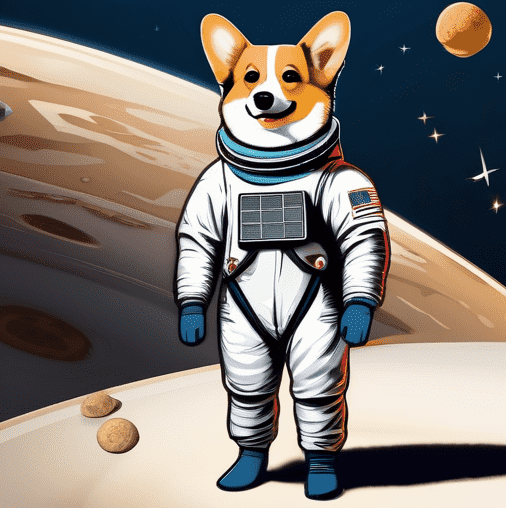} &
\borderedimg{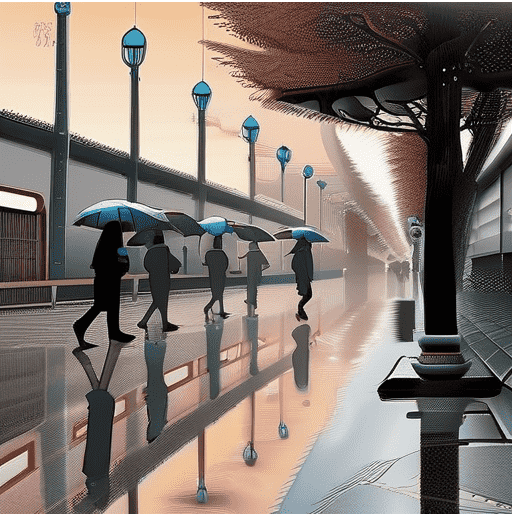} &
\borderedimg{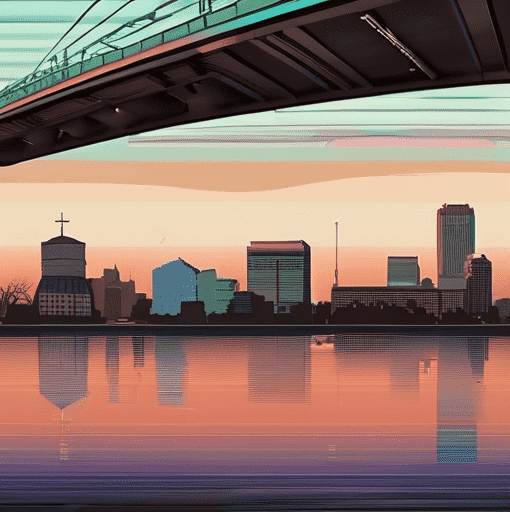} &
\borderedimg{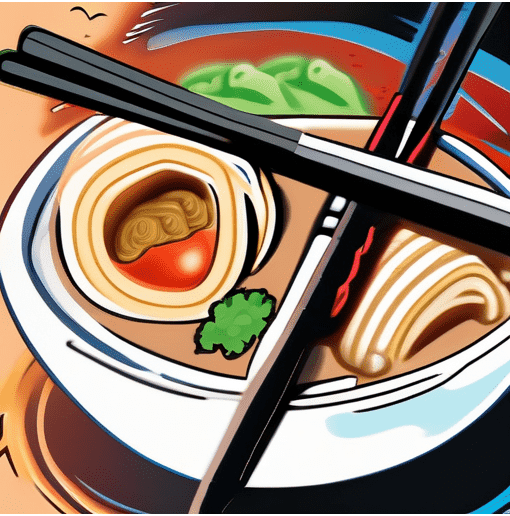} &
\borderedimg{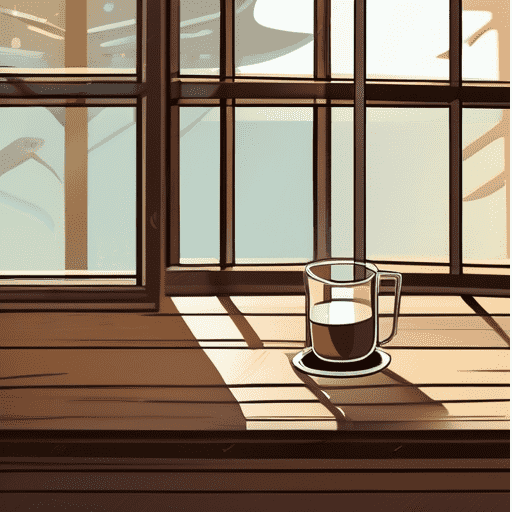}
\\
\midrule
\multicolumn{6}{c}{\textbf{Target text: Blue}}
\\[1pt]
\centering\rotatebox[origin=c]{90}{Top1} &
\borderedimg{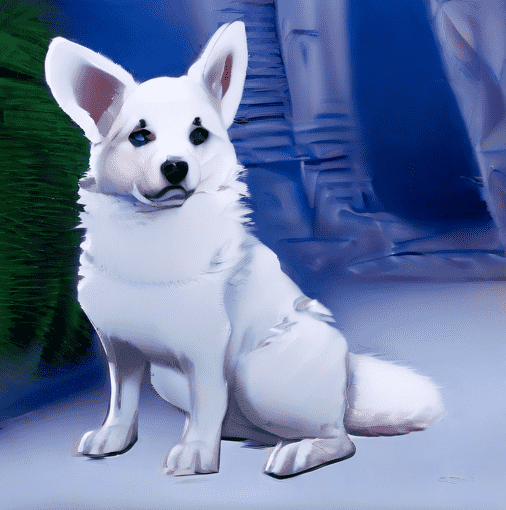} &
\borderedimg{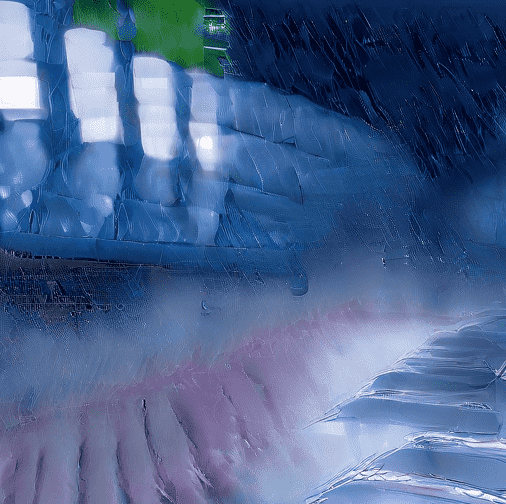} &
\borderedimg{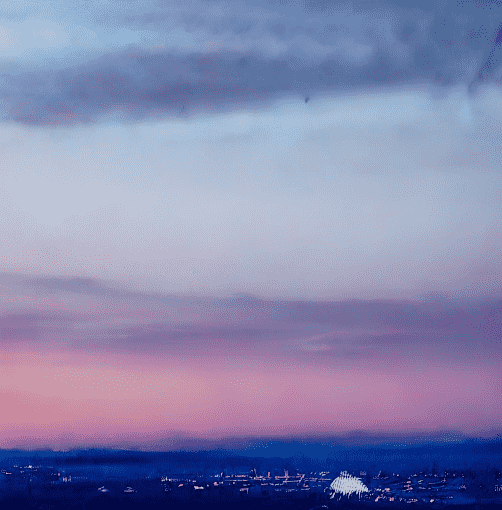} &
\borderedimg{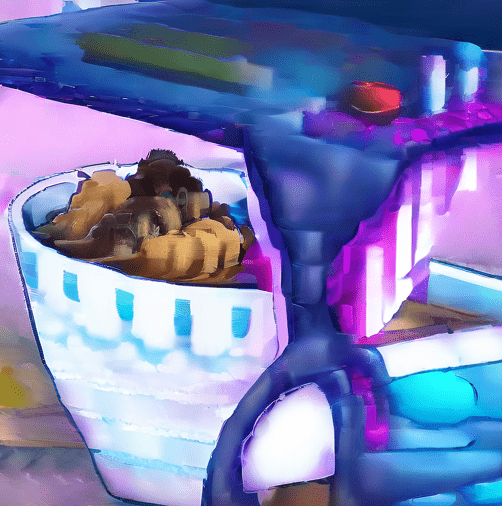} &
\borderedimg{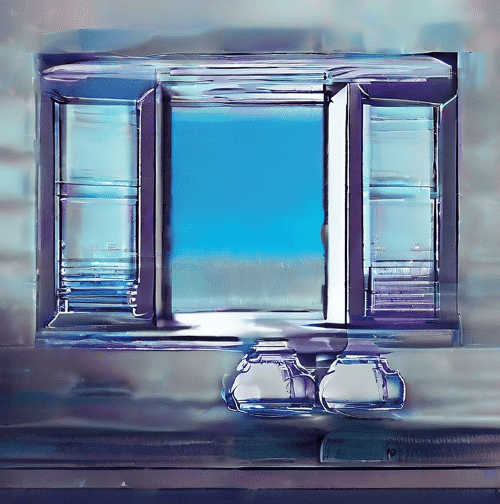}
\\[1pt]
\centering\rotatebox[origin=c]{90}{Ensemble} &
\borderedimg{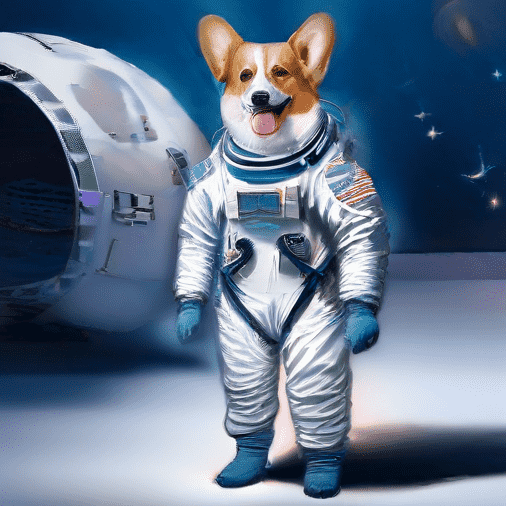} &
\borderedimg{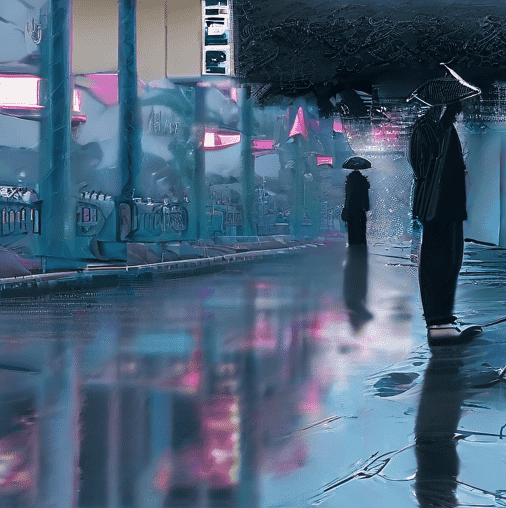} &
\borderedimg{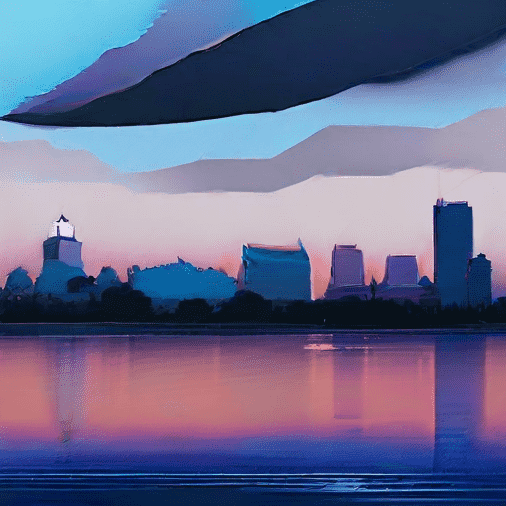} &
\borderedimg{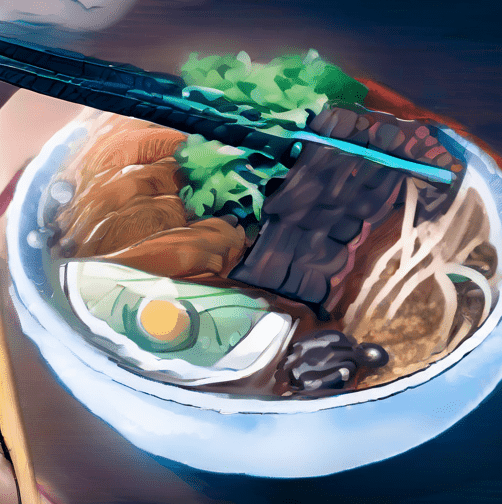} &
\borderedimg{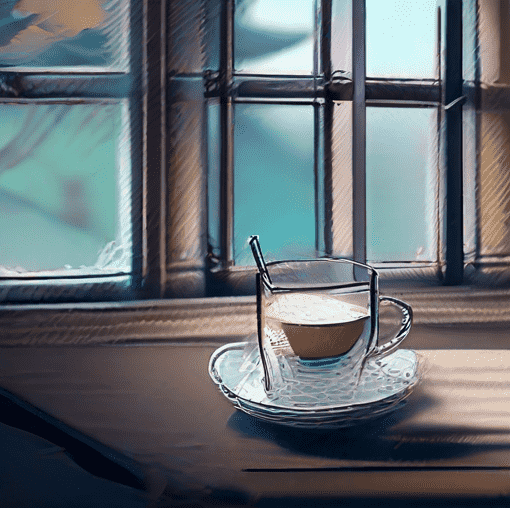}
\\[1pt]
\centering\rotatebox[origin=c]{90}{Random\#1} &
\borderedimg{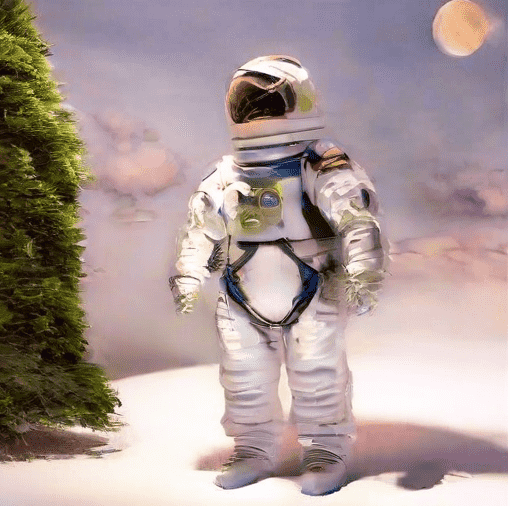} &
\borderedimg{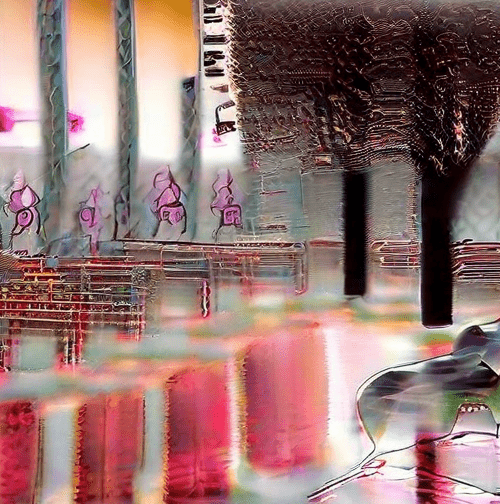} &
\borderedimg{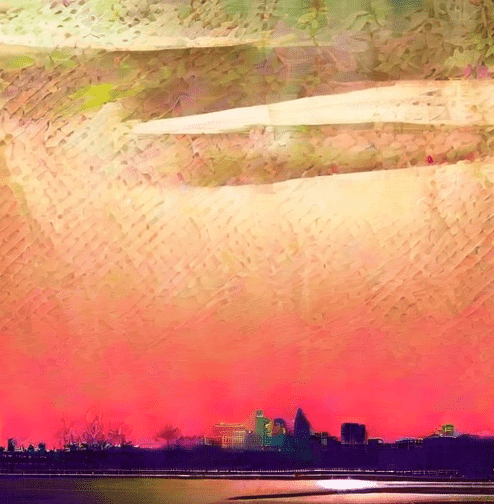} &
\borderedimg{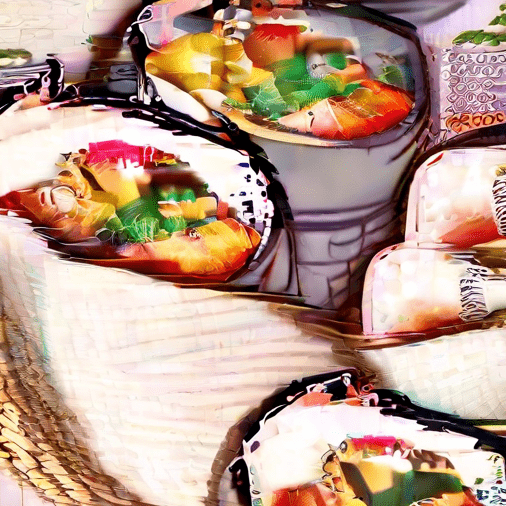} &
\borderedimg{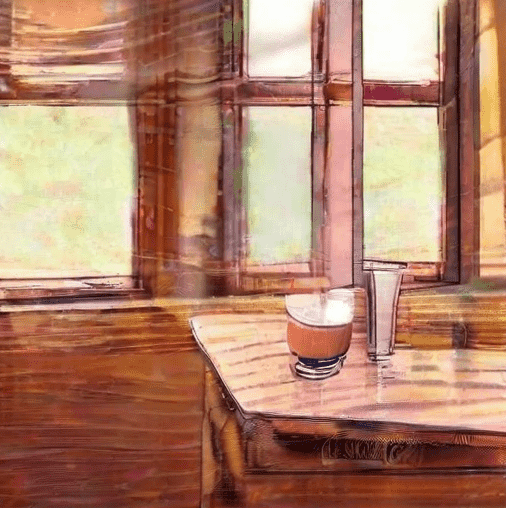}
\\
\midrule
\multicolumn{6}{c}{\textbf{Target text: Yellow}}
\\[1pt]
\centering\rotatebox[origin=c]{90}{Top1} &
\borderedimg{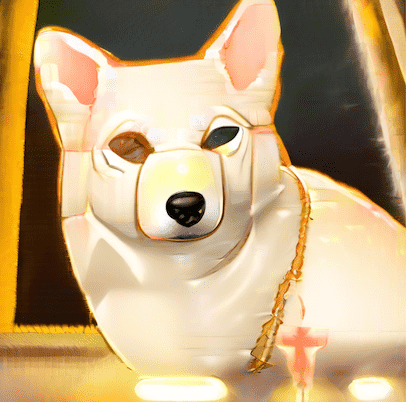} &
\borderedimg{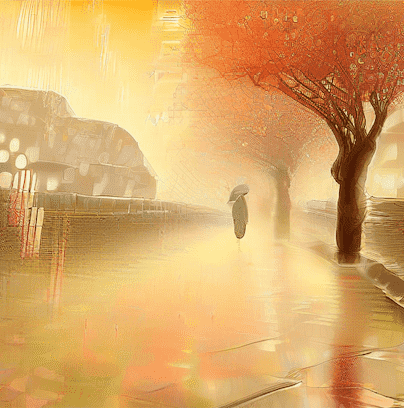} &
\borderedimg{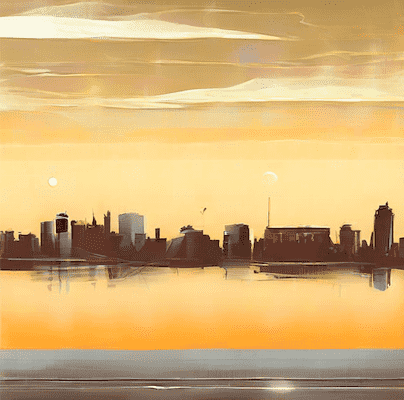} &
\borderedimg{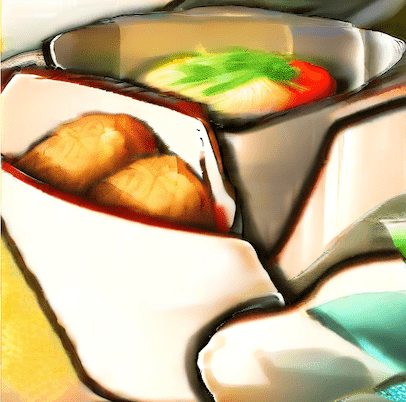} &
\borderedimg{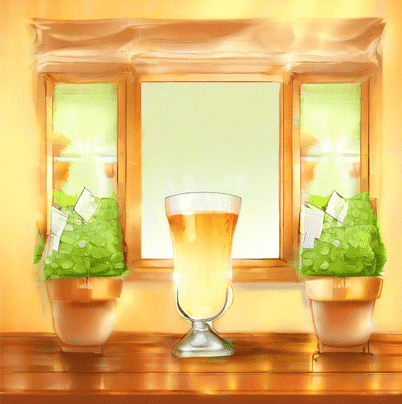}
\\[1pt]
\centering\rotatebox[origin=c]{90}{Ensemble} &
\borderedimg{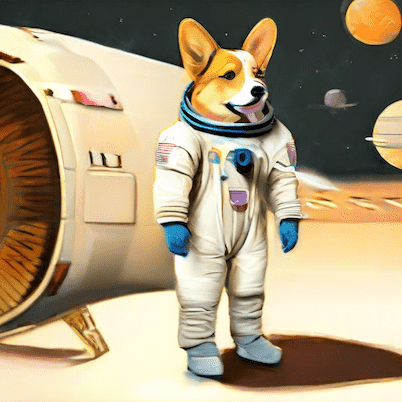} &
\borderedimg{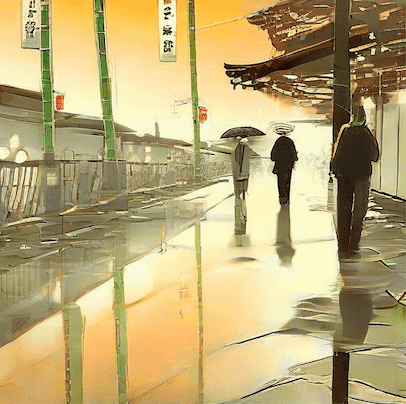} &
\borderedimg{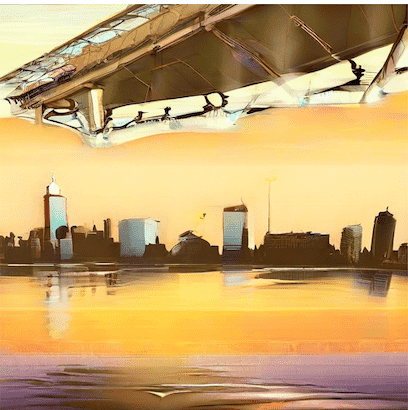} &
\borderedimg{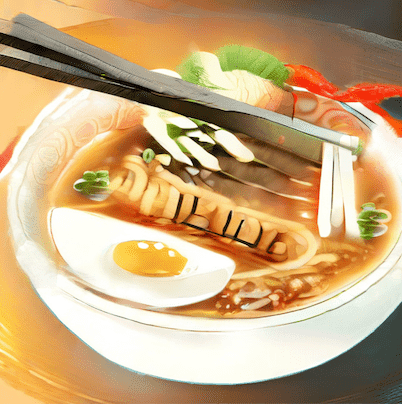} &
\borderedimg{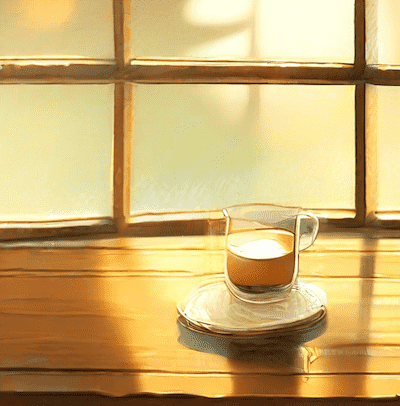}
\\[1pt]
\centering\rotatebox[origin=c]{90}{Random\#2} &
\borderedimg{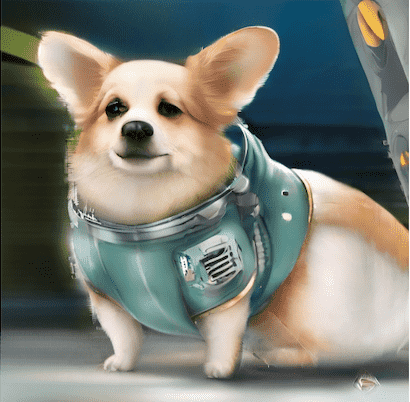} &
\borderedimg{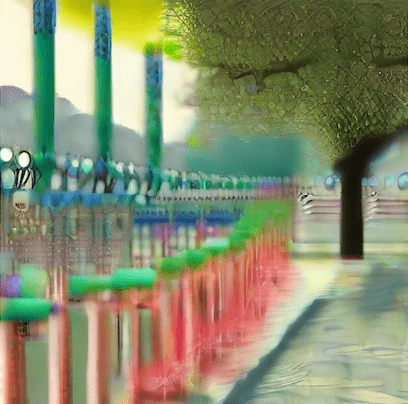} &
\borderedimg{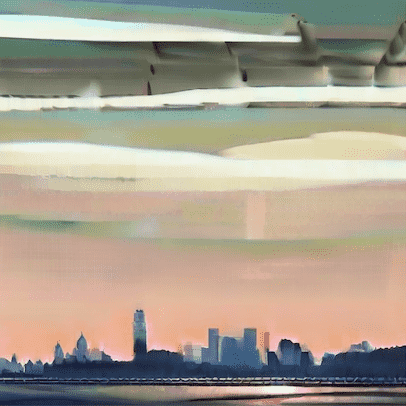} &
\borderedimg{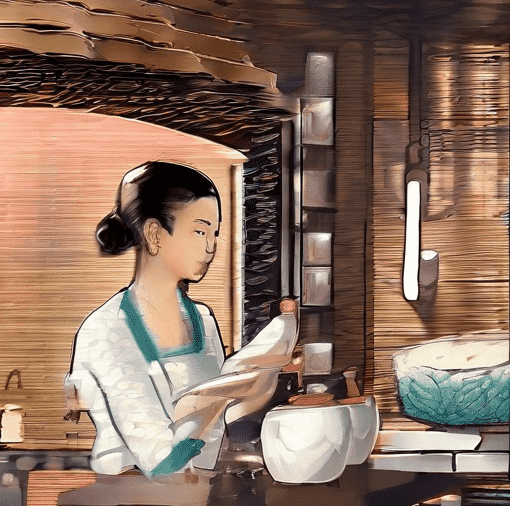} &
\borderedimg{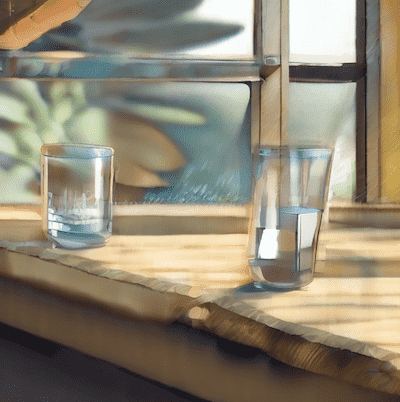}
\\
\bottomrule
\end{tabular}
}
\label{tab:sdxl_trainset_visual}
\end{table*}

\begin{table*}[htb]
\centering
\footnotesize
\renewcommand\arraystretch{0.95}
\setlength\tabcolsep{2pt}
\caption{\textbf{\methodname on Text-to-Image models (Test Set).} Top-K selected on the training set and evaluated on the test set.}
\resizebox{\textwidth}{!}{
\begin{tabular}{m{1.2cm} m{0.16\textwidth} m{0.16\textwidth} m{0.16\textwidth} m{0.16\textwidth} m{0.16\textwidth}}
\toprule
& \centering\scriptsize\textit{A dog on Mars, red rocky landscape, astronaut helmet\ldots}
& \centering\scriptsize\textit{A giant octopus emerging from the ocean near\ldots}
& \centering\scriptsize\textit{A sailboat cutting through choppy ocean waves under\ldots}
& \centering\scriptsize\textit{A vintage motorcycle parked beside a brick wall\ldots}
& \centering\arraybackslash\scriptsize\textit{A busy Tokyo subway platform during rush hour, commuters\ldots}
\\[1pt]
\centering\rotatebox[origin=c]{90}{Base} &
\borderedimg{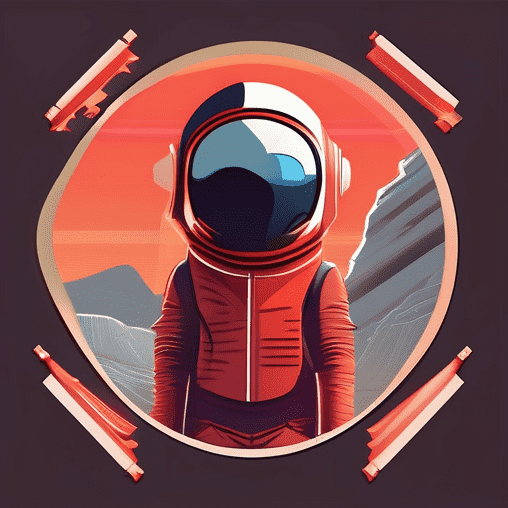} &
\borderedimg{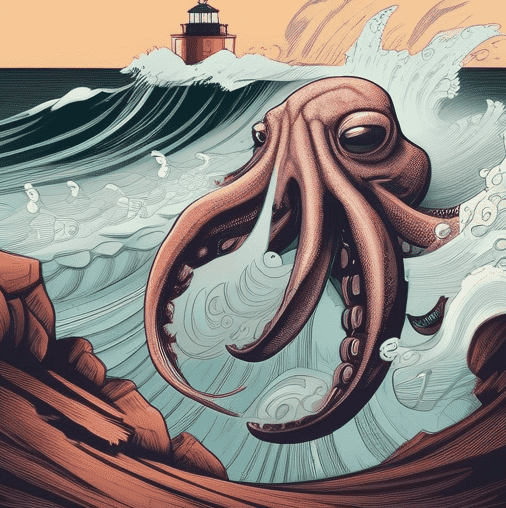} &
\borderedimg{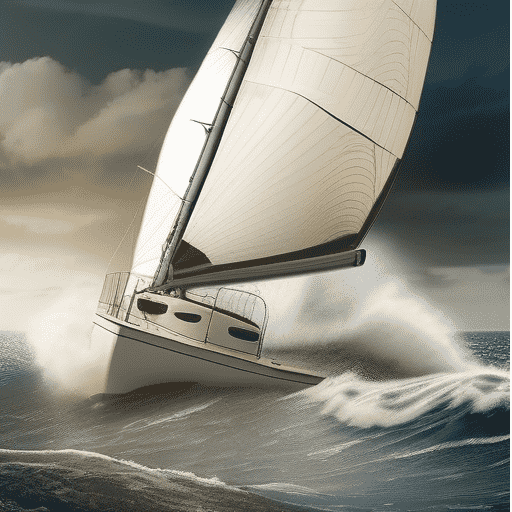} &
\borderedimg{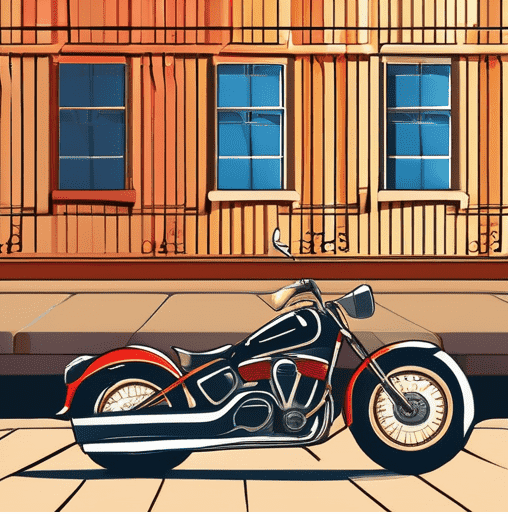} &
\borderedimg{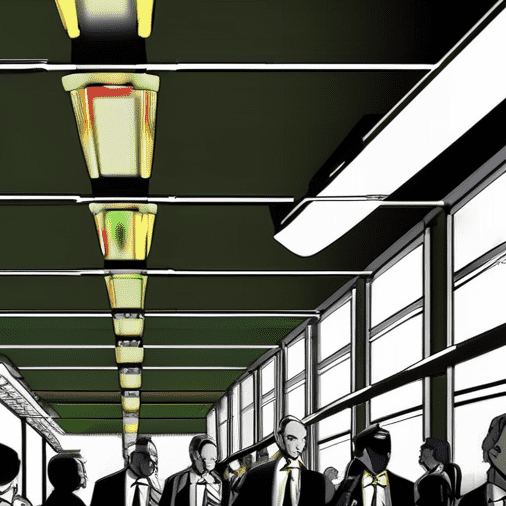}
\\
\midrule
\multicolumn{6}{c}{\textbf{Target text: Blue}}
\\[1pt]
\centering\rotatebox[origin=c]{90}{Top1} &
\borderedimg{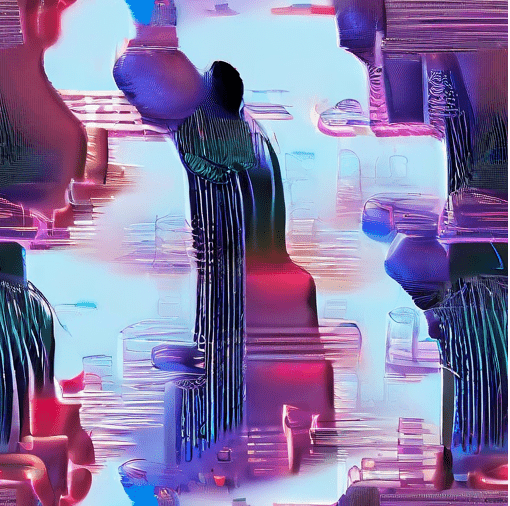} &
\borderedimg{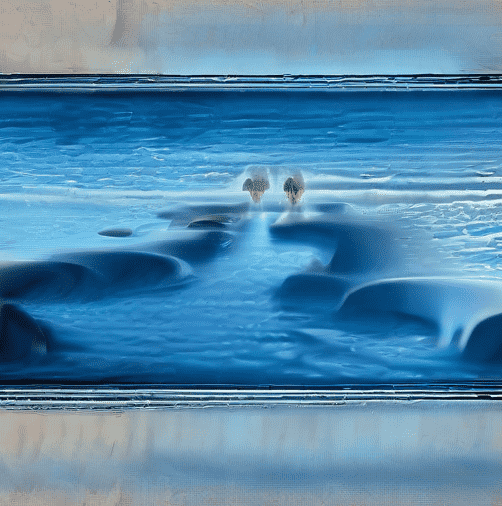} &
\borderedimg{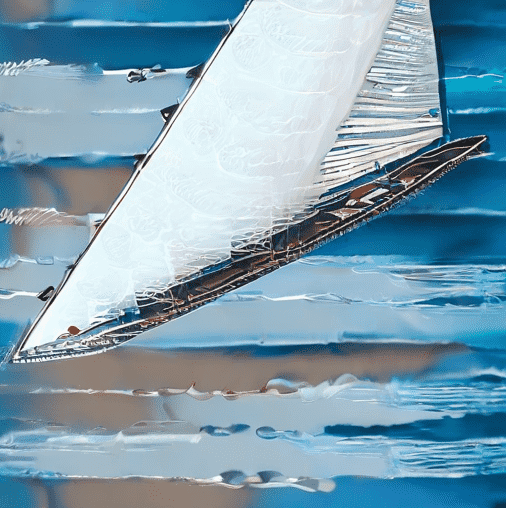} &
\borderedimg{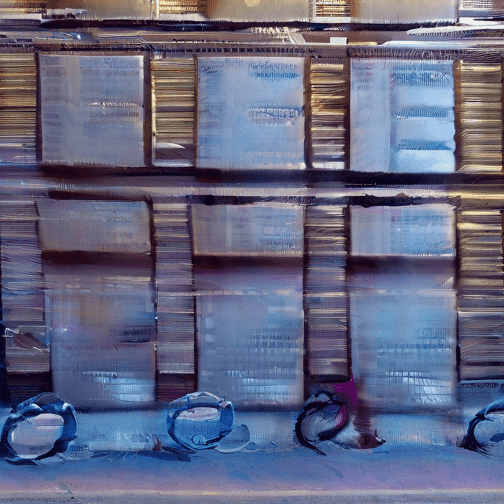} &
\borderedimg{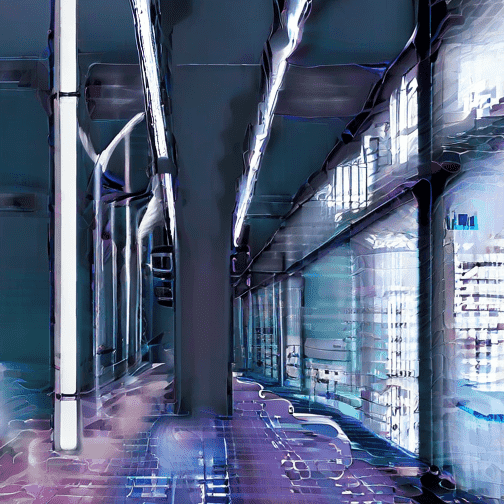}
\\[1pt]
\centering\rotatebox[origin=c]{90}{Ensemble} &
\borderedimg{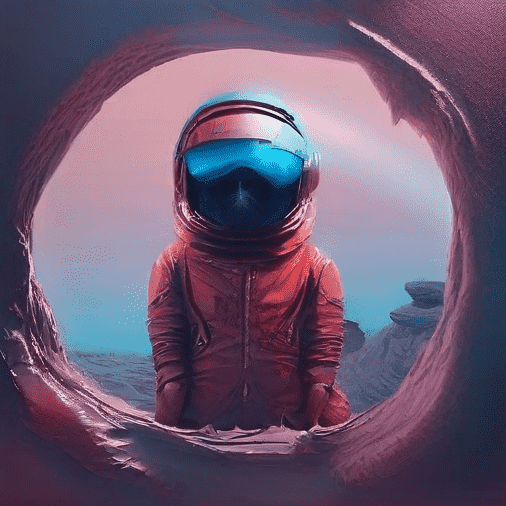} &
\borderedimg{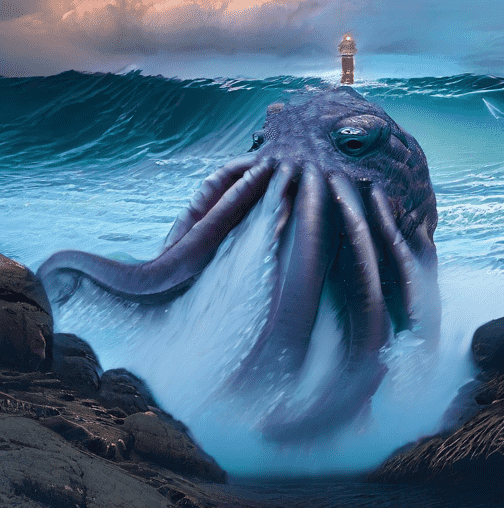} &
\borderedimg{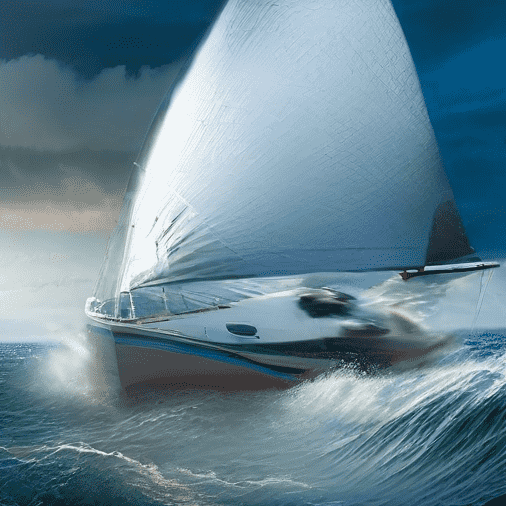} &
\borderedimg{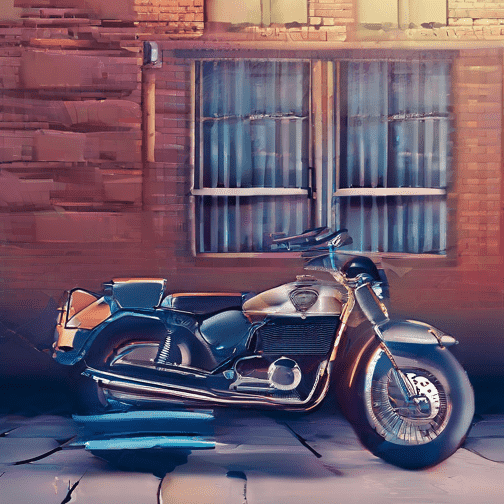} &
\borderedimg{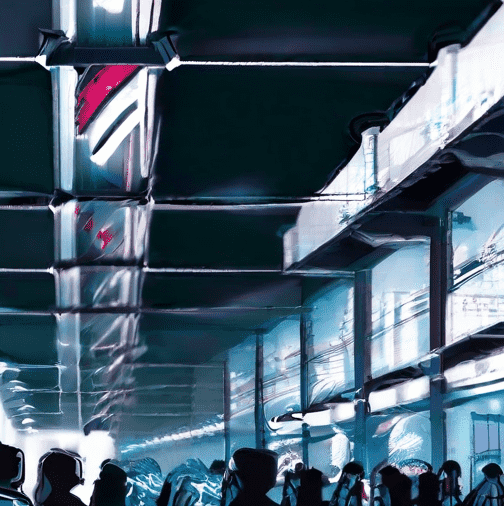}
\\[1pt]
\centering\rotatebox[origin=c]{90}{Random\#1} &
\borderedimg{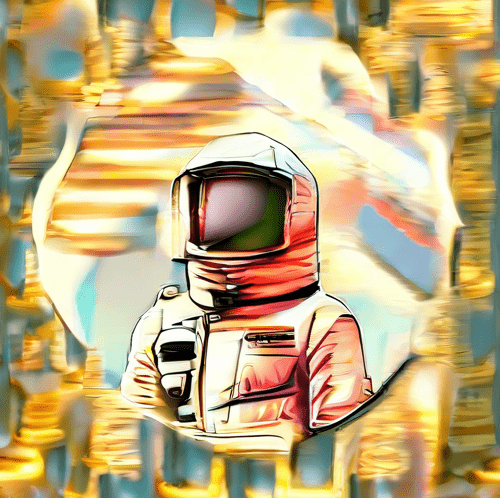} &
\borderedimg{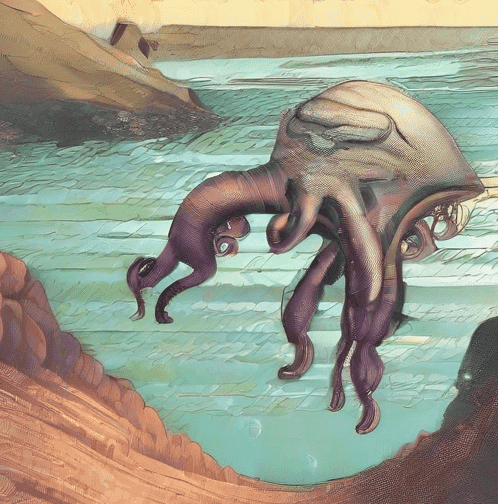} &
\borderedimg{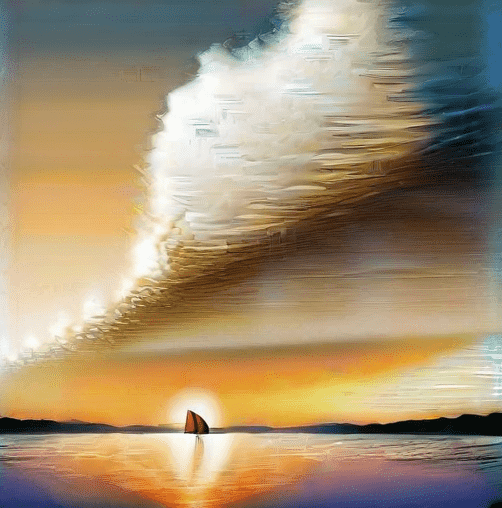} &
\borderedimg{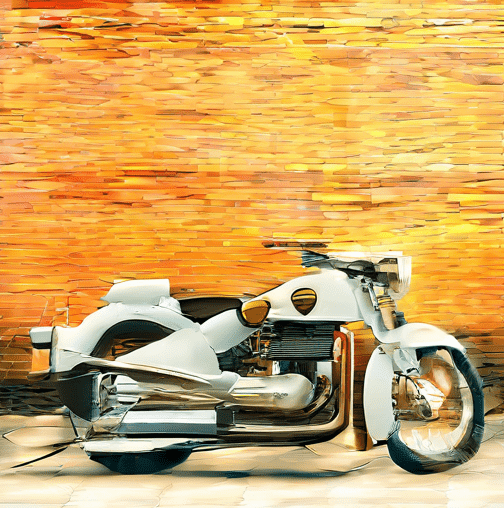} &
\borderedimg{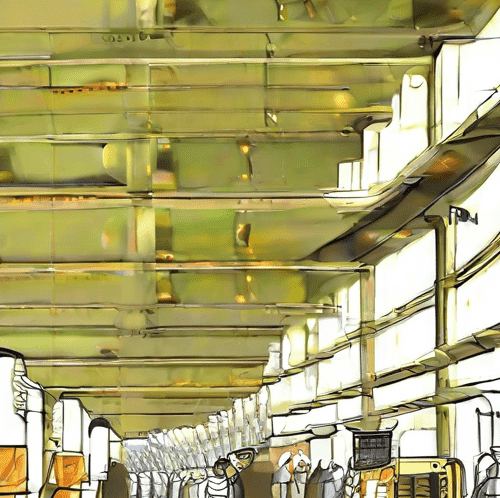}
\\
\midrule
\multicolumn{6}{c}{\textbf{Target text: Yellow}}
\\[1pt]
\centering\rotatebox[origin=c]{90}{Top1} &
\borderedimg{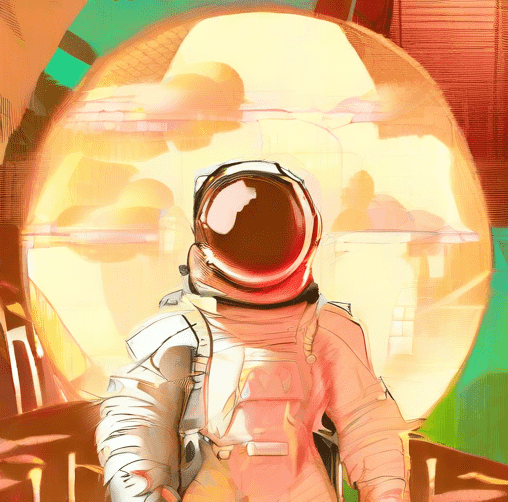} &
\borderedimg{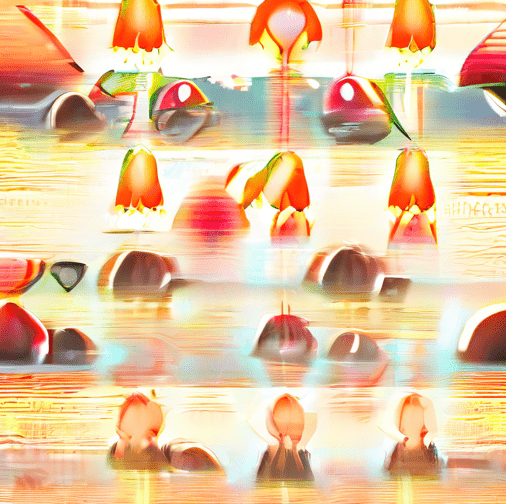} &
\borderedimg{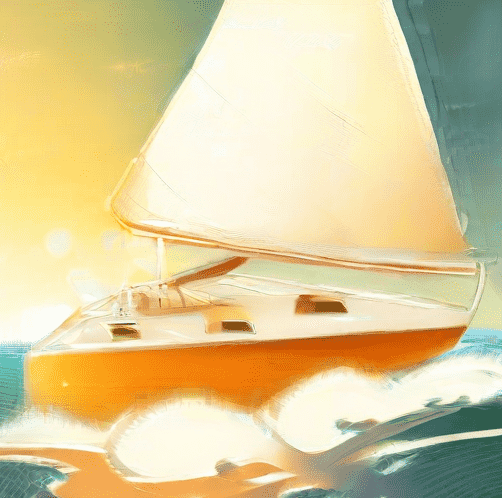} &
\borderedimg{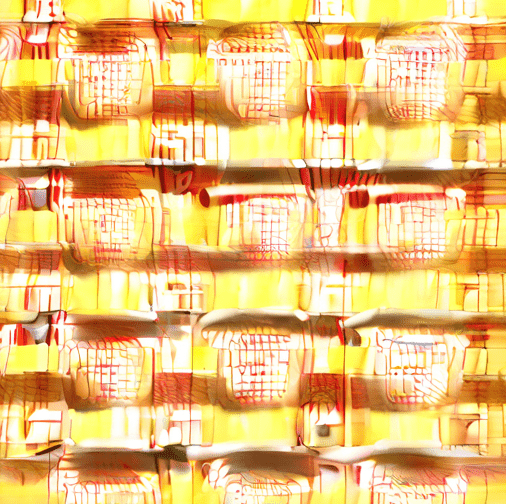} &
\borderedimg{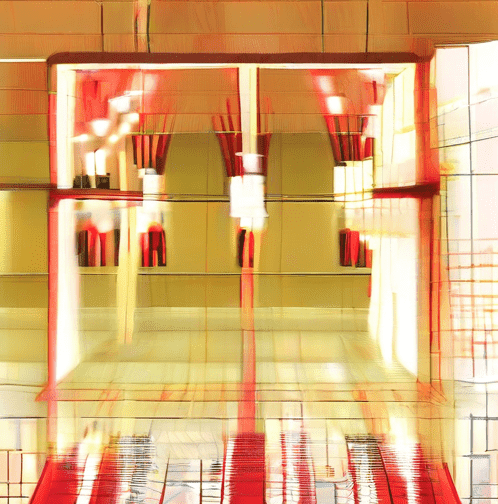}
\\[1pt]
\centering\rotatebox[origin=c]{90}{Ensemble} &
\borderedimg{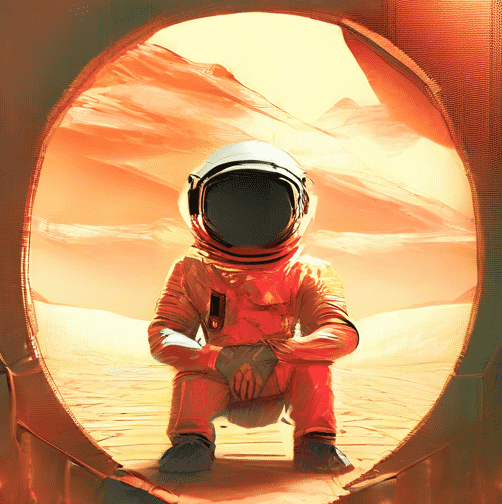} &
\borderedimg{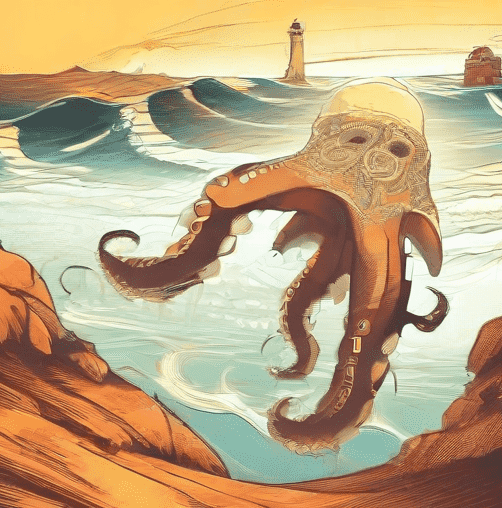} &
\borderedimg{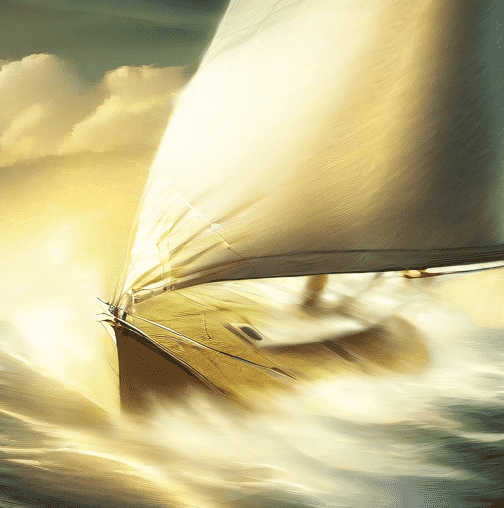} &
\borderedimg{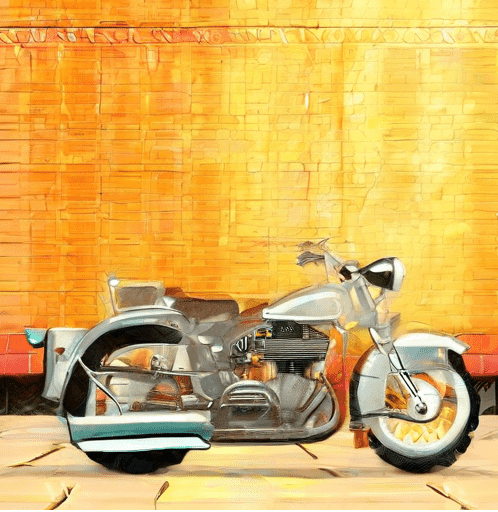} &
\borderedimg{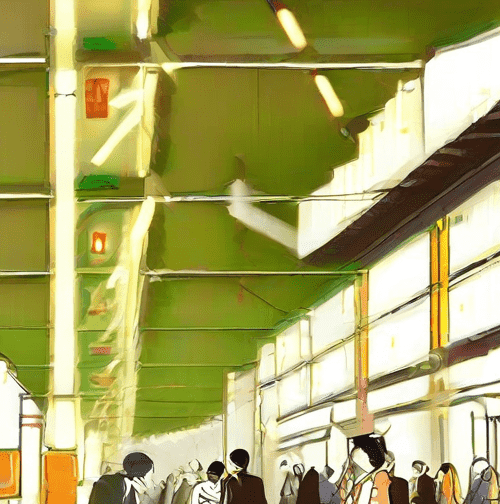}
\\[1pt]
\centering\rotatebox[origin=c]{90}{Random\#2} &
\borderedimg{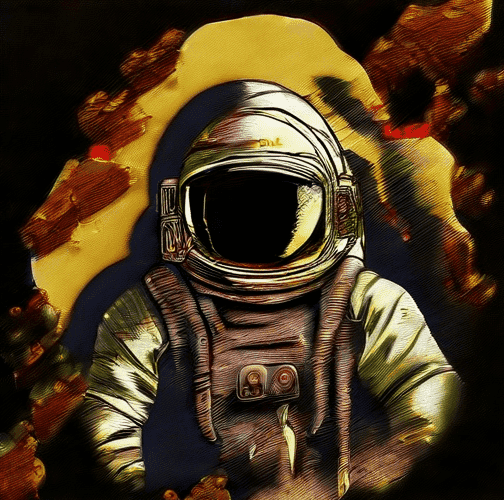} &
\borderedimg{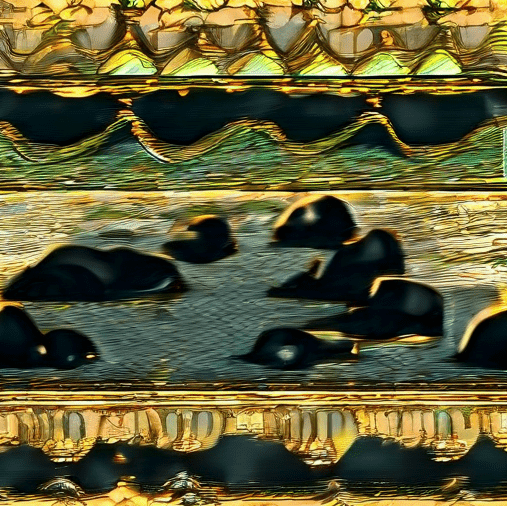} &
\borderedimg{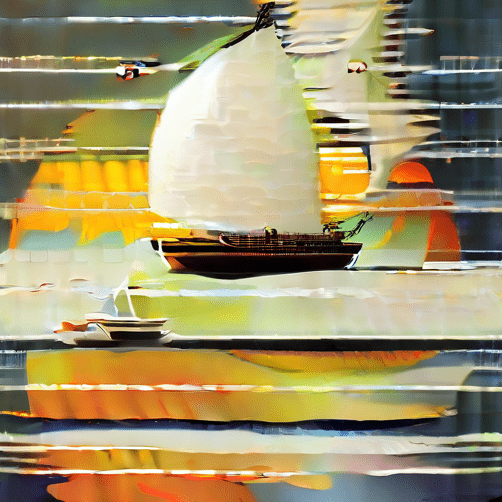} &
\borderedimg{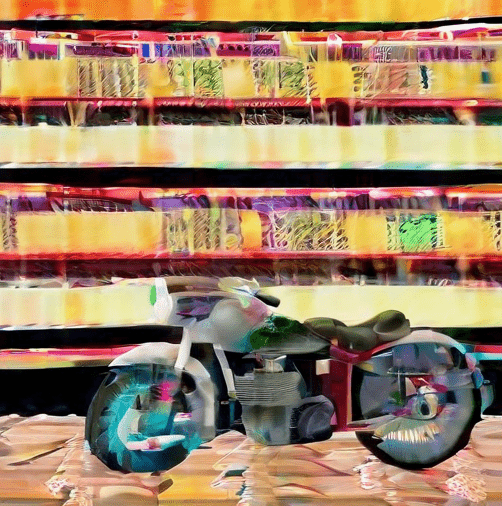} &
\borderedimg{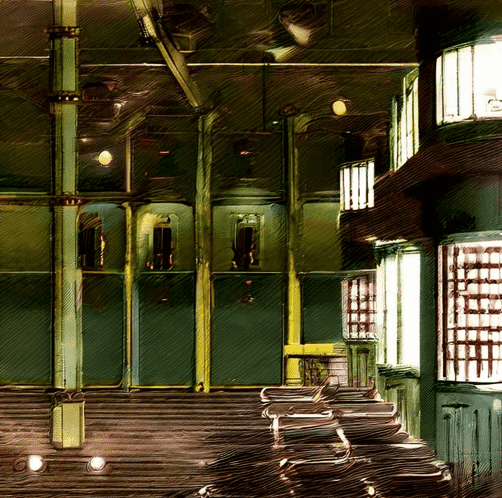}
\\
\bottomrule
\end{tabular}
}
\label{tab:sdxl_testset_visual}
\end{table*}

\end{document}